\def \X {\mathbf{X}}
\def \y {\mathbf{y}}
\def \w {\mathbf w}
\def \x {\mathbf x}
\def \u {\mathbf u}
\def \z {\mathbf z}
\def \Q {\mathcal Q}
\def \a {\mathbf{a}}
\def \b {\mathbf{b}}
\def \R {\mathbb{R}}
\def \balpha {\boldsymbol{\alpha}}
\def \bbeta {\boldsymbol{\beta}}
\begin{document}

\title{An Efficient Primal Dual Prox Method for\\ Non-Smooth Optimization}

\author{\name   Tianbao Yang \email yangtia1@msu.edu \\
       \name Mehrdad Mahdavi \email mahdavim@msu.edu\\
       \name Rong Jin \email rongjin@cse.msu.edu \\
       \addr Department of Computer Science and Engineering\\
       Michigan State University, East Lansing, MI, 48824, USA\\
      \name Shenghuo Zhu \email zsh@sv.nec-labs.com\\
      \addr NEC Labs America, Cupertino, CA, 95014, USA
}

\editor{?}

% The \icmltitle you define below is probably too long as a header.
% Therefore, a short form for the running title is supplied here
\maketitle

\begin{abstract}

We study the non-smooth optimization problems in machine learning, where both the loss function and the regularizer are non-smooth functions. Previous studies on efficient empirical loss minimization assume either a smooth loss function or a strongly convex regularizer, making them unsuitable for non-smooth optimization. We develop a simple yet efficient method for a family of non-smooth optimization problems where the dual form of the loss function is bilinear in primal and dual variables. We cast a non-smooth optimization problem into a minimax optimization problem, and develop a primal dual prox method that solves the minimax optimization problem at a rate of $O(1/T)$ {assuming that the proximal step can be efficiently solved}, significantly faster than a standard subgradient descent method that has an $O(1/\sqrt{T})$ convergence rate. Our empirical study verifies the efficiency of the proposed method for various non-smooth optimization problems that arise ubiquitously  in machine learning by comparing it to the state-of-the-art first order methods.

\end{abstract}
\keywords{  non-smooth optimization, primal dual method, convergence rate, sparsity, efficiency }
\section{Introduction}
\label{sec:introduction}

Formulating machine learning tasks as a regularized empirical loss minimization problem  makes an intimate connection between machine learning and mathematical  optimization. In regularized empirical loss minimization,  one tries to jointly minimize an empirical loss over training samples plus a regularization term of the model. This formulation includes support vector machine (SVM)~\citep{HastieEtAl2008}, support vector regression~\citep{Smola:2004:TSV:1011935.1011939}, Lasso~\citep{Zhu031-normsupport},  logistic regression,  and ridge regression~\citep{HastieEtAl2008} among many others. Therefore,  optimization methods play a central role in solving machine learning problems and  challenges exist in machine learning applications demand the development of new optimization algorithms. 

Depending on the application at hand,  various types of loss  and regularization functions have been introduced in the literature.  The efficiency of different optimization algorithms crucially depends on the specific structures of the loss and  the regularization functions. Recently, there have been significant  interests on  gradient  descent based methods  due to their simplicity and scalability to large datasets. A well-known example is the Pegasos algorithm~\citep{Shalev-Shwartz:2007:PPE:1273496.1273598} which minimizes the $\ell_2^2$ regularized hinge loss (i.e., SVM) and achieves a convergence rate of $O(1/T)$, where $T$ is the number of iterations, by exploiting the strong convexity of the regularizer. Several other first order algorithms~\citep{Ji2009,10.1109/ICDM.2009.128} are also proposed for smooth loss functions (e.g., squared loss and logistic loss) and non-smooth regularizers (i.e., $\ell_{1,\infty}$ and group lasso). They achieve a convergence rate of $O(1/T^2)$ by exploiting the smoothness of the loss functions. 

In this paper, we focus on a more challenging case where both the loss function and the regularizer are non-smooth, to which we refer as non-smooth optimization. Non-smooth optimization of regularized empirical loss  has found applications in many machine learning problems. Examples of non-smooth loss functions include hinge loss~\citep{citeulike:106699}, generalized hinge loss~\citep{Bartlett:2008:CRO:1390681.1442792}, absolute loss~\citep{HastieEtAl2008}, and $\epsilon$-insensitive loss~\citep{Rosasco:2004:LFS:996933.996940}; examples of non-smooth regularizers include lasso~\citep{Zhu031-normsupport}, group lasso~\citep{citeulike:448082}, sparse group lasso~\citep{DBLP:conf/icml/YangXKL10}, exclusive lasso~\citep{YZhouaistat}, $\ell_{1,\infty}$ regularizer~\citep{Quattoni:2009:EPL:1553374.1553484}, and trace norm regularizer~\citep{Rennie:2005:FMM:1102351.1102441}. 

Although there are already many existing studies on tackling smooth loss functions (e.g., square loss for regression, logistic loss for classification), or smooth regularizers (e.g., $\ell_2^2$ norm),  there  are serious  challenges in developing efficient algorithms for non-smooth optimization. In particular, common tricks, such as smoothing non-smooth objective functions~\citep{Nesterov2005,Nesterov:2005:EGT:1081200.1085585}, can not be applied to non-smooth optimization to improve convergence rate. This is because they require both the loss functions and regularizers be written in the maximization form of bilinear functions, which unfortunately are often violated, as we will  discuss  later. 
%More specifically, when both the loss function and the regularizer are non-smooth, it becomes challenging to design  gradient based algorithms  to achieve a convergence rate better than $O(1/\sqrt{T})$. Having discussed the popularity of  applications with non-smooth regularized empirical loss, devising efficient algorithms seems necessary.
In this work, we focus on optimization problems in machine learning where both the loss function and the regularizer are non-smooth. Our goal is to develop an efficient gradient based algorithm  that has a convergence rate of $O(1/{T})$ for a wide family of non-smooth loss functions and general non-smooth regularizers. 

%\textcolor{red}{A discussion of smoothing technique is needed here (it is only applicable to function with special structures). }

%Deriving efficient algorithms for non-smooth optimization is a very challenging work~\citep{10.1109/ICDM.2009.128}.

%The lack of an efficient algorithm for non-smooth optimization (i.e., both loss function and regularizer are non-smooth) motivates this work.
It is noticeable that according to the information based complexity theory~\citep{Traub:1988:IC:49153}, it is impossible to derive an efficient first order algorithm that generally works for all non-smooth objective functions. As a result, we focus on a  family of non-smooth optimization problems,  where the dual form of the non-smooth loss function is bilinear in both primal and dual variables. Additionally, we show that many non-smooth loss functions have this bilinear dual form. We derive an efficient gradient based method, with a convergence rate of $O(1/T)$, that explicitly updates both the primal and dual variables. The proposed method is referred to as   \textbf{Primal Dual Prox (Pdprox)} method. Besides its capability of dealing with non-smooth optimization, the proposed method is effective in handling the learning problems where additional constraints are introduced for dual variables.

The rest of this paper is organized as follows. Section~\ref{sec:related} reviews the related work on minimizing regularized empirical loss especially the first order methods for large-scale optimization. Section~\ref{sec:not-def} presents some notations and definitions.  Section~\ref{sec:algo} presents the proposed primal dual prox method, its convergence analysis, and several extensions of the proposed method. Section~\ref{sec:exp} presents the empirical studies, and Section~\ref{sec:conc} concludes this work.

\section{Related Work}\label{sec:related}
Our work is closely related to the previous studies on regularized empirical loss minimization. In the following discussion, we mostly focus on non-smooth loss functions and non-smooth regularizers. 

% Although there exist many studies on smooth loss functions (e.g. square loss for regression, logistic loss for classification), and smooth regularizers (e.g. $\ell_2^2$ norm),  there exist still great challenges in developing efficient algorithms for non-smooth optimization. In the following discussion, we focus on non-smooth loss functions and non-smooth regularizers. 

\paragraph{Non-smooth loss functions} Hinge loss is probably the most commonly used non-smooth loss function for classification. It is closely related to the max-margin criterion. A number of algorithms have been proposed to minimize the $\ell_2^2$ regularized hinge loss~\citep{platt98,citeulike:530839,Joachims:2006:TLS:1150402.1150429,Hsieh:2008:DCD:1390156.1390208,Shalev-Shwartz:2007:PPE:1273496.1273598}, and the $\ell_1$ regularized hinge loss~\citep{DBLP:conf/sdm/CaiSCLG10,Zhu031-normsupport,Fung02afeature}. Besides the hinge loss, recently a generalized hinge loss function~\citep{Bartlett:2008:CRO:1390681.1442792} has been proposed for cost sensitive learning. For regression, square loss is commonly used due to its smoothness.  However, non-smooth  loss functions such as absolute loss~\citep{HastieEtAl2008} and $\epsilon$-insensitive loss~\citep{Rosasco:2004:LFS:996933.996940}  are useful for robust regression. The Bayes optimal predictor of  square loss is the mean of the predictive distribution, while the Bayes optimal predictor  of absolute loss is the median of the predictive distribution. Therefore absolute loss is more robust for long-tailed error distributions and outliers~\citep{HastieEtAl2008}. \citep{Rosasco:2004:LFS:996933.996940} also proved that the estimation error bound for absolute loss and $\epsilon$-insensitive loss converges faster than that of square loss. Non-smooth piecewise linear loss function  has been used  in quantile regression~\citep{Koenker2005,tilmann}. Unlike the absolute loss, the piecewise linear loss function can model non-symmetric error in reality. 

\paragraph{Non-smooth regularizers}
Besides the simple non-smooth regularizers such as $\ell_1$, $\ell_2$,  and $\ell_\infty$ norms~\citep{Duchi:2009:EOB:1577069.1755882}, many other non-smooth regularizers have been employed in  machine learning tasks. \citep{citeulike:448082} introduced group lasso for selecting important explanatory factors in group manner.  The $\ell_{1,\infty}$ norm regularizer has been used for multi-task learning~\citep{journals/ml/ArgyriouEP08}. In addition, several recent works~\citep{DBLP:conf/ijcai/HouNYW11,citeulike:9315911,Liu:2009:MFL:1795114.1795154}  considered mixed $\ell_{2,1}$ regularizer for feature selection. \citep{YZhouaistat} introduced exclusive lasso for multi-task feature selection to model the scenario where variables within a single group compete with each other. Trace norm regularizer is another non-smooth regularizer, which has found applications in matrix completion~\citep{citeulike:9621053,DBLP:journals/corr/abs-0805-4471}, matrix factorization~\citep{Rennie:2005:FMM:1102351.1102441,citeulike:3224462}, and multi-task learning~\citep{journals/ml/ArgyriouEP08,Ji2009}.  The optimization algorithms presented in these works are usually limited: either  the convergence rate is not guaranteed~\citep{journals/ml/ArgyriouEP08,citeulike:9621053,DBLP:conf/ijcai/HouNYW11,citeulike:9315911,Rennie:2005:FMM:1102351.1102441,citeulike:3224462} or the loss functions are assumed to be smooth (e.g., the square loss or the logistic loss)~\citep{Liu:2009:MFL:1795114.1795154,Ji2009}. Despite  the significant efforts in developing algorithms  for minimizing regularized empirical losses,  it remains a challenge to design a  first order algorithm that is able to efficiently solve non-smooth optimization problems at a rate of $O(1/T)$ when both the loss function and the regularizer are non-smooth.

\paragraph{Gradient based optimization}Our work is closely related to (sub)gradient based optimization methods. The convergence rate of gradient based methods usually depends on the properties of the objective function to be optimized. When the objective function is strongly convex and smooth, it is well known that  gradient descent methods can achieve a geometric convergence rate~\citep{citeulike:163662}. When the objective function is  smooth but not strongly convex, the optimal convergence rate of a gradient descent method is  $O(1/T^2)$, and is achieved by the Nesterov's methods~\citep{RePEc:cor:louvco:2007076}. For  the objective function which is strongly convex but not smooth, the convergence rate becomes $O(1/T)$~\citep{Shalev-Shwartz:2007:PPE:1273496.1273598}. For general non-smooth objective functions, the optimal rate of any first order method is $O(1/\sqrt{T})$. Although it is not improvable in general, recent studies are able to improve this rate to $O(1/T)$ by exploring the special structure of the objective function~\citep{Nesterov2005,Nesterov:2005:EGT:1081200.1085585}. In addition, several methods are developed for composite optimization, where the objective function is written as a sum of a smooth and a non-smooth function~\citep{Lan08,RePEc:cor:louvco:2007076,2010arXiv1008.5204L}. Recently, these optimization techniques have been successfully applied to various machine learning problems, such as SVM~\citep{DBLP:journals/corr/abs-1008-4000}, general regularized empirical loss minimization~\citep{Duchi:2009:EOB:1577069.1755882,NIPS2009_0997},  trace norm minimization~\citep{Ji2009}, and multi-task sparse learning~\citep{10.1109/ICDM.2009.128}. Despite these efforts, one major limitation of the existing (sub)gradient based algorithms is that in order to achieve a convergence rate better than $O(1/\sqrt{T})$, they have to assume that the loss function is smooth or the regularizer is strongly convex, making them unsuitable for non-smooth optimization.

\paragraph{Convex-concave optimization}The present work is also related to convex-concave minimization. \citet{pau08} and \citet{Nemirovski2005} developed prox methods that have a convergence rate of $O(1/T)$, provided the gradients are Lipschitz continuous and have been applied to machine learning problems~\citep{Sun:2009:NIPS}. In contrast, our method achieves a rate of $O(1/T)$ without requiring the whole gradient but part of the gradient to be Lipschitz continuous.  Several other primal-dual algorithms have been developed for regularized empirical loss minimization that update both primal and dual variables. \citep{citeulike:7217768} proposed a primal-dual method based on gradient descent, which only achieves a rate of $O(1/\sqrt{T})$. It was generalized in~\citep{citeulike:10247176}, which shares the similar spirit of the proposed algorithm. However, the explicit convergence rate was not established even though the convergence is proved. \citep{mosci10nips} presented a primal-dual algorithm for group sparse regularization, which updates the primal variable by a prox method and the dual variable by a Newton's method. In contrast, the proposed algorithm is a first order method that does not require computing the Hessian matrix as the Newton's method does, and is therefore more scalable to large datasets. \citep{combettes:hal-00643381,arxivradu2012} proposed  primal-dual splitting algorithms for finding zeros of maximal monotone operators of special types. \citep{Lan:2011:PFM:1922499.1922500} considered the  primal-dual convex formulations for general cone programming and apply Nesterov's optimal first order method~\citep{RePEc:cor:louvco:2007076}, Nesterov's smoothing technique~\citep{Nesterov2005}, and Nemirovski's prox method~\citep{Nemirovski2005}. \citet{Nesterov:2005:EGT:1081200.1085585} proposed a primal dual gradient method for a special class of structured non-smooth optimization problems  by exploring an excessive gap technique. 

\paragraph{Optimizing non-smooth functions}We note that Nesterov's smoothing technique ~\citep{Nesterov2005} and excessive gap technique~\citep{Nesterov:2005:EGT:1081200.1085585} can be applied to non-smooth optimization and both achieve $O(1/T)$ convergence rate for a special class of non-smooth optimization problems. However, the limitation of these approaches is that they require  all the non-smooth terms (i.e., the loss and the regularizer) to be written as an explicit max structure that consists of a bilinear function in primal and dual variables, thus limits their applications to many machine learning problems. In addition, Nesterov's algorithms need to solve  additional maximizations problem at each iteration. In contrast, the proposed algorithm only requires mild condition on the non-smooth loss functions (section~\ref{sec:algo}), and allows for any commonly used non-smooth regularizers, without having to solve an additional optimization problem at each iteration.  Compared to Nesterov's algorithms, the proposed algorithm is applicable to a large class of non-smooth optimization problems, is easier to implement, its convergence analysis is much simpler, and its empirical performance is usually comparably favorable. 
Finally we noticed that, as we are preparing our manuscript, a related work~\citep{Chambolle:2011:FPA:1968993.1969036} has recently been published in the Journal of Mathematical Imaging and Vision that shares a similar idea as this work. {Both works maintain and update the primal and dual variables for solving a non-smooth optimization problem, and achieve the same convergence rate (i.e., $O(1/T)$). However, our work distinguishes from~\citep{Chambolle:2011:FPA:1968993.1969036} in following aspects:
(i) We propose and analyze two primal dual prox methods: one gives an extra gradient updating to dual variables and the other gives an extra gradient updating to primal variables. Depending on the nature of applications, one method may be more efficient than the others; %In contrast, ~\citep{Chambolle:2011:FPA:1968993.1969036} only considers the algorithm that gives an extra update to primal variables~\footnote{The differences between our algorithms and the algorithm in~\citep{Chambolle:2011:FPA:1968993.1969036} are also discussed in the appendix.}.
%(ii) The convergence result stated in~\citep{Chambolle:2011:FPA:1968993.1969036} (i.e., Theorem 1) is restricted to finite space and therefore cannot be applied directly to kernel learning. In contrast, we extend our algorithm to Reproduce Kernel Hilbert Space (RKHS) in Section~\ref{sec:ext}.
(ii) In Section~\ref{sec:imp}, we discuss how to efficiently solve the interim projection problems for updating both primal variable and  dual variable, a critical issue for making the proposed algorithm practically efficient. In contrast, \citep{Chambolle:2011:FPA:1968993.1969036} simply assumes that the interim projection problems can be solved efficiently;}
(iii) We focus our analysis and empirical studies on the optimization problems that are closely related to machine learning.  We demonstrate the effectiveness of the proposed algorithm on various classification, regression, and matrix completion tasks with non-smooth loss functions and non-smooth regularizers;  (iv) We also conduct analysis and experiments on the convergence of the proposed methods when dealing with the $\ell_1$ constraint on the dual variable, an approach that is commonly used in robust optimization, and observe that the proposed methods converge much faster when the  bound of the $\ell_1$ constraint is small  and  the obtained solution is more robust in terms of prediction in the presence of noise in labels. In contrast, the study~\citep{Chambolle:2011:FPA:1968993.1969036} only considers the application in image problems.

{ We also note that the proposed algorithm is closely related to proximal point algorithm~\citep{citeulike:9472207} as shown in~\citep{heyu12}, and many variants including the modified Arrow-Hurwicz method~\citep{popov1980}, the Doughlas-Rachford  (DR) splitting algorithm~\citep{1977776}, the alternating method of multipliers (ADMM)~\citep{Boyd:2011:DOS:2185815.2185816}, the forward-backward splitting algorithm~\citep{IOPORT.05500734}, the FISTA algorithm~\citep{Beck:2009:FIS:1658360.1658364}. For a detailed comparison with some of these algorithms, one can refer  to~\citep{Chambolle:2011:FPA:1968993.1969036}.}

\begin{figure}[t]
\centering
\subfigure[classification]{\includegraphics[scale=0.25]{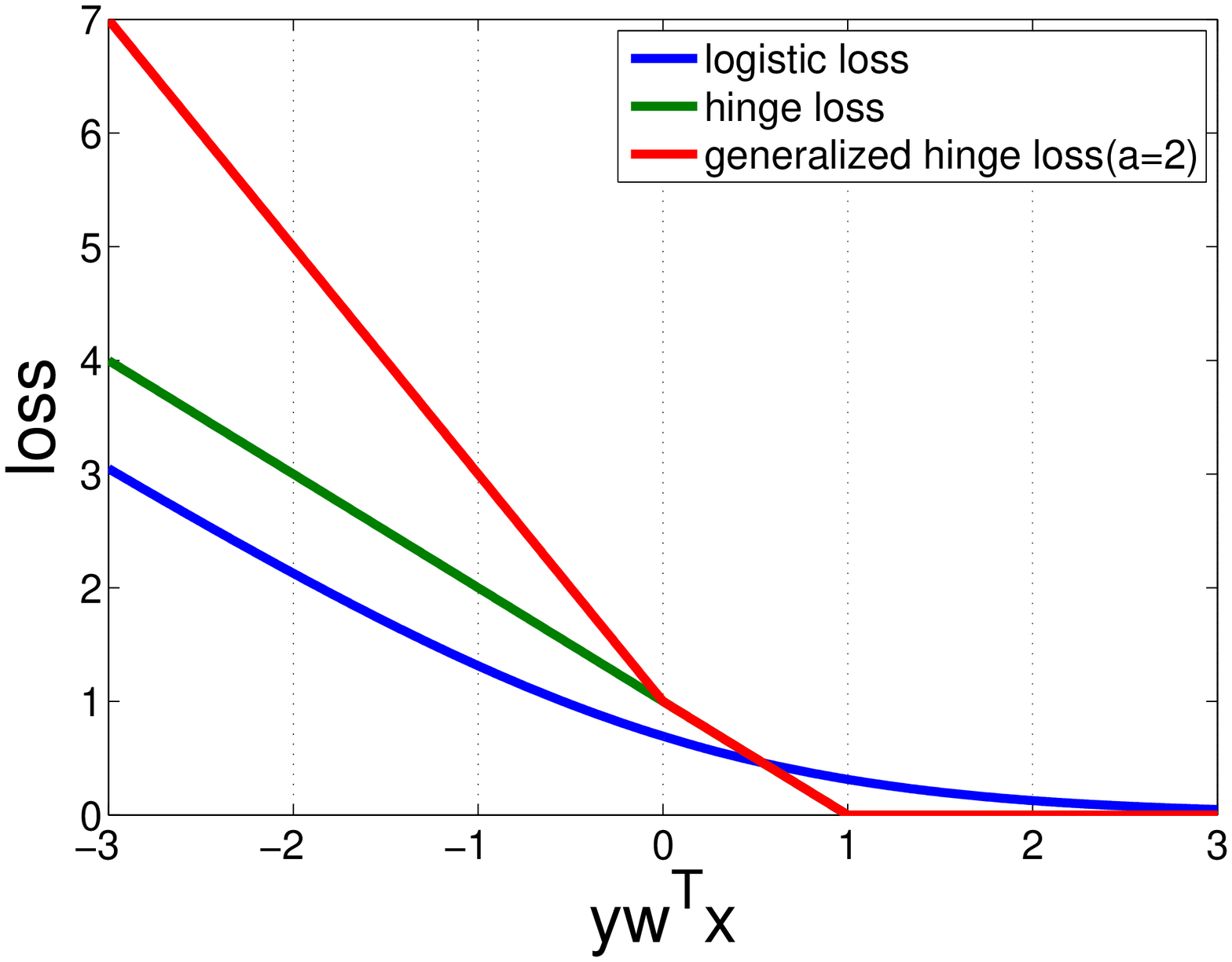}}\hspace*{-0.1in}
\subfigure[regression]{\includegraphics[scale=0.25]{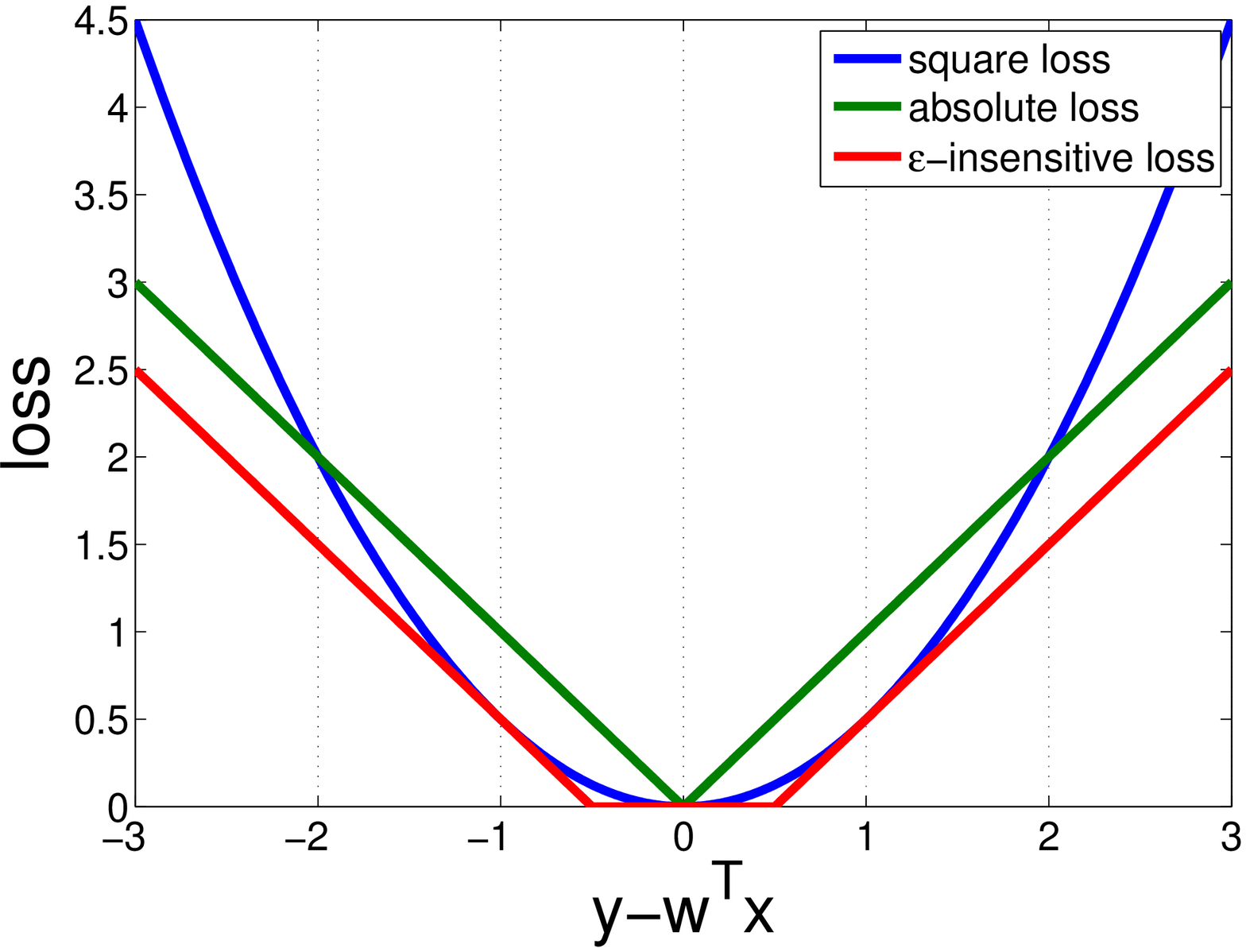}}\hspace*{-0.1in}\\
\caption{Loss functions}\label{fig:loss}
\end{figure}

 %Finally, we note that serveral first order stochastic methods~\citep{Juditsky:1124237, Shalev-Shwartz:2009:SML:1553374.1553493} have been proposed for regularized empirical loss minimization,  and Newton-type methods~\citep{Yu:2010:QAN:1756006.1756045} are proposed for non-smooth optimization. We did not include them for comparison since we focus on first order batch mode optimization. 

% we focus on deterministic approaches for optimization algorithms.
\section{Notations and Definitions}
\label{sec:not-def}
 In this section we provide the basic setup, some preliminary definitions and notations used throughout this paper.

We denote by $[n]$ the set of integers $\{1,\cdots, n\}$.  We denote by $(\x_i, y_i), i \in [n]$ the training examples, where $\x_i\in\mathcal X\subseteq\mathbb R^d$ and $y_i$ is the assigned class label, which is discrete for classification and continuous for regression. {We assume $\|\x_i\|_2\leq R, \; \forall i \in [n]$.} We denote by $\mathbf X=(\x_1,\cdots, \x_n)^{\top}$ and $\mathbf y=(y_1,\cdots, y_n)^{\top}$.   Let $\w\in\mathbb R^d$ denote the linear hypothesis, $\ell(\w; \x, y)$ denote a loss of prediction made by the hypothesis $\w$ on example $(\x, y)$, which is a convex function in terms of $\w$. Examples of convex loss function are hinge loss $\ell(\w; \x, y)= \max(1-y\w^{\top}\x, 0)$, and absolute loss $\ell(\w; \x, y) =|\w^{\top}\x - y|$. To characterize a function, we introduce the following definitions 
\begin{definition}
A function $\ell(\z): \mathcal Z\rightarrow\mathbb R$ is a $G$-Lipschitz continuous if 
\begin{align*}
|\ell(\z_1)-\ell(\z_2)|\leq G\|\z_1-\z_2\|_2, \forall \z_1,\z_2\in\mathcal Z.
\end{align*}
\end{definition}
\begin{definition}
A function $\ell(\z):\mathcal Z\rightarrow \mathbb R$ is a $\rho$-smooth function if its gradient is $\rho$-Lipschitz continuous 
\begin{align*}
\|\nabla \ell(\z_1) - \nabla \ell(\z_2)\|_2\leq \rho\|\z_1-\z_2\|_2, \forall \z_1,\z_2\in\mathcal Z.
\end{align*}
\end{definition}
A function is non-smooth if either its gradient is not well defined or  its gradient is not Lipschtiz continuous.  Examples of smooth loss functions are logistic loss $\ell(\w; \x, y) = \log(1+\exp(-y\w^{\top}\x))$, square loss $\ell(\w; \x, y) = \frac{1}{2}(\w^{\top}\x-y)^2$, and examples of non-smooth loss functions are hinge loss, and absolute loss. The difference between logistic loss and hinge loss,  square loss and absolute loss can be seen in Figure~\ref{fig:loss}. Examples of non-smooth regularizer include $R(\w)= \|\w\|_1$, i.e. $\ell_1$ norm, $R(\w)= \|\w\|_{\infty}$, i.e. $\ell_\infty$ norm. More examples can be found in section~\ref{sec:non-smooth}.

In this paper, we aim to solve the following optimization problem, which occurs in many machine learning problems, 
\begin{align}\label{eqn:mcp}
\min_{\w\in \mathbb R^d}\quad \mathcal L(\w)= \frac{1}{n}\sum_{i=1}^n\ell(\w; \x_i, y_i)  + \lambda R(\w),
\end{align}
where $\ell(\w; \x, y)$ is a non-smooth loss function,  $R(\w)$ is a non-smooth regularizer on $\w$, and $\lambda$ is a regularization parameter. 

%\textbf{Remark:} Note that for the ease  of presentation, we denote parameters in a vector form, the generalization to a matrix form is straightforward.

We denote by $\Pi_\Q [\widehat\z]=\arg\min\limits_{\z\in\Q}\frac{1}{2}\|\z-\widehat\z\|_2^2$ the projection of $\widehat\z$ into domain $\Q$, and by $\Pi_{\Q_1, \Q_2}\begin{pmatrix}\widehat\z_1\\ \widehat\z_2\end{pmatrix}$ the joint projection of $\widehat\z_1$ and $\widehat\z_2$ into domains $\Q_1$ and $\Q_2$, respectively. Finally, we use $[s]_{[0,a]}$ to  denote the projection of $s$ into $[0, a]$, where $a>0$.
%\begin{align*}
%[s]_{[0,s_+]} &= \left\{ \begin{array}{ll}
%       s & s\in[0,s_+]\\
%       0 & s<0\\
%        1&s>s_+\end{array} \right..
%\end{align*}
\section{Pdprox: A Primal Dual Prox Method for Non-Smooth Optimization}\label{sec:algo}

We first describe the non-smooth optimization problems that the proposed algorithm can be applied to,  and then present the primal dual prox method for non-smooth optimization. We then prove the convergence rate of the proposed algorithms and discuss several extensions. Proofs for technical lemmas  are deferred to the  appendix.  

\subsection{Non-Smooth Optimization}\label{sec:non-smooth}
We first focus our  analysis on linear classifiers and denote by $\w\in\mathbb R^d$ a linear model. The extension  to  nonlinear models is discussed in section~\ref{sec:ext}. Also, extension to a  collection of linear models $\mathbf W\in\mathbb R^{d\times K}$ can be done in a straightforward way.  We consider the following general \textit{non-smooth} optimization problem:
\begin{align}
\min_{\w\in \Q_\w} \Bigg{[}  \mathcal L(\w) =  \max_{\balpha\in \Q_{\balpha}}L(\w, \boldsymbol{\alpha}; \X, \mathbf y)+ \lambda R(\w) \Bigg{]} . \label{eqn:p-obj}
\end{align}
%where the first maximization term characterizes a non-smooth loss function, and $R(\w)$ is also a non-smooth regularizer.
The parameters $\w$ in domain $\Q_\w$ and  $\balpha$ in  domain $\Q_{\balpha}$ are referred to as  primal  and  dual variables, respectively.
% The parameters $\w$ and $\balpha$ are referred to as the primal variable and the dual variable, respectively, and $\Q_\w, \Q_{\balpha}$ are the domain of the primal variable and the dual variable, respectively. 
% 
 Since it is impossible to develop an efficient first order method for general non-smooth optimization, we focus on the family of non-smooth loss functions that can be characterized by bilinear function $L(\w, \balpha;\X, \mathbf y)$, i.e. 
\begin{align}
L(\w, \balpha; \X, \mathbf y) & = c_0(\X, \y) + \balpha^{\top}\mathbf a(\X, \y)  + \w^{\top}\b(\X, \y) + \w^{\top} \mathbf H(\X, \y) \balpha, \label{eqn:f}
\end{align}
where $c_0(\X, \y)$, $\mathbf a(\X, \y)$, $\mathbf b(\X, \y)$, and $\mathbf H(\X, \y)$ are the parameters depending on the training examples $(\X, \y)$ with consistent sizes. 
In the sequel, we denote by $L(\w, \balpha)=L(\w, \balpha; \mathbf X, \mathbf y)$  for simplicity, and by $G_\w(\w, \balpha)=\nabla_\w L(\w, \balpha)$ and $G_\alpha(\w,\balpha)=\nabla_{\balpha} L(\w, \balpha)$ the partial gradients of $L(\w, \balpha)$ in terms of $\w$ and $\balpha$, respectively. 

\paragraph{Remark 1}One direct consequence of assumption in (\ref{eqn:f}) is that the partial gradient $G_{\w}(\w,\balpha)$ is independent of $\w$, and $G_{\balpha}(\w,\balpha)$ is independent of $\balpha$, since $L(\w,\balpha)$ is bilinear in $\w$ and $\balpha$. We will explicitly exploit this property in developing the efficient optimization algorithms.  We also note that no explicit assumption is made for the regularizer $R(\w)$. This is in contrast to the smoothing techniques used in~\citep{Nesterov2005,Nesterov:2005:EGT:1081200.1085585}.

To efficiently  solve the optimization problem in~(\ref{eqn:mcp}), we need first  turn it  into the form~(\ref{eqn:p-obj}). To this end,  we assume that the loss function can be written into a dual form, which is  bilinear in the primal and the dual variables, i.e. 
\begin{align}\label{eqn:dual}
\ell(\w; \x_i, y_i)= \max_{\alpha_i\in\Delta_\alpha}f(\w, \alpha_i; \x_i, y_i),
\end{align}
where $f(\w, \alpha; \x, y)$ is a bilinear function in $\w$ and $\alpha$, and $\Delta_\alpha$ is the domain of variable $\alpha$.  Using (\ref{eqn:dual}), we cast problem~(\ref{eqn:mcp}) into~(\ref{eqn:p-obj}) with $L(\w, \balpha; \X, \y)$ given by
\begin{align}\label{eqn:dual2}
L(\w, \balpha; \X, \y) = \frac{1}{n}\sum_{i=1}^n f(\w, \alpha_i; \x_i, y_i),
\end{align}
with $\balpha=(\alpha_1,\cdots, \alpha_n)^{\top}$ defined in the domain $\Q_{\balpha} = \{\balpha=(\alpha_1,\cdots, \alpha_n)^{\top}, \alpha_i\in\Delta_\alpha\}$. 

Before delving into the description of the proposed algorithms and their analysis, we give a few examples that show many non-smooth loss functions can be written in the form of (\ref{eqn:dual}):
\begin{itemize}
\item Hinge loss~\citep{citeulike:106699}: 
\begin{align*}
\ell(\w; \x, y)&=\max(0, 1-y\w^{\top}\x)=\max_{\alpha\in[0, 1]} \alpha(1-y\w^{\top}\x).
\end{align*}

\item Generalized hinge loss~\citep{Bartlett:2008:CRO:1390681.1442792}:
\begin{align*}
\hspace*{-0.3in}\ell(\w;\x, y) &= \left\{ \begin{array}{ll}
         1-ay\w^{\top}\x & \mbox{if $y\w^{\top}\x \leq 0$}\\
        1-y\w^{\top}\x & \mbox{if $0<y\w^{\top}\x < 1$}\\
        0&\mbox{if $y\w^{\top}\x\geq 1$}\end{array} \right.\\
        &\hspace*{-0.3in}=  \max_{\alpha_1 \geq0, \alpha_2 \geq 0\atop\alpha_1+\alpha_2\leq 1}\alpha_1(1-ay\w^{\top}\x) + \alpha_2(1-y\w^{\top}\x),
         \end{align*}
 where $a>1$.
 \item Absolute loss~\citep{HastieEtAl2008}: 
 \[
 \hspace*{-1in}\ell(\w; \x, y)=|\w^{\top}\x-y|=\max_{\alpha\in[-1,1]}\alpha (\w^{\top}\x- y).
 \]
 \item $\epsilon$-insensitive loss~\citep{Rosasco:2004:LFS:996933.996940} : 
 \begin{align*}
 &\ell(\w; \x, y)=\max(|\w^{\top}\x-y|-\epsilon, 0)=\max_{\alpha_1\geq 0,\alpha_2\geq 0\atop \alpha_1+\alpha_2\leq 1}\left[ (\w^{\top}\x - y)(\alpha_1-\alpha_2) - \epsilon(\alpha_1+\alpha_2)\right].
 \end{align*}
 \item Piecewise linear loss~\citep{Koenker2005}: 
 \begin{align*}
\hspace*{-0.3in}\ell(\w;\x, y) &= \left\{ \begin{array}{ll}
        a|\w^{\top}\x- y|& \mbox{if $\w^{\top}\x \leq y$}\\
       (1-a)|\w^{\top}\x-y| & \mbox{if $\w^{\top}\x \geq y$}\end{array} \right.\\
        &\hspace*{-0.3in}=  \max_{\alpha_1 \geq0, \alpha_2 \geq 0\atop\alpha_1+\alpha_2\leq 1}\alpha_1a(y-\w^{\top}\x) + \alpha_2(1-a)(\w^{\top}\x-y).
 \end{align*}
 \item $\ell_2$ loss~\citep{citeulike:9315911}: 
 \begin{align*}
 \ell(\mathbf W; \x, \mathbf y) =  \|\mathbf W^{\top}\x-\mathbf y\|_2 = \max_{\|\alpha\|_2\leq 1} \alpha^{\top}(\mathbf W^{\top}\x-\mathbf y),
 \end{align*}
   where $\mathbf y\in\mathbb R^K$ is multiple class label vector and $\mathbf W=(\w_1,\cdots, \w_K)$. 
\end{itemize}
%More generally, any piecewise linear loss functions can be written in~(\ref{eqn:dual}) and~(\ref{eqn:f}). In the sequel, we denote by $L(\w, \alpha)=L(\w, \alpha; \mathbf X, \mathbf y)$  for simplicity, and by $G_\w(\w, \alpha)=\nabla_\w L(\w, \alpha)$ and $G_\alpha(\w,\alpha)=\nabla_\alpha L(\w, \alpha)$ the partial gradients of $L(\w, \alpha)$ in terms of $\w$ and $\alpha$, respectively.
%The following lemma shows an important property of the bilinear function $f(\w, \alpha; \x, y)$, 
%\begin{lemma}\label{lem:1}
% Let $f(\w,\alpha; \x, y)$ be bilinear in $\w$ and $\alpha$ as in~(\ref{eqn:f}). Assuming there exists $c>0$ such that $\|H(\x,y)\|^2_2\leq c,\forall \x, y$, then for any $\alpha_1, \alpha_2 \in\Q_\alpha$, and $\w_1,\w_2$ we have
%\begin{align}
%\|\nabla_{\alpha} f(\w_1,\alpha_1;\x, y) - \nabla_\alpha f(\w_2, \alpha_2; \x, y)\|^2_2&\leq c\|\w_1-\w_2\|^2_2\label{eqn:const}\\
%\|\nabla_{\w} f(\w_1,\alpha_1;\x, y) - \nabla_\w f(\w_2, \alpha_2; \x, y)\|^2_2&\leq c\|\alpha_1-\alpha_2\|^2_2\label{eqn:const2}
%\end{align}
%\end{lemma}

Besides the non-smooth loss function $\ell(\w; \x, y)$, we also assume that the regularizer $R(\w)$ is a non-smooth function. Many non-smooth regularizers are used in machine learning problems. We list a few of them in the following, where $\mathbf W=(\w_1,\cdots, \w_K)$, $\w_k\in\mathbb R^d$ and $\w^j$ is the $j$th row of $\mathbf W$.
\begin{itemize}
\item  lasso:  $R(\w)=\|\w\|_1$, $\ell_2$ norm: $R(\w)=\|\w\|_2$, and $\ell_{\infty}$ norm: $R(\w)=\|\w\|_\infty$.
\item group lasso: $R(\w)=\sum_{g=1}^K \sqrt{d_g}\|\w_g\|_2$, where $\w_g\in\mathbb R^{d_g}$.
\item exclusive lasso: $R(\mathbf W)= \sum_{j=1}^d \|\w^j\|_1^{2}$.
\item $\ell_{2,1}$ norm: $R(\mathbf W)= \sum_{j=1}^d \|\w^j\|_{2}$.
\item $\ell_{1,\infty}$ norm: $R(\mathbf W)=\sum_{j=1}^d \|\w^{j}\|_\infty$.
\item trace norm: $R(\mathbf W)=\|\mathbf W\|_1$, the summation of singular values of $\mathbf W$.
\item other regularizers: $R(\mathbf W)=\left(\sum_{k=1}^K\|\w_k\|_2\right)^2$.
\end{itemize}
%, i.e.
%\[
%\Pi_{\Q_1, \Q_2}\begin{pmatrix}\widehat\z_1\\ \widehat\z_2\end{pmatrix}= \arg\min_{\z_1\in\Q_1\atop \z_2\in\Q_2} \frac{1}{2}\|\z_1-\widehat \z_1\|_2^2 + \frac{1}{2}\|\z_2-\widehat \z_2\|_2^2
%\]
Note that unlike~\citep{Nesterov2005,Nesterov:2005:EGT:1081200.1085585}, we do not further require the non-smooth regularizer  to be written into a bilinear dual form, which could be violated by many non-smooth regularizers, e.g. $R(\mathbf W)=\left(\sum_{k=1}^K\|\w_k\|_2\right)^2$ or more generally $R(\w) =V(\|\w\|)$, where $V(z)$ is a monotonically increasing function. 

We close this section by presenting a lemma showing an important property of the bilinear function $L(\w, \balpha)$. 
\begin{lemma}\label{lem:1}
 Let $L(\w,\balpha)$ be bilinear in $\w$ and $\balpha$ as in~(\ref{eqn:f}). Given fixed $\X, \y$ there exists $c>0$ such that $\|H(\X,\y)\|^2_2\leq c$, then for any $\balpha_1, \balpha_2 \in\Q_{\balpha}$, and $\w_1,\w_2\in\Q_\w$ we have
\begin{align}
\|G_\alpha(\w_1,\balpha_1) - G_\alpha (\w_2, \balpha_2)\|^2_2&\leq c\|\w_1-\w_2\|^2_2\label{eqn:const},\\
\|G_{\w} (\w_1,\balpha_1) - G_\w (\w_2, \balpha_2)\|^2_2&\leq c\|\balpha_1-\balpha_2\|^2_2\label{eqn:const2}.
\end{align}
\end{lemma}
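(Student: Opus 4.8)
The plan is to compute the two partial gradients explicitly from the bilinear form~(\ref{eqn:f}) and then reduce each inequality to a single application of the definition of the spectral norm $\|\mathbf H(\X,\y)\|_2$. First I would differentiate~(\ref{eqn:f}) with respect to $\w$ and with respect to $\balpha$, obtaining
\begin{align*}
G_\w(\w,\balpha) &= \nabla_\w L(\w,\balpha) = \b(\X,\y) + \mathbf H(\X,\y)\balpha, \\
G_\alpha(\w,\balpha) &= \nabla_{\balpha} L(\w,\balpha) = \a(\X,\y) + \mathbf H(\X,\y)^{\top}\w.
\end{align*}
This makes Remark~1 concrete: $G_\w$ does not depend on $\w$ and $G_\alpha$ does not depend on $\balpha$, which is exactly the structure that drives the argument.

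Next I would form the difference of the dual gradients at two points. Since the constant term $\a(\X,\y)$ cancels, we get
\begin{align*}
G_\alpha(\w_1,\balpha_1) - G_\alpha(\w_2,\balpha_2) = \mathbf H(\X,\y)^{\top}(\w_1-\w_2),
\end{align*}
and therefore, by the definition of the operator ($\ell_2\to\ell_2$) norm and the hypothesis $\|\mathbf H(\X,\y)\|_2^2 \le c$,
\begin{align*}
\|G_\alpha(\w_1,\balpha_1) - G_\alpha(\w_2,\balpha_2)\|_2^2 \le \|\mathbf H(\X,\y)^{\top}\|_2^2\,\|\w_1-\w_2\|_2^2 = \|\mathbf H(\X,\y)\|_2^2\,\|\w_1-\w_2\|_2^2 \le c\,\|\w_1-\w_2\|_2^2,
\end{align*}
using that a matrix and its transpose share the same spectral norm. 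This establishes~(\ref{eqn:const}). The bound~(\ref{eqn:const2}) follows by the symmetric computation: $G_\w(\w_1,\balpha_1) - G_\w(\w_2,\balpha_2) = \mathbf H(\X,\y)(\balpha_1-\balpha_2)$, and the same norm inequality gives $\|G_\w(\w_1,\balpha_1) - G_\w(\w_2,\balpha_2)\|_2^2 \le c\|\balpha_1-\balpha_2\|_2^2$.

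There is essentially no hard step here; the lemma is a direct consequence of bilinearity. The only point that needs care is the bookkeeping of which variable each partial gradient actually depends on — once one observes that $G_\alpha$ is an affine function of $\w$ alone (and $G_\w$ of $\balpha$ alone), the constant terms drop out of the differences and the result is immediate. If one wanted to be fully self-contained about the case where $\mathbf H$ acts between spaces of different dimension (e.g.\ $\w\in\R^d$, $\balpha\in\R^n$), one would simply note $\|\mathbf H^{\top}\|_2 = \sigma_{\max}(\mathbf H) = \|\mathbf H\|_2$, which holds regardless of the shape of $\mathbf H$.
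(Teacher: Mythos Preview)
Your proof is correct and essentially identical to the paper's own argument: compute the two partial gradients of the bilinear form, take differences so the constant vectors $\a(\X,\y)$ and $\b(\X,\y)$ cancel, and bound via $\|\mathbf H^{\top}\|_2=\|\mathbf H\|_2\le\sqrt{c}$. There is nothing to add.
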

%With regard to Lemma~\ref{lem:1}, we need to make the following two remarks. 
%\begin{itemize}
%\item 
\paragraph{Remark 2}The value of constant $c$ in Lemma~\ref{lem:1} is an input to our algorithms  used to set the step size. In the Appendix \ref{app:constant}, we show how to estimate constant $c$ for certain loss functions.  In addition the constant $c$ in bounds~(\ref{eqn:const}) and~(\ref{eqn:const2}) do not have to be the same as shown by the the  example of generalized hinge loss in Appendix \ref{app:constant}. It should be noticed that the inequalities in Lemma~\ref{lem:1} indicate $L(\w,\balpha)$ has Liptschitz continuos gradients, however, the gradient of the whole objective with respect to $\w$, i.e.,  $G_\w(\w,\balpha)+\lambda \partial R(\w)$ is not Lipschitz continuous due to the general non-smooth term $R(\w)$, which prevents previous convex-concave minimization scheme~\citep{pau08,Nemirovski2005} not applicable. 

\begin{algorithm}[t]
\caption{The Pdprox-dual Algorithm for Non-Smooth Optimization}\label{alg:1}
\begin{algorithmic}[1]
\STATE \textbf{Input}: step size $\gamma = \sqrt{1/(2c)}$, where $c$ is specified in~(\ref{eqn:const}).
 \STATE \textbf{Initialization}: $\w_0=\mathbf 0, \bbeta_0=\mathbf 0$
\FOR{$t=1, 2, \ldots$}

  \STATE $\displaystyle \balpha_t=\Pi_{\mathcal Q_{\balpha}}\left[\bbeta_{t-1}+\gamma G_{\balpha}(\w_{t-1},\bbeta_{t-1})\right]$

   \STATE
$
   \w_t=\mathop{\arg\min}_{\w\in\Q_\w} \frac{1}{2}\left\|\w-\left(\w_{t-1}-\gamma G_\w(\w_{t-1},\balpha_t)\right)\right\|_2^2 +\gamma \lambda R(\w)
$
      \STATE $\displaystyle \bbeta_t=\Pi_{\mathcal Q_{\balpha}}\left[\bbeta_{t-1} + \gamma G_{\balpha}(\w_t, \balpha_t)\right]$
\ENDFOR
\STATE {\bf Output} $\widehat{\w}_T = \sum_{t=1}^T \w_t/T$ and $\widehat{\balpha}_T = \sum_{t=1}^T \balpha_t/ T$.
\end{algorithmic}
\end{algorithm}

\subsection{The Proposed Primal-Dual Prox Methods}
In this subsection, we present two variants  of Primal Dual Prox (Pdprox) method for solving the non-smooth optimization problem in~(\ref{eqn:p-obj}). The common feature shared by the two algorithms is that they update both the primal and the dual variables at each iteration. In contrast, most first order methods only update the primal variables. The key advantages of the proposed algorithms is that  they are able to capture the {\bf sparsity structures of both primal and dual variables}, which is usually the case when both the regularizer and the loss functions are both non-smooth. 
The two algorithms differ from each other in the number of  copies for the dual or the primal variables, and the specific order for updating those.  Although our analysis shows that the two algorithms share the same convergence rate; however,  our empirical studies show that the one algorithm is  more preferable than the other depending on the nature of the applications. 
%second algorithm is more advantageous when the number of training examples is significantly larger than the number of features (i.e. $n\gg d$).  

\paragraph{Pdprox-dual algorithm}Algorithm~\ref{alg:1} shows the first primal dual prox algorithm for optimizing the problem in~(\ref{eqn:p-obj}). Compared to the other gradient based algorithms, Algorithm~\ref{alg:1} has several interesting features: 
\begin{enumerate}
\item[(i)] it updates both the dual variable $\balpha$ and the primal variable $\w$. This is  useful when additional constraints are introduced for the dual variables, as we will discuss  later. 
\item[(ii)] it introduces an extra dual variable $\bbeta$ in addition to $\balpha$, and updates both $\balpha$ and $\bbeta$ at each iteration by a gradient mapping. The gradient mapping on the dual variables into a sparse domain allows the proposed algorithm to capture the sparsity of the dual
variables (more discussion on how the sparse constraint on the dual variable affects the
convergence is presented in Section~\ref{sec:ext}).  Compared to the second algorithm presented below, we refer to Algorithm~\ref{alg:1} as \textbf{Pdprox-dual} algorithm since it introduces an extra dual variable in updating.
\item[(iii)] the primal variable $\w$ is updated by a composite gradient mapping~\citep{RePEc:cor:louvco:2007076} in step 5.  Solving a composite gradient mapping in this step allows the proposed algorithm
to capture the sparsity of the primal variable. Similar to many other approaches for composite optimization~\citep{Duchi:2009:EOB:1577069.1755882,NIPS2009_0997}, we assume that the mapping in step 5 can be solved efficiently. (This is the only assumption we made on the non-smooth regularizer. The discussion in Section~\ref{sec:imp} shows that the proposed
algorithm can be applied to a large family of non-smooth regularizers).

\item[(iv)] the step size $\gamma$ is fixed to $\sqrt{1/(2c)}$, where $c$ is the constant specified in Lemma~\ref{lem:1}.
This is in contrast to most gradient based methods where the step size depends on $T$ and/or $\lambda$. This feature is particularly useful in implementation as we often observe that the performance of a gradient method is sensitive to the choice of the step size.
\end{enumerate}

%\subsection{The Second Algorithm: Pdprox-primal}\label{subsec:sec}
\paragraph{Pdprox-primal algorithm}In Algorithm~\ref{alg:1},  we maintain two copies of the dual variables $\balpha$ and $\bbeta$, and update them by two gradient mappings~\footnote{The extra gradient mapping on $\bbeta$ can also be replaced with a simple calculation, as discussed in subsection~\ref{sec:imp}. }. We can actually save one gradient mapping on the dual variable by first updating the primal variable $\w_t$, and then updating $\balpha_t$ using partial gradient computed with $\w_t$. As a tradeoff, we add an extra primal variable $\u$, and update it by a simple calculation. The detailed steps are shown in Algorithm~\ref{alg:2}.  Similar to Algorithm~\ref{alg:1}, Algorithm~\ref{alg:2} also needs to compute two partial gradients (except for the initial partial gradient on the primal variable), i.e.,  $G_\w(\cdot, \balpha_t)$ and $G_{\balpha}(\w_t, \cdot)$. Different from Algorithm~\ref{alg:1}, Algorithm~\ref{alg:2} (i) maintains $(\w_t, \balpha_t, \u_t)$ at each iteration with $O(2d+n)$  memory, while Algorithm~\ref{alg:1} maintains $(\balpha_t, \w_t, \bbeta_t)$ at each iteration with $O(2n+d)$ memory; (ii) and replaces one gradient mapping on an extra dual variable $\bbeta_t$ with a simple update on an extra primal variable $\u_t$.   Depending on the nature of applications, one method may be more efficient than the other. For example,  if the dimension $d$ is much larger than the number of examples $n$, then Algorithm~\ref{alg:1} would be more preferable than Algorithm~\ref{alg:2}. When the number of examples $n$ is much larger than the dimension $d$, Algorithm~\ref{alg:2} could save the memory and the computational cost.  However, as shown by our analysis in Section~\ref{sec:conv},  the convergence rate of two algorithms are the same. Because it introduces an extra primal variable, we refer to Algorithm~\ref{alg:2} as the \textbf{Pdprox-primal} algorithm.

\begin{remark}
It should be noted that  although Algorithm~\ref{alg:1} uses a similar strategy for updating the dual variables $\balpha$ and $\bbeta$, but it is significantly different from the mirror prox method~\citep{Nemirovski2005}. First, unlike the mirror prox method that introduces an auxiliary variable for $\w$, Algorithm~\ref{alg:1} introduces a composite gradient mapping for updating $\w$.  Second, Algorithm~\ref{alg:1} updates $\w_t$ using the partial gradient computed from the updated dual variable $\balpha_t$ rather than $\bbeta_{t-1}$. Third, Algorithm~\ref{alg:1} does not assume that the overall objective function has Lipschitz continuous gradients, a key assumption that limits the application of the mirror prox method.
\end{remark}

\begin{remark}
A similar algorithm with an extra primal variable is also proposed in a recent work~\citep{Chambolle:2011:FPA:1968993.1969036}. It is slightly different from Algorithm~\ref{alg:2} in the order of updating on the primal variable and the dual variable, and the gradients used in the updating. We discuss the differences between the Pdprox method and the algorithm in~\citep{Chambolle:2011:FPA:1968993.1969036} with our notations in Appendix~\ref{sec:apc}. 
%With a simple match between our notations and the notations in~\citep{Chambolle:2011:FPA:1968993.1969036},  it is starighfoward to show that the updates of Algorithm 1 in~\citep{Chambolle:2011:FPA:1968993.1969036} can be written as:
\end{remark}

%As compared to Algorithm~\ref{alg:1}, we refer to Algorithm~\ref{alg:2} that introduces an auxiliary primal variable in updating as \textbf{Pdprox-primal} algorithm.

\begin{algorithm}[t]
\caption{The Pdprox-primal Algorithm for Non-Smooth Optimization}\label{alg:2}
\begin{algorithmic}[1]
\STATE \textbf{Input}: step size $\gamma = \sqrt{1/(2c)}$, where $c$ is specified in~(\ref{eqn:const2}).
 \STATE \textbf{Initialization}: $\u_0=\mathbf 0, \balpha_0=\mathbf 0$
\FOR{$t=1, 2, \ldots$}
 
   \STATE
$
   \w_t=\mathop{\arg\min}_{\w\in\Q_\w} \frac{1}{2}\left\|\w-\left(\u_{t-1}-\gamma G_\w(\u_{t-1},\balpha_{t-1})\right)\right\|_2^2 +\gamma \lambda R(\w)
$

  \STATE $\displaystyle \balpha_t=\Pi_{\mathcal Q_{\balpha}}\left[\balpha_{t-1}+\gamma G_{\balpha}(\w_{t},\balpha_{t-1})\right]$

\STATE  $\u_t = \w_t +\gamma (G_\w(\u_{t-1}, \balpha_{t-1}) - G_\w(\w_t, \balpha_t))$
\ENDFOR
\STATE {\bf Output} $\widehat{\w}_T = \sum_{t=1}^T \w_t/T$ and $\widehat{\balpha}_T = \sum_{t=1}^T \balpha_t/ T$.
\end{algorithmic}
\end{algorithm}

\subsection{Convergence Analysis }\label{sec:conv}
This section establishes bounds on the convergence rate of the proposed algorithms. We begin by presenting  a theorem about  the convergence rate of Algorithms~\ref{alg:1}  and~\ref{alg:2}. For ease of analysis, we first write  (\ref{eqn:p-obj}) into the following equivalent minimax formulation
\begin{align}\label{eqn:obj}
\min_{\w\in\Q_\w} \max_{\balpha\in\Q_{\balpha}}\; F(\w, \balpha) = L(\w, \balpha)+ \lambda R(\w).
\end{align}
\break
Our main result is stated in the following theorem.
{\begin{theorem}\label{them:main}
%Let $(\w^*, \alpha^*)$ be the optimal solution to (\ref{eqn:obj}).
By running Algorithm~\ref{alg:1} or Algorithm~\ref{alg:2} with $T$ steps, we have
\begin{align*}
F(\widehat\w_T, \balpha)  - F(\w, \widehat\balpha_T) \leq  \frac{\|\w\|_2^2+\|\balpha\|^2_2}{\sqrt{(2/c)}T},
\end{align*}
for any $\w\in \Q_\w$ and $\balpha\in \Q_{\balpha}$. In particular, 
\begin{align*}
\mathcal L(\widehat\w_T) -  \mathcal D(\widehat\balpha_T) \leq \frac{\|\widetilde\w_T\|_2^2+\|\widetilde\balpha_T\|_2^2}{\sqrt{(2/c)}T}
\end{align*}
where $\mathcal D(\balpha)=\min_{\w\in\Q_\w}F(\w,\balpha)$ is the dual objective, 
%\begin{align*}
%\max_{\balpha\in\Q_{\balpha}}F(\widehat\w_T, \balpha)  - \min_{\w\in\Q_\w}F(\w, \widehat\balpha_T) \leq  \frac{\|\w^*\|_2^2+\|\balpha^*\|^2_2}{\sqrt{(2/c)}T},
%\end{align*}
%where $(\w^*, \alpha^*)$ is the optimal solution to $\max_{\balpha\in\Q_{\balpha}} F(\widehat\w_T, \balpha)-\min_{\w\in\Q_\w}F(\w,\widehat\balpha_T)$.%, and $D$ is the bound on $\|\w^*\|_2$ by $\|\w^*\|_2^2\leq D$.
and $\widetilde\w_T,\widetilde\balpha_T$ are given by $\widetilde\w_T=\arg\min_{\w\in\Q_\w}F(\w,\widehat\balpha_T)$, $\widetilde\balpha_T=\arg\max_{\balpha\in\Q_\alpha}F(\widehat\w_T, \balpha)$.
\end{theorem}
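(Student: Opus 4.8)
The plan is to derive the bound from a single-iteration "prox inequality" and then telescope. For a primal–dual prox step of this type, the key is to exploit (i) optimality of the composite-gradient update for $\w_t$, which gives a three-point inequality, and (ii) optimality of the Euclidean projection defining $\balpha_t$ (and $\bbeta_t$), which gives an analogous inequality, and (iii) the bilinearity of $L$ together with Lemma~\ref{lem:1}, which controls the cross terms. Concretely, for Algorithm~\ref{alg:1} I would first write, for any $\w\in\Q_\w$, using the first-order optimality condition of step~5 and the standard identity $\langle a-b, a-c\rangle = \tfrac12(\|a-b\|^2 + \|a-c\|^2 - \|b-c\|^2)$,
\[
\gamma\langle G_\w(\w_{t-1},\balpha_t), \w_t-\w\rangle + \gamma\lambda(R(\w_t)-R(\w)) \le \tfrac12\|\w-\w_{t-1}\|_2^2 - \tfrac12\|\w-\w_t\|_2^2 - \tfrac12\|\w_t-\w_{t-1}\|_2^2,
\]
and similarly for the two projection steps defining $\balpha_t$ and $\bbeta_t$, producing telescoping terms $\tfrac12\|\balpha-\bbeta_{t-1}\|^2 - \tfrac12\|\balpha-\bbeta_t\|^2$ plus negative square terms.

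Next I would add these inequalities. Because $L(\w,\balpha)$ is bilinear, the linear (in each variable) terms combine so that $\langle G_\w(\w_{t-1},\balpha_t),\w_t-\w\rangle$ and $\langle G_\balpha(\cdot),\balpha-\balpha_t\rangle$ reassemble into $L(\w_t,\balpha) - L(\w,\balpha_t)$ up to a residual cross term of the form $\gamma\langle G_\w(\w_{t-1},\balpha_t) - G_\w(\w_t,\balpha_t),\, \w_t - \w\rangle$ or, after rearranging the dual steps, $\gamma\langle G_\balpha(\w_{t-1},\bbeta_{t-1}) - G_\balpha(\w_t,\balpha_t),\,\balpha_t-\bbeta_t\rangle$ and similar. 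The crucial estimate is to bound each such residual by Cauchy–Schwarz and Lemma~\ref{lem:1}: e.g. $\gamma\|G_\balpha(\w_{t-1},\cdot)-G_\balpha(\w_t,\cdot)\|_2\|\balpha_t-\bbeta_t\|_2 \le \gamma\sqrt{c}\,\|\w_{t-1}-\w_t\|_2\|\balpha_t-\bbeta_t\|_2 \le \tfrac{\gamma^2 c}{2}(\|\w_{t-1}-\w_t\|^2 + \|\balpha_t-\bbeta_t\|^2)$. With $\gamma = \sqrt{1/(2c)}$ we get $\gamma^2 c = 1/2$, so these residuals are dominated by the negative square terms $-\tfrac12\|\w_t-\w_{t-1}\|^2$, $-\tfrac12\|\balpha_t-\bbeta_t\|^2$, etc., already present, and they cancel (this is exactly the point of the extra-gradient/mirror-prox-style update and of choosing $\gamma$ as specified). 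What survives is
\[
\gamma\bigl(L(\w_t,\balpha) + \lambda R(\w_t) - L(\w,\balpha_t) - \lambda R(\w)\bigr) \le \tfrac12\|\w-\w_{t-1}\|^2 - \tfrac12\|\w-\w_t\|^2 + \tfrac12\|\balpha-\bbeta_{t-1}\|^2 - \tfrac12\|\balpha-\bbeta_t\|^2.
\]
Summing over $t=1,\dots,T$, the right side telescopes to $\tfrac12\|\w-\w_0\|^2 + \tfrac12\|\balpha-\bbeta_0\|^2 = \tfrac12(\|\w\|^2+\|\balpha\|^2)$ since $\w_0=\bbeta_0=\mathbf 0$; dividing by $\gamma T$ and invoking convexity of $F(\cdot,\balpha)$ and concavity of $F(\w,\cdot)$ (here just linearity of $L$ plus convexity of $R$) with Jensen's inequality on the averaged iterates $\widehat\w_T,\widehat\balpha_T$ yields $F(\widehat\w_T,\balpha) - F(\w,\widehat\balpha_T) \le \frac{\|\w\|^2+\|\balpha\|^2}{2\gamma T} = \frac{\|\w\|^2+\|\balpha\|^2}{\sqrt{2/c}\,T}$. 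The same argument applies to Algorithm~\ref{alg:2} with the roles of the extra primal variable $\u_t$ and the bound~(\ref{eqn:const2}) playing the symmetric role; the update $\u_t = \w_t + \gamma(G_\w(\u_{t-1},\balpha_{t-1}) - G_\w(\w_t,\balpha_t))$ is precisely what makes the corresponding cross term telescope.

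For the second ("in particular") claim, I would specialize: by definition of $\widetilde\balpha_T$ and $\widetilde\w_T$, set $\balpha = \widetilde\balpha_T$ and $\w = \widetilde\w_T$ in the first bound. Then $F(\widehat\w_T,\widetilde\balpha_T) = \max_{\balpha}F(\widehat\w_T,\balpha) = \mathcal L(\widehat\w_T)$ and $F(\widetilde\w_T,\widehat\balpha_T) = \min_\w F(\w,\widehat\balpha_T) = \mathcal D(\widehat\balpha_T)$, so the left side becomes exactly $\mathcal L(\widehat\w_T) - \mathcal D(\widehat\balpha_T)$ and the right side becomes $\frac{\|\widetilde\w_T\|^2 + \|\widetilde\balpha_T\|^2}{\sqrt{2/c}\,T}$, as claimed. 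The main obstacle I anticipate is the careful bookkeeping in step two: getting the residual cross terms into exactly the form where Lemma~\ref{lem:1} applies and verifying that, with $\gamma=\sqrt{1/(2c)}$, every positive term produced by Cauchy–Schwarz is absorbed by an already-present negative square term — the ordering of the $\balpha_t$, $\w_t$, $\bbeta_t$ updates (updating $\w_t$ with $\balpha_t$, not $\bbeta_{t-1}$) is what makes this cancellation work, and this is the delicate part that distinguishes the argument from a generic mirror-prox analysis and is where sign errors are easy to make.
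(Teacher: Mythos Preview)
Your proposal is correct and follows essentially the same route as the paper: a single-iteration prox inequality obtained from the optimality conditions of the three updates, the residual cross term controlled via Lemma~\ref{lem:1}, cancellation using $\gamma^2 c = 1/2$, telescoping, and Jensen on the averages; the ``in particular'' part is exactly the specialization you describe. The paper merely packages the single-iteration step through Lemmas~\ref{lem:6}--\ref{lem:7} (which in turn rest on Nemirovski's abstract two-step prox lemma, Lemma~\ref{lem:8}), whereas you sketch the same computation directly.

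One bookkeeping caution: the Young-type inequality you wrote, $\gamma\sqrt{c}\,\|\w_{t-1}-\w_t\|\,\|\balpha_t-\bbeta_t\| \le \tfrac{\gamma^2 c}{2}(\|\w_{t-1}-\w_t\|^2+\|\balpha_t-\bbeta_t\|^2)$, does not hold for $\gamma\sqrt{c}=1/\sqrt{2}<1$. In the paper the residual does not arise as such a product; Lemma~\ref{lem:8} delivers it already as the squared term $\gamma^2\|G_{\balpha}(\w_t,\balpha_t)-G_{\balpha}(\u_{t-1},\bbeta_{t-1})\|_2^2$, which by Lemma~\ref{lem:1} is $\le \gamma^2 c\,\|\w_t-\u_{t-1}\|_2^2$ and is absorbed by the single negative term $-\tfrac12\|\w_t-\u_{t-1}\|_2^2$. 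If you carry out your direct computation carefully (splitting the cross term as $\tfrac12\|\balpha_t-\bbeta_t\|^2+\tfrac{\gamma^2 c}{2}\|\w_{t-1}-\w_t\|^2$ and using both available negative squares), the cancellation still goes through; just be aware that the exact form you wrote needs adjusting.
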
}
 { \begin{remark}It is worth mentioning that in contrast to most previous studies whose convergence  rates are derived for the optimality of either the primal objective or the dual objective,  the convergence result  in Theorem~\ref{them:main}  is on the  duality gap, which can serve a certificate of the convergence  for the proposed algorithm.  It is not difficult to show that when $\Q_\w = \R^d$ the dual objective can be computed by 
 \[
 \mathcal D(\balpha) = c_0(\X, \y) + \balpha^{\top}\a(\X, \y) - \lambda R^*\left(\frac{-\b(\X, \y) - H(\X, \y)\balpha}{\lambda} \right)
 \]
 where $R^*(\u)$ is the convex conjugate of $R(\w)$. For example, if $R(\w) = 1/2\|\w\|_2^2$, $R^*(\u) = \frac{1}{2}\|\u\|_2^2$; if $R(\w) = \|\w\|_p$, $R^*(\u) = I(\|\u\|_q\leq 1)$,  where $I(\cdot)$ is an indicator function, $p=1,2,\infty$ and $1/p + 1/q=1$. 
\end{remark}
 }

Before proceeding to  the proof of Theorem~\ref{them:main}, we present the following Corollary that  follows immediately from Theorem~\ref{them:main}  and states the convergence bound for the objective $\mathcal L(\w)$ in~(\ref{eqn:p-obj}).  
%Finally, we state the convergence bound for the objective $\mathcal L(\w)$ in~(\ref{eqn:p-obj}) in the following corrolary. 
\begin{corollary}
Let $\w^*$ be the optimal solution to (\ref{eqn:p-obj}), bounded by $\|\w^*\|_2^2\leq D_1$, and $\|\balpha\|_2^2\leq D_2,\forall \balpha\in\Q_{\balpha}$. By running Algorithm~\ref{alg:1} or~\ref{alg:2} with $T$ iterations, we have
\begin{align*}
\mathcal L(\widehat{\w}_T) - \mathcal L(\w^*) \leq \frac{D_1 + D_2}{\sqrt{(2/c)}T}.
\end{align*}
\end{corollary}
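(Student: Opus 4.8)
The plan is to derive the corollary directly from Theorem~\ref{them:main} by specializing the free primal variable to the optimum $\w^*$ and then maximizing over the free dual variable. First I would recall that, by the definition of $\mathcal L$ in~(\ref{eqn:p-obj}) together with the minimax reformulation~(\ref{eqn:obj}), we have $\mathcal L(\w) = \max_{\balpha\in\Q_{\balpha}} F(\w,\balpha)$ for every $\w\in\Q_\w$; in particular $\mathcal L(\widehat\w_T) = \max_{\balpha\in\Q_{\balpha}} F(\widehat\w_T,\balpha)$.

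Next I would set $\w = \w^*$ in the bound of Theorem~\ref{them:main}, which gives, for every $\balpha\in\Q_{\balpha}$,
\begin{align*}
F(\widehat\w_T,\balpha) - F(\w^*,\widehat\balpha_T) \leq \frac{\|\w^*\|_2^2 + \|\balpha\|_2^2}{\sqrt{(2/c)}T} \leq \frac{D_1 + D_2}{\sqrt{(2/c)}T},
\end{align*}
where the last inequality uses the stated bounds $\|\w^*\|_2^2\leq D_1$ and $\|\balpha\|_2^2\leq D_2$ for all $\balpha\in\Q_{\balpha}$. Since the right-hand side and the term $F(\w^*,\widehat\balpha_T)$ are both independent of $\balpha$, I would take the supremum over $\balpha\in\Q_{\balpha}$ on the left, obtaining $\mathcal L(\widehat\w_T) - F(\w^*,\widehat\balpha_T) \leq (D_1+D_2)/(\sqrt{(2/c)}T)$.

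Finally I would bound $F(\w^*,\widehat\balpha_T)$ from above by $\mathcal L(\w^*)$: the average $\widehat\balpha_T = \frac{1}{T}\sum_{t=1}^T \balpha_t$ lies in $\Q_{\balpha}$ by convexity of the dual domain, so $F(\w^*,\widehat\balpha_T) \leq \max_{\balpha\in\Q_{\balpha}} F(\w^*,\balpha) = \mathcal L(\w^*)$. Combining the two displays yields $\mathcal L(\widehat\w_T) - \mathcal L(\w^*) \leq (D_1+D_2)/(\sqrt{(2/c)}T)$, which is the claim. There is essentially no obstacle here beyond bookkeeping about which variable is held fixed and which is optimized; the only step deserving a line of care is verifying $\widehat\balpha_T\in\Q_{\balpha}$ so that the last inequality is legitimate, and this is immediate from convexity of $\Q_{\balpha}$ and the fact that each $\balpha_t$ produced by the algorithm is a projection onto $\Q_{\balpha}$.
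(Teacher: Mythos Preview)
Your argument is correct and follows essentially the same route as the paper's proof: specialize $\w=\w^*$ in Theorem~\ref{them:main}, pass to the maximum over $\balpha\in\Q_{\balpha}$ (the paper does this by plugging in the maximizer $\widetilde\balpha_T$, you by taking a supremum, which is equivalent since the bound $D_1+D_2$ is uniform in $\balpha$), and then use $F(\w^*,\widehat\balpha_T)\leq \mathcal L(\w^*)$. Your explicit check that $\widehat\balpha_T\in\Q_{\balpha}$ by convexity is a nice touch the paper leaves implicit.
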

\begin{proof}
Let $\w=\w^*=\arg\min_{\w\in\Q_\w}\mathcal L(\w)$ and $\widetilde\balpha_T=\arg\max_{\balpha\in\Q_{\balpha}}F(\widehat\w_T,\balpha)$  in Theorem~\ref{them:main}, 
then we have
\begin{align*}
\max_{\balpha\in\Q_{\balpha}}F(\widehat\w_T, \balpha) - F(\w^*, \widehat\balpha_T)\leq \frac{\|\w^*\|_2^2 +\|\widetilde\balpha_T\|_2^2}{\sqrt{(2/c)}T},
\end{align*}
Since $\mathcal L(\w) = \max\limits_{\balpha \in \Q_{\balpha}} F(\w, \balpha) \geq F(\w, \widehat{\balpha}_T)$, then we have
\begin{align*}
\mathcal L(\widehat \w_T) - \mathcal L(\w^*)\leq \frac{D_1 + D_2}{\sqrt{(2/c)}T}.
\end{align*}

\end{proof}

In order to aid understanding, we present the proof of Theorem~\ref{them:main} for each algorithm separately in the following subsections. 
\subsubsection{Convergence Analysis of Algorithm~\ref{alg:1}}

For the simplicity of analysis, we assume $\Q_\w = \mathbb R^d$ is the whole  Euclidean space. We discuss how to generalize the analysis to a convex domain $Q_\w$ in Section~\ref{sec:ext}.   In order to prove Theorem~\ref{them:main} for Algorithm~\ref{alg:1}, we present a series of lemmas to pave the path for the proof. We first restate the key updates in Algorithm~\ref{alg:1} as follows: 
\begin{align}
\balpha_t& = \Pi_{\Q_{\balpha}} \left[\bbeta_{t-1} + \gamma G_{\balpha} (\w_{t-1}, \bbeta_{t-1})\right] ,\label{eqn:upstep1}\\
\w_t & =\arg\min_{\w\in\mathbb R^d} \frac{1}{2}\|\w- (\w_{t-1}- \gamma G_\w(\w_{t-1},\balpha_t))\|_2^2+\gamma\lambda R(\w)\label{eqn:upstep2},\\
\bbeta_t&=\Pi_{\Q_{\balpha}}\left[\bbeta_{t-1} + \gamma G_{\balpha}(\w_t, \balpha_t)\right].\label{eqn:upstep3}
\end{align}

\begin{lemma}\label{lem:6}
The updates in Algorithm~\ref{alg:1} are equivalent to the following gradient mappings,
\begin{align*}
\begin{pmatrix}\balpha_t\\ \w_t\end{pmatrix}=\Pi_{\Q_{\balpha}, \mathbb R^d}\begin{pmatrix}\\ \bbeta_{t-1}+\gamma G_{\balpha}(\u_{t-1}, \bbeta_{t-1})\\ \u_{t-1} - \gamma( G_\w(\u_{t-1}, \balpha_t) +\lambda\mathbf v_t)\end{pmatrix},
\end{align*}
and
\begin{align*}
\begin{pmatrix}\bbeta_t\\ \u_t\end{pmatrix}=\Pi_{\Q_\alpha, \mathbb R^d}\begin{pmatrix}\\ \bbeta_{t-1}+\gamma G_{\balpha}(\w_{t}, \balpha_{t})\\ \u_{t-1} - \gamma( G_\w(\w_{t}, \balpha_t) + \lambda\mathbf v_t)\end{pmatrix},
\end{align*}
with initialization $\u_0=\w_0$, where $\mathbf v_t \in \partial R(\w_t)$ is a partial gradient of the regularizer on $\w_t$.
\end{lemma}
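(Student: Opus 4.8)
The plan is to show that the three-line update in Algorithm~\ref{alg:1} can be rewritten, without changing the iterates, as the two joint gradient-mapping displays in the statement. The key observation is that the auxiliary sequence $\u_t$ is defined by $\u_0 = \w_0$ and, implicitly, by the relation appearing in the second display: $\u_t = \u_{t-1} - \gamma(G_\w(\w_t,\balpha_t) + \lambda\mathbf v_t)$, where $\mathbf v_t \in \partial R(\w_t)$ is the particular subgradient selected by the prox step~(\ref{eqn:upstep2}). So the first thing I would do is nail down $\mathbf v_t$: by the first-order optimality condition for the strongly convex minimization in~(\ref{eqn:upstep2}), $\mathbf 0 \in \w_t - \w_{t-1} + \gamma G_\w(\w_{t-1},\balpha_t) + \gamma\lambda\partial R(\w_t)$, so there exists $\mathbf v_t \in \partial R(\w_t)$ with
\begin{align*}
\w_t = \w_{t-1} - \gamma\big(G_\w(\w_{t-1},\balpha_t) + \lambda\mathbf v_t\big).
\end{align*}
This is exactly the $\w_t$-component of the \emph{first} display once we verify $\u_{t-1} = \w_{t-1}$; so the heart of the argument is the inductive claim that $\u_{t-1} = \w_{t-1}$ for all $t$, which collapses both displays onto the original algorithm.

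Next I would run the induction. Base case: $\u_0 = \w_0$ by definition. Inductive step: assume $\u_{t-1} = \w_{t-1}$. Then the $\balpha_t$-component of the first display reads $\balpha_t = \Pi_{\Q_\balpha}[\bbeta_{t-1} + \gamma G_\balpha(\u_{t-1},\bbeta_{t-1})] = \Pi_{\Q_\balpha}[\bbeta_{t-1}+\gamma G_\balpha(\w_{t-1},\bbeta_{t-1})]$, matching~(\ref{eqn:upstep1}); and the $\w_t$-component reads $\w_t = \u_{t-1} - \gamma(G_\w(\u_{t-1},\balpha_t)+\lambda\mathbf v_t) = \w_{t-1} - \gamma(G_\w(\w_{t-1},\balpha_t)+\lambda\mathbf v_t)$, which by the optimality condition above is precisely the prox update~(\ref{eqn:upstep2}) (note the projection onto $\mathbb R^d$ is the identity, so the "joint projection" notation just carries the $\w$ coordinate through unchanged). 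Here I would invoke Remark~1: since $L$ is bilinear, $G_\w(\cdot,\balpha_t)$ does not depend on its first argument, so in fact $G_\w(\u_{t-1},\balpha_t) = G_\w(\w_{t-1},\balpha_t)$ regardless — this makes the substitution exact rather than merely consequential, and similarly $G_\balpha(\u_{t-1},\bbeta_{t-1}) = G_\balpha(\w_{t-1},\bbeta_{t-1})$. For the second display: its $\bbeta_t$-component is $\bbeta_t = \Pi_{\Q_\balpha}[\bbeta_{t-1} + \gamma G_\balpha(\w_t,\balpha_t)]$, matching~(\ref{eqn:upstep3}) exactly; and its $\u_t$-component \emph{defines} $\u_t = \u_{t-1} - \gamma(G_\w(\w_t,\balpha_t)+\lambda\mathbf v_t) = \w_{t-1} - \gamma(G_\w(\w_t,\balpha_t)+\lambda\mathbf v_t)$. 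To close the induction I must check $\u_t = \w_t$: subtracting the two expressions, $\u_t - \w_t = \gamma(G_\w(\w_{t-1},\balpha_t) - G_\w(\w_t,\balpha_t))$, which is $\mathbf 0$ by Remark~1 (bilinearity of $L$ makes $G_\w(\cdot,\balpha_t)$ constant in the primal argument). This restores $\u_t = \w_t$ and the induction goes through.

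Finally I would remark that the same selected subgradient $\mathbf v_t$ from~(\ref{eqn:upstep2}) is used in both displays, so the rewriting is internally consistent, and that the bottom lines of the two displays are just the telescoping recursion $\u_t = \u_{t-1} - \gamma(G_\w(\w_t,\balpha_t) + \lambda\mathbf v_t)$ — the device that will let the later convergence proof treat $\w_t$ and the "shadow" sequence $\u_t$ symmetrically with the $\bbeta_t,\balpha_t$ pair. The only mildly delicate point — and the one I would state explicitly rather than gloss — is that the equivalence hinges on the bilinearity assumption~(\ref{eqn:f}) via Remark~1: without $G_\w(\cdot,\balpha)$ being independent of $\w$, the identity $\u_t = \w_t$ would fail and the two reformulations would genuinely differ from Algorithm~\ref{alg:1}. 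Everything else is routine bookkeeping with the prox optimality condition and the fact that projection onto $\mathbb R^d$ is trivial.
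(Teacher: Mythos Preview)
Your proof is correct and follows essentially the same approach as the paper: you use the first-order optimality condition of~(\ref{eqn:upstep2}) to extract the subgradient $\mathbf v_t$, introduce the auxiliary sequence $\u_t$, and verify inductively that $\u_t=\w_t$ via the bilinearity of $L$ (Remark~1), which is exactly how the paper argues it. The paper presents the argument slightly differently---writing an intermediate four-line system with $\u_t = \w_t - \gamma(G_\w(\w_t,\balpha_t) - G_\w(\u_{t-1},\balpha_t))$ and then substituting---but the content is identical to your induction.
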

\begin{proof}
First, we argue that there exists a fixed (sub)gradient $\mathbf v_t \in \partial R(\w_t)$  such that the composite gradient mapping~(\ref{eqn:upstep2}) is equivalent to the following gradient mapping,
\begin{align}\label{eqn:proxmap}
\w_t &=\Pi_{\mathbb R^d}\left[\w_{t-1}- \gamma \left(G_\w(\w_{t-1}, \balpha_t)+ \lambda\mathbf v_t \right)\right].
\end{align}

To see this,  since $\w_t$ is the optimal solution to~(\ref{eqn:upstep2}), by first order optimality condition,  there exists a subgradient $\mathbf v_t=\partial R(\w_t)$ such that $\w_t - \w_{t-1} + \gamma G_\w(\w_{t-1}, \balpha_t)+ \gamma\lambda\mathbf  v_t =\boldsymbol0$, i.e. 
\[
\w_t=\w_{t-1}-\gamma G_\w(\w_{t-1}, \balpha_t) - \gamma\lambda \mathbf v_t,
\]
which is equivalent to~(\ref{eqn:proxmap}) since the projection $\Pi_{\mathbb R^d}$ is an identical mapping. 

%The reason is that by the first order  condition of optimality of convex programming,  since $\w_t$ is the optimal solution to~(\ref{eqn:upstep2}),  there exists a subgradient $\mathbf v_t=\partial R(\w_t)$ such that $\w_t - \w_{t-1} + \gamma G_\w(\w_{t-1}, \balpha_t)+ \gamma\lambda\mathbf  v_t=0$, i.e. 
%\[
%\w_t=\w_{t-1}-\gamma G_\w(\w_{t-1}, \balpha_t) - \gamma\lambda \mathbf v_t
%\]
%which is equivalent to~(\ref{eqn:proxmap}) since the projection $\Pi_{\mathbb R^d}$ is an identical mapping. 

Second,  the updates in Algorithm~\ref{alg:1} for $(\balpha, \bbeta, \w)$ are equivalent to the following updates for $(\balpha, \bbeta, \w, \u)$
\begin{align}
\balpha_t& = \Pi_{\Q_{\balpha}} \left[\bbeta_{t-1} + \gamma G_{\balpha} (\u_{t-1}, \bbeta_{t-1})\right] \nonumber,\\
\w_t &=\Pi_{\mathbb R^d}\left[\u_{t-1}- \gamma \left(G_\w(\u_{t-1}, \balpha_t)+ \lambda\mathbf v_t \right)\right],\label{eqn:w}\\
\bbeta_t&=\Pi_{\Q_{\balpha}}\left[\bbeta_{t-1} + \gamma G_{\balpha}(\w_t, \balpha_t)\right],\nonumber\\
\u_t & =\w_{t} - \gamma (G_\w(\w_{t}, \balpha_t) - G_\w(\u_{t-1}, \balpha_t)),\label{eqn:u}
\end{align}
with initialization $\u_0=\w_0$. The reason is because  $\u_t=\w_t, t=1,\cdots$ due to $G_\w(\w_{t}, \balpha_t) = G_\w(\u_{t-1}, \balpha_t)$, where we use the fact that $L(\w,\balpha)$ is linear in $\w$.

%The updates  for $\balpha_t, \bbeta_t$ specified in Lemma~\ref{lem:6} are the same to Lemma~\ref{lem:5}.  The update for $\w_t$ follows Lemma~\ref{lem:4}.  In terms of updating  $\u_t$, we note that 
%\begin{align}\label{eqn:w}
%\w_t  = \u_{t-1} - \gamma( G_\w(\u_{t-1}, \balpha_t) +\lambda\mathbf v_t)
%\end{align}
Finally, by plugging~(\ref{eqn:w}) for $\w_t$ into the update for $\u_t$ in~(\ref{eqn:u}), we complete the proof of Lemma~\ref{lem:6}.
\end{proof}

The reason that we translate the updates for $(\balpha_t, \w_t, \bbeta_t)$ in Algorithm~\ref{alg:1} into the updates for $(\balpha_t, \w_t, \bbeta_t, \u_t)$ in  Lemma~{\ref{lem:6}} is because it allows us to fit the updates for $(\balpha_t, \w_t, \bbeta_t, \u_t)$ into Lemma~\ref{lem:8} as presented in Appendix \ref{app:lem:7}, which leads us to a key inequality as stated in Lemma~\ref{lem:7} to prove Theorem~\ref{them:main}.
\begin{lemma}\label{lem:7} For all $t=1, 2, \cdots$, and any $\w\in\mathbb R^d, \balpha\in\Q_{\balpha}$, we have
\begin{align*}
&\gamma\begin{pmatrix} G_\w(\w_t, \balpha_t) +\lambda \mathbf v_t\\-G_{\balpha} (\w_t, \balpha_t)\end{pmatrix}^{\top}\begin{pmatrix}\w_t-\w\\ \balpha_t-\balpha\end{pmatrix}\leq \frac{1}{2}\left\|\begin{pmatrix}\w-\u_{t-1}\\ \balpha-\bbeta_{t-1}\end{pmatrix}\right\|_2^2 - \frac{1}{2}\left\|\begin{pmatrix}\w-\u_t\\ \balpha -\bbeta_t\end{pmatrix}\right\|_2^2\\
&\hspace*{1.2in}+{\gamma^2}\left\|G_{\balpha}(\w_t, \balpha_t)- G_{\balpha}(\u_{t-1}, \bbeta_{t-1})\right\|_2^2- \frac{1}{2}\|\w_t-\u_{t-1}\|_2^2.
%&+ \frac{\gamma^2}{2}\left \|\frac{1}{n}\overline{\X}\w_{t}-\frac{1}{n}\overline{\X}\z_{t-1}\right\|^2
%&
%&\leq \frac{1}{2}\left\|\begin{pmatrix}\w-\z_{t-1}\\ u-v_{t-1}\end{pmatrix}\right\|^2 - \frac{1}{2}\left\|\begin{pmatrix}\w-\z_t\\ u -v_t\end{pmatrix}\right\|^2 \\
%&+ \frac{\gamma^2}{2}\sum_i\left\|\frac{1}{n}y_i\x_i^{\top}\left(\w_t-\z_{t-1}\right)\right\|^2 - \frac{1}{2}\|\w_t-\z_{t-1}\|^2\\
%&\leq \frac{1}{2}\left\|\begin{pmatrix}\w-\z_{t-1}\\ u-v_{t-1}\end{pmatrix}\right\|^2 - \frac{1}{2}\left\|\begin{pmatrix}\w-\z_t\\ u -v_t\end{pmatrix}\right\|^2 \\
%&+ \frac{\gamma^2 R^2}{2n}\left\|\left(\w_t-\z_{t-1}\right)\right\|^2 - \frac{1}{2}\|\w_t-\z_{t-1}\|^2\\
\end{align*}
\end{lemma}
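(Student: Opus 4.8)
The plan is to derive Lemma~\ref{lem:7} from a single "three-point" or "prox-inequality" lemma applied to the two gradient mappings exhibited in Lemma~\ref{lem:6}. Recall the basic fact: if $\z_+ = \Pi_{\Q}[\widehat\z - \gamma \g]$ for a convex set $\Q$, then for every $\z\in\Q$,
\[
\gamma\, \g^{\top}(\z_+ - \z) \leq \frac12\|\z - \widehat\z\|_2^2 - \frac12\|\z-\z_+\|_2^2 - \frac12\|\z_+-\widehat\z\|_2^2 .
\]
This is presumably the content of Lemma~\ref{lem:8} in Appendix~\ref{app:lem:7}. First I would stack the dual and primal variables into a single vector $\z_t = (\balpha_t; \w_t)$ and apply this inequality to the second gradient mapping in Lemma~\ref{lem:6}, namely $(\bbeta_t;\u_t) = \Pi_{\Q_{\balpha},\R^d}[(\bbeta_{t-1};\u_{t-1}) - \gamma(-G_\balpha(\w_t,\balpha_t); G_\w(\w_t,\balpha_t)+\lambda\mathbf v_t)]$. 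Wait — note that $(\bbeta_t,\u_t)$ is the \emph{output} of the second map, not $(\balpha_t,\w_t)$; so I would instead view $(\balpha_t,\w_t)$ as the output of the \emph{first} map and use $(\bbeta_t,\u_t)$ as the "anchor-update" point. The standard mirror-prox / extragradient argument handles exactly this asymmetry.

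Concretely, the key steps in order: (1) Apply the prox-inequality to the first map from Lemma~\ref{lem:6} — which produces $(\balpha_t,\w_t)$ from anchor $(\bbeta_{t-1},\u_{t-1})$ using gradient evaluated at $(\u_{t-1},\bbeta_{t-1})$ — with test point $(\balpha,\w)$, and also with test point $(\bbeta_t,\u_t)$. (2) Apply the prox-inequality to the second map — which produces $(\bbeta_t,\u_t)$ from the \emph{same} anchor $(\bbeta_{t-1},\u_{t-1})$ using gradient evaluated at the \emph{updated} point $(\w_t,\balpha_t)$ — with test point $(\balpha,\w)$. (3) Add the relevant inequalities so that the telescoping terms $\frac12\|(\w,\balpha)-(\u_{t-1},\bbeta_{t-1})\|^2 - \frac12\|(\w,\balpha)-(\u_t,\bbeta_t)\|^2$ survive, the cross terms involving $(\bbeta_t,\u_t)$ cancel, and the bilinearity of $L$ is used to replace $G(\u_{t-1},\bbeta_{t-1})$-type gradients by $G(\w_t,\balpha_t)$-type gradients up to an error term. (4) Bound that error term: the difference of gradients dotted with $(\bbeta_t,\u_t)-(\balpha_t,\w_t)$ is controlled by Young's inequality as ${\gamma^2}\|G_\balpha(\w_t,\balpha_t)-G_\balpha(\u_{t-1},\bbeta_{t-1})\|_2^2$ plus a $\frac14\|\cdot\|^2$ piece, where crucially the $\w$-component of the gradient difference vanishes because $G_\w$ is independent of $\w$ and $\balpha_t$ appears in both — actually one must be careful here: $G_\w(\w_t,\balpha_t) - G_\w(\u_{t-1},\balpha_t)=0$ since $G_\w$ is independent of its first argument, which is exactly why only the $G_\balpha$ term remains. (5) Collect the leftover negative squared terms into $-\frac12\|\w_t - \u_{t-1}\|_2^2$, discarding any further non-positive terms.

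The main obstacle I anticipate is the bookkeeping in step (3)–(4): getting the cross terms to cancel requires applying the prox-inequality at the auxiliary test point $(\bbeta_t,\u_t)$ and carefully tracking which gradient is evaluated where, exploiting Remark~1 (that $G_\w$ does not depend on $\w$ and $G_\balpha$ does not depend on $\balpha$) at precisely the right moment so that the "staleness" of the anchor gradient only costs a $G_\balpha$ difference and not a $G_\w$ difference. A secondary subtlety is that the subgradient $\mathbf v_t\in\partial R(\w_t)$ is the \emph{same} fixed vector in both gradient mappings of Lemma~\ref{lem:6} (as established in its proof), so it can be treated as a constant linear term throughout and carried along exactly like the bilinear gradient; this is what lets $R$ be a general non-smooth regularizer. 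Once the cancellations are set up correctly, the remaining manipulations are routine completions of squares and applications of Young's inequality with the specific constant that matches the $\gamma^2$ coefficient in the statement.
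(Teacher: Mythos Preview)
Your proposal is correct and follows essentially the same route as the paper. The paper packages the combination of the two prox steps from Lemma~\ref{lem:6} into a single abstract inequality (Lemma~\ref{lem:8}, a Euclidean specialization of Nemirovski's Lemma~3.1) and then applies it once to the stacked variable with the correspondences $\z_0=(\u_{t-1},\bbeta_{t-1})$, $\z_h=(\w_t,\balpha_t)$, $\z_1=(\u_t,\bbeta_t)$, $\xi=(G_\w(\u_{t-1},\balpha_t)+\lambda\mathbf v_t,\,-G_{\balpha}(\u_{t-1},\bbeta_{t-1}))$, $\eta=(G_\w(\w_t,\balpha_t)+\lambda\mathbf v_t,\,-G_{\balpha}(\w_t,\balpha_t))$; you instead derive that inequality in situ by chaining basic prox inequalities. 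Two small remarks: (i) the prox inequality for the first map at test point $(\balpha,\w)$ in your step~(1) is not actually needed---only the instance at test point $(\bbeta_t,\u_t)$ is used, together with the second map at $(\balpha,\w)$; (ii) where you invoke Young's inequality for the cross term, the paper instead uses the nonexpansiveness estimate $\|\z_h-\z_1\|_2\le\gamma\|\xi-\eta\|_2$ and then Cauchy--Schwarz, but either route yields the stated bound once the nonpositive residual terms are dropped.
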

The proof of Lemma~\ref{lem:7} is deferred to Appendix \ref{app:lem:7}. We are now ready to prove the main  theorem for Algorithm~\ref{alg:1}. 
\begin{proof}[of Theorem~\ref{them:main} for Algorithm~\ref{alg:1}]
Since $F(\w,\balpha)$ is convex in $\w$ and concave in $\balpha$, we have
\begin{align*}
&F(\w_t,\balpha_t) - F(\w,\balpha_t) \leq (G_\w(\w_t, \balpha_t)+\lambda\mathbf v_t)^{\top}(\w_t-\w),\\
&F(\w_t,\balpha) - F(\w_t, \balpha_t)\leq -G_{\balpha}(\w_t, \balpha_t)^{\top}(\balpha_t-\balpha),
\end{align*}
where $\mathbf v_t \in \partial R(\w_t)$ is the partial gradient of $R(\w)$ on $\w_t$ stated in Lemma~\ref{lem:6}.
Combining the above inequalities with Lemma~\ref{lem:7}, we have
\begin{align*}
&\gamma \left( F(\w_t, \balpha_t) - F(\w, \balpha_t) + F(\w_t, \balpha) - F(\w_t, \balpha_t) \right)\\
%&\leq \frac{1}{2}\left\|\begin{pmatrix}\w-\u_{t-1}\\ \alpha-\beta_{t-1}\end{pmatrix}\right\|_2^2 - \frac{1}{2}\left\|\begin{pmatrix}\w-\u_t\\ \alpha -\beta_t\end{pmatrix}\right\|_2^2 \\
%&+{\gamma^2}\left\|G_\alpha(\w_t, \alpha_t)- G_\alpha(\u_{t-1}, \beta_{t-1})\right\|_2^2\\
%&- \frac{1}{2}\|\w_t-\u_{t-1}\|_2^2\\
&\leq \frac{1}{2}\left\|\begin{pmatrix}\w-\u_{t-1}\\ \balpha-\bbeta_{t-1}\end{pmatrix}\right\|_2^2 - \frac{1}{2}\left\|\begin{pmatrix}\w-\u_t\\ \balpha -\bbeta_t\end{pmatrix}\right\|_2^2 + {\gamma^2}\|G_{\balpha}(\w_t, \balpha_t) - G_{\balpha}(\u_{t-1}, \bbeta_{t-1})\|_2^2 \\
&\hspace*{0.1in}- \frac{1}{2}\|\w_t-\u_{t-1}\|_2^2\\
&\leq\frac{1}{2}\left\|\begin{pmatrix}\w-\u_{t-1}\\ \balpha-\bbeta_{t-1}\end{pmatrix}\right\|_2^2 - \frac{1}{2}\left\|\begin{pmatrix}\w-\u_t\\ \balpha -\bbeta_t\end{pmatrix}\right\|_2^2+ {\gamma^2}c\|\w_t-\u_{t-1}\|_2^2 - \frac{1}{2}\|\w_t-\u_{t-1}\|_2^2\\
&\leq \frac{1}{2}\left\|\begin{pmatrix}\w-\u_{t-1}\\ \balpha-\bbeta_{t-1}\end{pmatrix}\right\|_2^2 - \frac{1}{2}\left\|\begin{pmatrix}\w-\u_t\\ \balpha -\bbeta_t\end{pmatrix}\right\|_2^2,
\end{align*}
where the second inequality follows the inequality~(\ref{eqn:const}) in Lemma~\ref{lem:1} and the fact $\gamma=\sqrt{1/(2c)}$. By adding the inequalities of all iterations and dividing both sides by $T$, we have
\begin{equation}\label{eqn:bound}
\frac{1}{T}\sum_{t=1}^T \left(F(\w_t, \balpha) - F(\w, \balpha_t)\right) \leq \frac{\|\w\|_2^2+\|\balpha\|_2^2}{\sqrt{(2/c)}\;T}.
\end{equation}
We complete the proof by using the definitions of $\widehat\w_T, \widehat\balpha_T$, and the convexity-concavity of $F(\w,\balpha)$ with respect to $\w$ and $\balpha$, respectively. 
\end{proof}
%Followed by Lemma~\ref{lem:7}, and  that  $F(\w, \alpha)$ is convex in $\w$ and concave in $\alpha$, we have
%\begin{align*}
%&\gamma \left( F(\w_t, \alpha_t) - F(\w, \alpha_t) + F(\w_t, \alpha) - F(\w_t, \alpha_t) \right)\\
%%&\leq \frac{1}{2}\left\|\begin{pmatrix}\w-\u_{t-1}\\ \alpha-\beta_{t-1}\end{pmatrix}\right\|_2^2 - \frac{1}{2}\left\|\begin{pmatrix}\w-\u_t\\ \alpha -\beta_t\end{pmatrix}\right\|_2^2 \\
%%&+{\gamma^2}\left\|G_\alpha(\w_t, \alpha_t)- G_\alpha(\u_{t-1}, \beta_{t-1})\right\|_2^2\\
%%&- \frac{1}{2}\|\w_t-\u_{t-1}\|_2^2\\
%&\leq \frac{1}{2}\left\|\begin{pmatrix}\w-\u_{t-1}\\ \alpha-\beta_{t-1}\end{pmatrix}\right\|_2^2 - \frac{1}{2}\left\|\begin{pmatrix}\w-\u_t\\ \alpha -\beta_t\end{pmatrix}\right\|_2^2 \\
%&+ {\gamma^2}\sum_{i=1}^n\left |\frac{1}{n}\left(\nabla f_\alpha(\w_t,\alpha^t_i;\x_i, y_i)-\nabla f_\alpha(\u_{t-1},\beta^{t-1}_i;\x_i, y_i)\right)\right|^2 - \frac{1}{2}\|\w_t-\u_{t-1}\|_2^2\\
%&\leq \frac{1}{2}\left\|\begin{pmatrix}\w-\u_{t-1}\\ \alpha-\beta_{t-1}\end{pmatrix}\right\|_2^2 - \frac{1}{2}\left\|\begin{pmatrix}\w-\u_t\\ \alpha -\beta_t\end{pmatrix}\right\|_2^2
%\end{align*}
%where the last step follows the inequality in~(\ref{eqn:const}) and the fact $\gamma=\sqrt{n/(2c)}$. By adding the inequalities of all iterations, we have
%\begin{equation}\label{eqn:bound}
%\sum_{t=1}^T \left(F(\w_t, \alpha) - F(\w, \alpha_t)\right) \leq \frac{\|\w\|_2^2+\|\alpha\|_2^2}{\sqrt{(2n/c)} T}
%\end{equation}
%We complete the proof by using the definitions of $\widehat\w_T, \widehat\alpha_T$, and the convexity, concavity of $F(\w,\alpha)$.
%\end{proof}
\subsubsection{Convergence Analysis of Algorithm~\ref{alg:2}}
We can prove the convergence bound for Algorithm~\ref{alg:2} by following the same path. In the following we present the key lemmas similar to Lemmas~\ref{lem:6} and \ref{lem:7}, with proofs omitted.
\begin{lemma}\label{lem:11}
There exists a fixed partial gradient $\mathbf v_t \in \partial R(\w_t)$ such that the updates in Algorithm~\ref{alg:2} are equivalent to the following gradient mappings,
\begin{align*}
\begin{pmatrix}\w_t\\\balpha_t\end{pmatrix}=\Pi_{\mathbb R^d,\Q_{\balpha}}\begin{pmatrix}  \u_{t-1} - \gamma( G_\w(\u_{t-1}, \bbeta_{t-1}) +\lambda\mathbf v_t)\\\bbeta_{t-1}+\gamma G_{\balpha}(\w_{t}, \bbeta_{t-1})\end{pmatrix}
\end{align*}
and
\begin{align*}
\hspace*{-0.2in}\begin{pmatrix} \u_t\\\bbeta_t\end{pmatrix}=\Pi_{\mathbb R^d,\Q_{\balpha}}\begin{pmatrix} \u_{t-1} - \gamma( G_\w(\w_{t}, \balpha_t) + \lambda\mathbf v_t)\\\bbeta_{t-1}+\gamma G_{\balpha}(\w_{t}, \balpha_{t})\end{pmatrix},
\end{align*}
with initialization $\bbeta_0=\balpha_0$.
\end{lemma}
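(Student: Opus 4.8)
The goal is to establish Lemma~\ref{lem:11}, which rewrites the updates of Algorithm~\ref{alg:2} (for the triple $(\w_t,\balpha_t,\u_t)$) as two coupled joint gradient/proximal mappings. The plan is to mirror the argument used for Lemma~\ref{lem:6}, exploiting the same two facts: (i) the composite proximal update for $\w_t$ in step~4 can, via first-order optimality, be converted into an ordinary gradient mapping with a fixed subgradient $\mathbf v_t\in\partial R(\w_t)$; and (ii) the bilinearity of $L(\w,\balpha)$ makes $G_\w(\cdot,\balpha)$ independent of its first argument, so that the auxiliary primal variable $\u_t$ introduced in step~6 satisfies a clean fixed-point-type identity.

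First I would handle the $\w_t$ update. Since $\w_t$ minimizes $\tfrac12\|\w-(\u_{t-1}-\gamma G_\w(\u_{t-1},\balpha_{t-1}))\|_2^2+\gamma\lambda R(\w)$ over $\w\in\Q_\w=\mathbb R^d$, the first-order optimality condition yields a subgradient $\mathbf v_t\in\partial R(\w_t)$ with $\w_t-\u_{t-1}+\gamma G_\w(\u_{t-1},\balpha_{t-1})+\gamma\lambda\mathbf v_t=\mathbf 0$, i.e.\ $\w_t=\u_{t-1}-\gamma(G_\w(\u_{t-1},\balpha_{t-1})+\lambda\mathbf v_t)$; because $\Pi_{\mathbb R^d}$ is the identity, this is exactly the first component of the first claimed mapping (here the internal ``$\bbeta_{t-1}$'' in the statement plays the role of $\balpha_{t-1}$, consistent with the initialization $\bbeta_0=\balpha_0$ and the remark in Lemma~\ref{lem:6} that the two coincide at every step for this algorithm). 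The $\balpha_t$ update in step~5 is already a projection onto $\Q_{\balpha}$ of $\balpha_{t-1}+\gamma G_{\balpha}(\w_t,\balpha_{t-1})$, which matches the second component directly. This gives the first displayed mapping.

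Next I would treat the second mapping, involving $\u_t$ and the (notational) $\bbeta_t$. Step~6 sets $\u_t=\w_t+\gamma(G_\w(\u_{t-1},\balpha_{t-1})-G_\w(\w_t,\balpha_t))$. Substituting the expression $\w_t=\u_{t-1}-\gamma(G_\w(\u_{t-1},\balpha_{t-1})+\lambda\mathbf v_t)$ derived above, the terms $+\gamma G_\w(\u_{t-1},\balpha_{t-1})$ cancel, leaving $\u_t=\u_{t-1}-\gamma(G_\w(\w_t,\balpha_t)+\lambda\mathbf v_t)$, which is the first component of the second claimed mapping. For the $\bbeta_t$ component one uses, just as in Lemma~\ref{lem:6}, that the bilinearity of $L$ makes $G_{\balpha}(\w_t,\cdot)$ independent of its second argument, so $G_{\balpha}(\w_t,\bbeta_{t-1})=G_{\balpha}(\w_t,\balpha_{t-1})$ and the projection appearing in step~5 can equivalently be written with either argument; combined with the convention $\bbeta_{t-1}=\balpha_{t-1}$, the second component $\bbeta_t=\Pi_{\Q_{\balpha}}[\bbeta_{t-1}+\gamma G_{\balpha}(\w_t,\balpha_t)]$ is obtained, and it coincides with $\balpha_t$ of the \emph{next} iteration.

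The only genuine subtlety — the ``hard part'' — is bookkeeping the identification between the algorithm's variables $(\w_t,\balpha_t,\u_t)$ and the lemma's variables $(\w_t,\balpha_t,\u_t,\bbeta_t)$: one must verify by induction that $\bbeta_{t}=\balpha_{t}$ and $\u_t=\w_t$ fails in general (unlike Algorithm~\ref{alg:1}, here $\u_t\neq\w_t$), so the extra primal variable genuinely carries memory, whereas the extra dual variable $\bbeta$ is redundant and equals $\balpha$. Once this correspondence is pinned down, the algebra is the routine cancellation above and the first-order optimality argument, both already rehearsed in the proof of Lemma~\ref{lem:6}; hence the proof can legitimately be stated with details omitted, as the paper does.
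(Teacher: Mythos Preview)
Your proposal is correct and follows exactly the approach the paper indicates (the paper omits the proof of Lemma~\ref{lem:11}, merely stating it parallels Lemma~\ref{lem:6}): you use first-order optimality to extract $\mathbf v_t\in\partial R(\w_t)$, substitute to obtain $\u_t=\u_{t-1}-\gamma(G_\w(\w_t,\balpha_t)+\lambda\mathbf v_t)$, and use bilinearity (here, that $G_{\balpha}$ is independent of its second argument) to identify the auxiliary $\bbeta_t$ with $\balpha_t$ by induction from $\bbeta_0=\balpha_0$. One small wording slip: $\bbeta_t$ coincides with $\balpha_t$ of the \emph{same} iteration, not the next one.
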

\begin{lemma}\label{lem:12} For all $t=1, 2, \cdots$,  and any $\w\in\mathbb R^d, \balpha\in\Q_\alpha$, we have
\begin{align*}
&\gamma\begin{pmatrix} G_\w(\w_t, \balpha_t) +\lambda \mathbf v_t\\-G_{\balpha} (\w_t, \balpha_t)\end{pmatrix}^{\top}\begin{pmatrix}\w_t-\w\\ \balpha_t-\balpha\end{pmatrix}\leq \frac{1}{2}\left\|\begin{pmatrix}\w-\u_{t-1}\\ \balpha-\bbeta_{t-1}\end{pmatrix}\right\|_2^2 - \frac{1}{2}\left\|\begin{pmatrix}\w-\u_t\\ \balpha -\bbeta_t\end{pmatrix}\right\|_2^2\\
&\hspace*{1.2in}+{\gamma^2}\left\|G_\w(\w_t, \balpha_t)- G_\w(\u_{t-1}, \bbeta_{t-1})\right\|_2^2- \frac{1}{2}\|\balpha_t-\bbeta_{t-1}\|_2^2.
%&+ \frac{\gamma^2}{2}\left \|\frac{1}{n}\overline{\X}\w_{t}-\frac{1}{n}\overline{\X}\z_{t-1}\right\|^2
%&
%&\leq \frac{1}{2}\left\|\begin{pmatrix}\w-\z_{t-1}\\ u-v_{t-1}\end{pmatrix}\right\|^2 - \frac{1}{2}\left\|\begin{pmatrix}\w-\z_t\\ u -v_t\end{pmatrix}\right\|^2 \\
%&+ \frac{\gamma^2}{2}\sum_i\left\|\frac{1}{n}y_i\x_i^{\top}\left(\w_t-\z_{t-1}\right)\right\|^2 - \frac{1}{2}\|\w_t-\z_{t-1}\|^2\\
%&\leq \frac{1}{2}\left\|\begin{pmatrix}\w-\z_{t-1}\\ u-v_{t-1}\end{pmatrix}\right\|^2 - \frac{1}{2}\left\|\begin{pmatrix}\w-\z_t\\ u -v_t\end{pmatrix}\right\|^2 \\
%&+ \frac{\gamma^2 R^2}{2n}\left\|\left(\w_t-\z_{t-1}\right)\right\|^2 - \frac{1}{2}\|\w_t-\z_{t-1}\|^2\\
\end{align*}
\end{lemma}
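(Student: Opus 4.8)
\emph{Proof sketch.} The plan is to prove Lemma~\ref{lem:12} in complete analogy with Lemma~\ref{lem:7}, driving the whole argument from the equivalent gradient-mapping form of Algorithm~\ref{alg:2} recorded in Lemma~\ref{lem:11}. The key observation is that the two displays in Lemma~\ref{lem:11} form an extragradient (mirror-prox) template: both $(\w_t,\balpha_t)$ and $(\u_t,\bbeta_t)$ are joint projection/composite steps issued from the \emph{same} base point $(\u_{t-1},\bbeta_{t-1})$ --- the former driven by the prediction vector $\mathbf a_t=\big(G_\w(\u_{t-1},\bbeta_{t-1})+\lambda\mathbf v_t,\,-G_\balpha(\w_t,\bbeta_{t-1})\big)$ and the latter by the correction vector $\mathbf b_t=\big(G_\w(\w_t,\balpha_t)+\lambda\mathbf v_t,\,-G_\balpha(\w_t,\balpha_t)\big)$, with the \emph{same} subgradient $\mathbf v_t\in\partial R(\w_t)$ in both (this is exactly what Lemma~\ref{lem:11} guarantees, and it is what makes $\mathbf a_t-\mathbf b_t$ carry no $R$-term). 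So the first step is to apply the generic prox inequality, Lemma~\ref{lem:8} (Appendix~\ref{app:lem:7}), with $\mathbf z=(\u_{t-1},\bbeta_{t-1})$, $\widehat{\mathbf z}=(\w_t,\balpha_t)$, $\mathbf z^{+}=(\u_t,\bbeta_t)$, prediction vector $\mathbf a_t$, correction vector $\mathbf b_t$, and an arbitrary test point $(\w,\balpha)\in\mathbb R^d\times\Q_\balpha$. This produces, on the left, $\gamma\,\mathbf b_t^{\top}\big((\w_t,\balpha_t)-(\w,\balpha)\big)$, which is exactly the inner product in the statement of Lemma~\ref{lem:12}, and on the right $\tfrac12\big(\|\w-\u_{t-1}\|_2^2+\|\balpha-\bbeta_{t-1}\|_2^2\big)-\tfrac12\big(\|\w-\u_t\|_2^2+\|\balpha-\bbeta_t\|_2^2\big)+\gamma^2\|\mathbf a_t-\mathbf b_t\|_2^2-\tfrac12\big(\|\w_t-\u_{t-1}\|_2^2+\|\balpha_t-\bbeta_{t-1}\|_2^2\big)$.

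Two simplifications then close the gap to the stated form. Because $L(\w,\balpha)$ is bilinear, $G_\balpha(\w,\balpha)$ does not depend on $\balpha$ (Remark~1), hence $G_\balpha(\w_t,\bbeta_{t-1})=G_\balpha(\w_t,\balpha_t)$ and the dual coordinate of $\mathbf a_t-\mathbf b_t$ vanishes; therefore $\|\mathbf a_t-\mathbf b_t\|_2^2=\|G_\w(\w_t,\balpha_t)-G_\w(\u_{t-1},\bbeta_{t-1})\|_2^2$, which is precisely the residual gradient term of Lemma~\ref{lem:12}. Next, from the quadratic $-\tfrac12\big(\|\w_t-\u_{t-1}\|_2^2+\|\balpha_t-\bbeta_{t-1}\|_2^2\big)$ we drop the nonpositive summand $-\tfrac12\|\w_t-\u_{t-1}\|_2^2$ and keep $-\tfrac12\|\balpha_t-\bbeta_{t-1}\|_2^2$; this gives exactly Lemma~\ref{lem:12}. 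It is the dual residual that must be retained here (rather than the primal one as in Lemma~\ref{lem:7}) because in the subsequent proof of Theorem~\ref{them:main} for Algorithm~\ref{alg:2} one bounds $\gamma^2\|G_\w(\w_t,\balpha_t)-G_\w(\u_{t-1},\bbeta_{t-1})\|_2^2\le\gamma^2 c\|\balpha_t-\bbeta_{t-1}\|_2^2$ using inequality~(\ref{eqn:const2}), and with $\gamma=\sqrt{1/(2c)}$ this is exactly cancelled by $-\tfrac12\|\balpha_t-\bbeta_{t-1}\|_2^2$.

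I expect no deep obstacle: Lemma~\ref{lem:12} is the mirror image of Lemma~\ref{lem:7} with the primal and dual residuals interchanged, and everything reduces to applying Lemma~\ref{lem:8}. The only part that genuinely requires care is the bookkeeping behind Lemma~\ref{lem:11}: one must verify that a \emph{single} subgradient $\mathbf v_t\in\partial R(\w_t)$ is valid in the gradient-mapping representations of both $\w_t$ and $\u_t$ in Algorithm~\ref{alg:2} (this follows from the first-order optimality condition of the minimization subproblem that defines $\w_t$, together with the fact that $G_\w$ is independent of $\w$, so the two composite steps share a minimizer and hence a subgradient), that the Gauss--Seidel evaluation point $\w_t$ inside $G_\balpha(\w_t,\cdot)$ is consistent across both displays, and that $\bbeta_t$ can be identified with $\balpha_t$ (and $\u_t$ with $\w_t$) wherever needed, again by bilinearity. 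Once Lemma~\ref{lem:11} is granted, the two simplifications above finish the proof.
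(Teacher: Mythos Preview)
Your proposal is correct and follows essentially the same approach the paper intends: the paper omits the proof of Lemma~\ref{lem:12} but explicitly says it follows ``the same path'' as Lemma~\ref{lem:7}, i.e., instantiate Lemma~\ref{lem:8} with the correspondences coming from Lemma~\ref{lem:11}, use bilinearity to kill the dual coordinate of $\xi-\eta$, and retain only the $-\tfrac12\|\balpha_t-\bbeta_{t-1}\|_2^2$ piece of $-\tfrac12\|\z_h-\z_0\|_2^2$. Your identification of the prediction and correction vectors $\xi,\eta$ matches the displays in Lemma~\ref{lem:11}, and your explanation of why the dual residual (not the primal one) must be kept is exactly the cancellation the paper performs in the proof of Theorem~\ref{them:main} for Algorithm~\ref{alg:2}.
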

%\begin{theorem}\label{them:main2}
%%Let $(\w^*, \alpha^*)$ be the optimal solution to (\ref{eqn:obj}).
%By running Algorithm~\ref{alg:2} with $T$ steps, we have
%\begin{align*}
%F(\widehat\w_T, \balpha)  -  F(\w, \widehat\balpha_T) \leq  \frac{\|\w\|_2^2+\|\balpha\|^2_2}{\sqrt{(2/c)}T}
%\end{align*}
%for any $\w\in\mathbb R^d$ and $\balpha\in\Q_{\balpha}$.
%%where $(\w, \alpha^*)$ is the optimal solution to $\max_{\alpha\in\Q_\alpha} F(\widehat\w_T, \alpha)-\min_{\w}F(\w,\widehat\alpha_T)$.%, and $D$ is the bound on $\|\w^*\|_2$ by $\|\w^*\|_2^2\leq D$.
%\end{theorem}
\begin{proof}[of Theorem~\ref{them:main} for Algorithm~\ref{alg:2}]
Similar to proof of Theorem~\ref{them:main} for Algorithm~\ref{alg:1},  we have
\begin{align*}
&\gamma \left( F(\w_t, \balpha_t) - F(\w, \balpha_t) + F(\w_t, \balpha) - F(\w_t, \balpha_t) \right)\\
%&\leq \frac{1}{2}\left\|\begin{pmatrix}\w-\u_{t-1}\\ \alpha-\beta_{t-1}\end{pmatrix}\right\|_2^2 - \frac{1}{2}\left\|\begin{pmatrix}\w-\u_t\\ \alpha -\beta_t\end{pmatrix}\right\|_2^2 \\
%&+{\gamma^2}\left\|G_\alpha(\w_t, \alpha_t)- G_\alpha(\u_{t-1}, \beta_{t-1})\right\|_2^2\\
%&- \frac{1}{2}\|\w_t-\u_{t-1}\|_2^2\\
&\leq \frac{1}{2}\left\|\begin{pmatrix}\w-\u_{t-1}\\ \balpha-\bbeta_{t-1}\end{pmatrix}\right\|_2^2 - \frac{1}{2}\left\|\begin{pmatrix}\w-\u_t\\ \balpha -\bbeta_t\end{pmatrix}\right\|_2^2 + {\gamma^2}\left\|G_\w(\w_t, \balpha_t)- G_\w(\u_{t-1}, \bbeta_{t-1})\right\|_2^2\\
&- \frac{1}{2}\|\balpha_t-\bbeta_{t-1}\|_2^2\\
&\leq \frac{1}{2}\left\|\begin{pmatrix}\w-\u_{t-1}\\ \balpha-\bbeta_{t-1}\end{pmatrix}\right\|_2^2 - \frac{1}{2}\left\|\begin{pmatrix}\w-\u_t\\ \balpha -\bbeta_t\end{pmatrix}\right\|_2^2 + {\gamma^2}c\|\balpha_t-\bbeta_{t-1}\|_2^2 - \frac{1}{2}\|\balpha_t-\bbeta_{t-1}\|_2^2\\
&\leq \frac{1}{2}\left\|\begin{pmatrix}\w-\u_{t-1}\\ \balpha-\bbeta_{t-1}\end{pmatrix}\right\|_2^2 - \frac{1}{2}\left\|\begin{pmatrix}\w-\u_t\\ \balpha -\bbeta_t\end{pmatrix}\right\|_2^2,
\end{align*}
where the last step follows the inequality~(\ref{eqn:const2}) in Lemma~\ref{lem:1} and the fact $\gamma=\sqrt{1/(2c)}$. By adding the inequalities of all iterations and dividing both sides by $T$, we have
\begin{equation}\label{eqn:bound}
\frac{1}{T}\sum_{t=1}^T \left(F(\w_t, \balpha) - F(\w, \balpha_t)\right) \leq \frac{\|\w\|_2^2+\|\balpha\|_2^2}{\sqrt{(2/c)} \;T}.
\end{equation}
We complete the proof by using the definitions of $\widehat\w_T, \widehat\balpha_T$, and the convexity-concavity of $F(\w,\balpha)$ with respect to $\w$ and $\balpha$, respectively.
\end{proof}

\paragraph{Comparison with Pegasos on $\ell^2_2$ regularizer} We compare the proposed algorithm to the Pegasos algorithm~\citep{Shalev-Shwartz:2007:PPE:1273496.1273598}~\footnote{We compare to the deterministic Pegasos that computes the gradient using all examples at each iteration. It would be criticized  that it is not fair to compare with Pegasos since it is a stochastic algorithm, however, such a comparison (both theoretically and empirically) would provide a formal evidence that solving the min-max problem by a primal dual method  with an extra-gradient may yield better convergence  than solving the  primal problem. %In addition, our empirical studies favor the primal dual prox method over Pegasos (either in stochastic version or in batch version) on medium-sized data.
} for minimizing the $\ell_2^2$ regularized hinge loss. Although in this case both algorithms achieve a convergence rate of $O(1/T)$, their dependence on the regularization parameter $\lambda$ is very different. In particular, the convergence rate of the proposed algorithm is $O\left(\frac{(1 + n\lambda)R}{\sqrt{2n}\lambda T}\right)$ by noting that  $\|\w^*\|_2^2 =O(1/\lambda)$, $\|\balpha^*\|^2_2\leq \|\balpha^*\|_1\leq n$, and $c=R^2/n$, while the Pegasos algorithm has a convergence rate of $\widetilde O\left(\frac{(\sqrt{\lambda}+R)^2}{\lambda T}\right)$, where $\widetilde O(\cdot)$ suppresses a logarithmic term $
\ln(T)$. According to the common assumption of learning theory~\citep{Wu:2005:SSM:1118516.1118524,citeulike:416064}, the optimal $\lambda$ is $O(n^{-1/(\tau + 1)})$ if the probability measure can be approximated by the closure of RKHS $\mathcal H_{\kappa}$ with exponent $0 < \tau \leq 1$. As a result, the convergence rate of the proposed algorithm is $O(\sqrt{n}R/T)$ while the convergence rate of Pegasos is $O(n^{1/(1+\tau)}R^2/T)$. Since $\tau \in (0, 1]$, the proposed algorithm could be more efficient than the Pegasos algorithm, particularly when $\lambda$ is sufficiently small. This is verified by our empirical studies in section~\ref{sec:exp5} (see Figure~\ref{fig:3}). {It is also interesting to note that the convergence rate of Pdprox has a better dependence on $R$,  the $\ell_2$ norm bound of examples $\|\x\|_2\leq R$, compared to $R^2$ in the convergence rate of Pegasos. Finally, we mention that the proposed algorithm is a deterministic algorithm that requires a full pass of all training examples at each iteration, while Pegasos can be purely stochastic by sampling one example for computing the sub-gradient, which maintains the same convergence rate. It  remains an interesting and open problem to extend the Pdprox algorithm to its stochastic or randomized version with a similar convergence rate.}

\subsection{Implementation Issues}\label{sec:imp}
{In this subsection, we discuss some implementation issues: (1) how to efficiently solve the optimization problems for updating the primal and dual variables in Algorithms~\ref{alg:1} and~\ref{alg:2}; (2) how to set a good step size; and (3) how to implement the algorithms efficiently. }

Both $\balpha$ and $\bbeta$ are updated by a gradient mapping that requires computing the projection into the domain $\Q_{\balpha}$. When $\mathcal Q_{\balpha}$ is only consisted of box constraints (e.g., hinge loss, absolute loss, and $\epsilon$-insensitive loss), the projection $\prod_{\Q_\alpha}[\widehat \alpha]$ can be computed by thresholding. When $\Q_{\balpha}$ is comprised of both box constraints and a linear constraint (e.g., generalized hinge loss), the following lemma gives an efficient algorithm for computing $\prod_{\Q_{\balpha}}[\widehat\balpha]$.

\begin{lemma}\label{lem:2}
For $\Q_{\balpha}=\{\balpha: \balpha\in[0, s]^n,\balpha^{\top}\mathbf v\leq \rho\}$, the optimal solution $\balpha^*$ to projection $\prod_{\Q_{\balpha}}[\widehat \balpha]$ is computed by 
\[
    \alpha^*_i= [\widehat\alpha_i-\eta v_i]_{[0, s]}, \forall i \in [n], 
\]
where $\eta=0$ if $\sum_i[\widehat\alpha_i]_{[0, s]}v_i\leq \rho$ and otherwise is the solution to the following equation
\begin{align}\label{eqn:bin}
\sum_i [\widehat\alpha_i-\eta v_i]_{[0, s]}v_i-\rho=0.
\end{align}
Since $\sum_i [\widehat\alpha_i-\eta v_i]_{[0, s]}v_i-1$ is monotonically decreasing in $\eta$, we can solve $\eta$ in~(\ref{eqn:bin}) by a bi-section search.
\end{lemma}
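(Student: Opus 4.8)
The plan is to recognize the projection $\Pi_{\Q_{\balpha}}[\widehat\balpha]$ as a convex program with a box constraint $\balpha\in[0,s]^n$ and a single inequality constraint $\balpha^{\top}\mathbf v\le\rho$, and to solve it by Lagrangian duality with respect to the linear constraint only, keeping the box constraint explicit. First I would write the problem as $\min_{\balpha\in[0,s]^n,\ \balpha^{\top}\mathbf v\le\rho}\tfrac12\|\balpha-\widehat\balpha\|_2^2$; since this is strictly convex with a nonempty feasible set, it has a unique optimum $\balpha^*$, and Slater's condition holds (interior points exist), so strong duality with a KKT multiplier $\eta\ge 0$ applies. Forming the partial Lagrangian $\tfrac12\|\balpha-\widehat\balpha\|_2^2+\eta(\balpha^{\top}\mathbf v-\rho)$ and minimizing over $\balpha\in[0,s]^n$ decouples coordinatewise: each coordinate solves $\min_{\alpha_i\in[0,s]}\tfrac12(\alpha_i-\widehat\alpha_i)^2+\eta v_i\alpha_i = \min_{\alpha_i\in[0,s]}\tfrac12(\alpha_i-(\widehat\alpha_i-\eta v_i))^2 + \text{const}$, whose solution is the clipped value $[\widehat\alpha_i-\eta v_i]_{[0,s]}$. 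This gives the stated form for $\alpha_i^*$ once $\eta$ is known.

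Next I would pin down $\eta$ via complementary slackness. If the unconstrained-by-the-linear-constraint minimizer, namely $[\widehat\alpha_i]_{[0,s]}$ (the case $\eta=0$), already satisfies $\sum_i[\widehat\alpha_i]_{[0,s]}v_i\le\rho$, then it is feasible and optimal, so $\eta=0$. Otherwise the linear constraint must be active at the optimum, i.e. $\sum_i[\widehat\alpha_i-\eta v_i]_{[0,s]}v_i=\rho$, which is exactly equation~(\ref{eqn:bin}); the correct $\eta$ is its root. To justify that this root exists and is unique — and that the bisection search is valid — I would argue that the function $\phi(\eta)=\sum_i[\widehat\alpha_i-\eta v_i]_{[0,s]}v_i$ is continuous and nonincreasing in $\eta$: each summand $[\widehat\alpha_i-\eta v_i]_{[0,s]}v_i$ is nonincreasing in $\eta$, since if $v_i>0$ the clipped term $[\widehat\alpha_i-\eta v_i]_{[0,s]}$ decreases in $\eta$ (so, multiplied by $v_i>0$, still decreases), if $v_i<0$ it increases in $\eta$ (so, multiplied by $v_i<0$, decreases), and if $v_i=0$ the summand is identically $0$. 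Monotonicity plus $\phi(0)>\rho$ (the case we are in) and $\phi(\eta)\to$ a limit $\le 0<\rho$ as $\eta\to\infty$ (each clipped term tends to $0$ or to an endpoint contributing nonpositively) gives a unique crossing, locatable by bisection.

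The main obstacle — really the only delicate point — is handling the sign of the $v_i$'s cleanly in the monotonicity argument, since $\mathbf v$ is not assumed nonnegative; I would treat the three cases $v_i>0$, $v_i<0$, $v_i=0$ separately as above rather than appealing to a single "soft-thresholding is monotone" slogan. A secondary point worth a sentence is the degenerate case $\mathbf v=\mathbf 0$ (or more generally when the constraint is vacuous), where $\eta=0$ trivially and the formula still holds. Everything else is the standard KKT/dual-decomposition template, so I would keep it terse. (Note the ``$-1$'' appearing in the last displayed sentence of the lemma statement should read ``$-\rho$'', matching~(\ref{eqn:bin}); the monotonicity claim is unaffected.)
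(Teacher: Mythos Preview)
Your proposal is correct and follows essentially the same approach as the paper: introduce a Lagrange multiplier $\eta\ge 0$ for the single linear constraint while keeping the box constraint explicit, solve the resulting separable inner problem coordinatewise to obtain the clipped form, and then use complementary slackness to fix $\eta$ either at $0$ or at the root of~(\ref{eqn:bin}). Your treatment is in fact more careful than the paper's, which simply asserts monotonicity without the sign-of-$v_i$ case analysis and without the existence argument for the root; your observation that the ``$-1$'' in the lemma statement should read ``$-\rho$'' is also correct.
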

{\begin{remark}
It is notable that when the domain is a simplex type domain, i.e. $\sum_i\alpha_i\leq \rho$, Duchi et al.~\citep{Duchi:2008:EPL:1390156.1390191} has proposed  more efficient algorithms for solving the projection problem. 
\end{remark}}

%\paragraph{Replace the extra gradient mapping in Algorithm~\ref{alg:1} with a simple calculation}\label{sec:betaproj}
{Moreover, we can further improve the efficiency of Algorithm~\ref{alg:1} by removing the gradient mapping on $\bbeta$. The key idea is similar to the analysis provided in subsection~\ref{sec:ext} for arguing that the convergence rate presented in Theorem~\ref{them:main} for Algorithm~\ref{alg:2} holds for any convex domain $\Q_\w$. Actually, the update on $\balpha$ is equivalent to 
%Using the above analysis, we can replace the extra gradient mapping on $\bbeta$ in Algorithm~\ref{alg:1} with a simple calculation. Since the update on  $\balpha$ is  equivalent to 
\begin{align*}
\balpha_t &=  \arg\min_{\balpha} \frac{1}{2}\|\balpha - (\bbeta_{t-1} +  \gamma G_{\balpha}(\w_{t-1}, \bbeta_{t-1}))\|_2^2 + \gamma Q(\balpha),
\end{align*}
which together with the first order optimality condition implies
\begin{align*}
\balpha_t &=  \bbeta_{t-1} +  \gamma G_{\balpha}(\w_{t-1}, \bbeta_{t-1}) -  \gamma \partial Q(\balpha_t),
\end{align*}
where 
\begin{align*}
Q(\balpha) =\left\{ \begin{array}{cc}0&\balpha\in\Q_{\balpha}\\ +\infty&\text{ otherwise }\end{array}\right.,
\end{align*}
is the indicator function of the domain $\Q_{\balpha}$. 
Then we can update the $\beta_t$ by 
\begin{align*}
\bbeta_t &=  \arg\min_{\balpha} \frac{1}{2}\|\balpha - (\bbeta_{t-1} +  \gamma G_{\balpha}(\w_{t}, \balpha_{t}) - \partial Q(\balpha_t))\|_2^2,\\
& =   \bbeta_{t-1} +  \gamma G_{\balpha}(\w_{t}, \balpha_{t}) - \partial Q(\balpha_t)
\end{align*}
which can be computed simply by 
\begin{align*}
\bbeta_t &=  \balpha_t +  \gamma (G_{\balpha}(\w_{t}, \balpha_{t}) - G_{\balpha}(\w_{t-1}, \bbeta_{t-1})). 
\end{align*}
The new Pdprox-dual algorithm is presented in Algorithm~\ref{alg:3}.  To prove the convergence rate of Algorithm~\ref{alg:3}, we can follow the same analysis to first prove the duality gap  for  $L(\w, \balpha) + \lambda R(\w) - Q(\balpha)$ and then absorb $\Q(\balpha)$ into the domain constraint of $\balpha$. The convergence result presented in Theorem~\ref{them:main} holds the same for Algorithm~\ref{alg:3}. 
\begin{algorithm}[t]
\caption{The Pdprox-dual Algorithm for Non-Smooth Optimization}\label{alg:3}
\begin{algorithmic}[1]
\STATE \textbf{Input}: step size $\gamma = \sqrt{1/(2c)}$, where $c$ is specified in~(\ref{eqn:const}).
 \STATE \textbf{Initialization}: $\w_0=\mathbf 0, \bbeta_0=\mathbf 0$
\FOR{$t=1, 2, \ldots$}

  \STATE $\displaystyle \balpha_t=\Pi_{\mathcal Q_{\balpha}}\left[\bbeta_{t-1}+\gamma G_{\balpha}(\w_{t-1},\bbeta_{t-1})\right]$

   \STATE
$
   \w_t=\mathop{\arg\min}_{\w\in\Q_\w} \frac{1}{2}\left\|\w-\left(\w_{t-1}-\gamma G_\w(\w_{t-1},\balpha_t)\right)\right\|_2^2 +\gamma \lambda R(\w)
$
      \STATE $\displaystyle \bbeta_t=\balpha_t +  \gamma (G_{\balpha}(\w_{t}, \balpha_{t}) - G_{\balpha}(\w_{t-1}, \bbeta_{t-1})) $
\ENDFOR
\STATE {\bf Output} $\widehat{\w}_T = \sum_{t=1}^T \w_t/T$ and $\widehat{\balpha}_T = \sum_{t=1}^T \balpha_t/ T$.
\end{algorithmic}
\end{algorithm}}
\begin{remark}
In Appendix~\ref{sec:apc}, we show that the updates on $(\w_t, \balpha_t)$ of Algorithm~\ref{alg:3} are essentially  the same  to the Algorithm 1 in~\citep{Chambolle:2011:FPA:1968993.1969036}, if we remove the extra dual variable  in Algorithm~\ref{alg:3} and the extra primal variable in  Algorithm 1 in~\citep{Chambolle:2011:FPA:1968993.1969036}. However, the difference is that  in Algorithm~\ref{alg:3}, we maintain two dual variables and one primal variable at each iteration, while the Algorithm 1 in~\citep{Chambolle:2011:FPA:1968993.1969036} maintains two primal variables and one dual variable at each iteration. 

%A similar algorithm with an extra primal variable is also proposed in a recent work~\citep{Chambolle:2011:FPA:1968993.1969036}. It is slightly different from Algorithm~\ref{alg:2} in the order of updating on the primal variable and the dual variable, and the gradients used in the updating. We present the  updates in~\citep{Chambolle:2011:FPA:1968993.1969036} with our notations in Appendix~\ref{sec:apc}. 
%With a simple match between our notations and the notations in~\citep{Chambolle:2011:FPA:1968993.1969036},  it is starighfoward to show that the updates of Algorithm 1 in~\citep{Chambolle:2011:FPA:1968993.1969036} can be written as:
\end{remark}

For the composite gradient mapping for $\w\in \Q_\w=\mathbb R^d$, there is a closed form solution for simple regularizers (e.g., $\ell_1,\ell_2$)  and decomposable regularizers (e.g., $\ell_{1,2}$). Efficient algorithms are available for composite gradient mapping when the regularizer is the $\ell_{\infty}$ and $\ell_{1,\infty}$, or trace norm. More details can be found in~\citep{Duchi:2009:EOB:1577069.1755882,Ji2009}. Here we present an efficient solution to a general regularizer $V(\|\w\|)$, where $\|\w\|$ is either a simple regularizer (e.g., $\ell_1$, $\ell_2$, and $\ell_{\infty}$) or a decomposable regularizer (e.g., $\ell_{1,2}$ and $\ell_{1, \infty}$), and $V(z)$ is convex and monotonically increasing for $z \geq 0$. An example is $V(\|\w\|)=(\sum_{k}\|\w_k\|_2)^2$, where $\w_1, \ldots, \w_K$ forms a partition of $\w$.
\begin{lemma}\label{lem:gr}
Let $V_*(\eta)$ be the convex conjugate of $V(z)$, i.e. $V(z) = \max_{\eta}\eta z- V_*(\eta)$. Then the solution to the composite mapping
\begin{align*}
\w^*=\arg\min_{\w\in\Q_\w}\frac{1}{2}\|\w-\widehat \w\|_2^2  + \lambda V(\|\w\|),
\end{align*}
can be computed by 
\begin{align*}
\w^*=\arg\min_{\w\in\Q_\w}\frac{1}{2}\|\w-\widehat \w\|_2^2  + \lambda\eta \|\w\|, 
\end{align*}
where $\eta$ satisfies $\|\w^*\|-V'_*(\eta)=0.$ Since both $\|\w^*\|$ and $-V'_*(\eta)$ are non-increasing functions in $\eta$, we can efficiently compute $\eta$ by a bi-section search.
\end{lemma}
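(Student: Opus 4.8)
The plan is to exploit the biconjugate representation $V(z)=\max_{\eta}\{\eta z-V_*(\eta)\}$ to rewrite the composite mapping as a saddle-point problem in $(\w,\eta)$, interchange the two optimizations, and then reduce the outer problem to a scalar equation in $\eta$. First, since $V$ is convex and monotonically increasing on $[0,\infty)$, the maximizing $\eta$ in $V(\|\w\|)=\max_{\eta}\{\eta\|\w\|-V_*(\eta)\}$ is nonnegative, so the objective of the composite mapping equals $\max_{\eta\geq 0}\phi(\w,\eta)$ with $\phi(\w,\eta)=\frac12\|\w-\widehat\w\|_2^2+\lambda\eta\|\w\|-\lambda V_*(\eta)$; this $\phi$ is convex in $\w$ (for $\eta\geq 0$) and concave in $\eta$. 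Because $h(\w)=\frac12\|\w-\widehat\w\|_2^2+\lambda V(\|\w\|)$ is $1$-strongly convex and coercive it has a unique minimizer $\w^*\in\Q_\w$, so we may restrict $\w$ to a compact ball containing $\w^*$ and $\eta$ to a compact subinterval of the effective domain of $V_*$; Sion's minimax theorem then licenses the exchange $\min_{\w\in\Q_\w}\max_{\eta\geq 0}\phi(\w,\eta)=\max_{\eta\geq 0}\min_{\w\in\Q_\w}\phi(\w,\eta)$ and produces a saddle point $(\w^*,\eta^*)$ whose $\w$-component is the minimizer of $h$. For fixed $\eta$ the inner problem is exactly $\w(\eta)=\arg\min_{\w\in\Q_\w}\frac12\|\w-\widehat\w\|_2^2+\lambda\eta\|\w\|$, which is the advertised form, evaluated at $\eta=\eta^*$.

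Next I would pin down $\eta^*$. Let $G(\eta)=\min_{\w\in\Q_\w}\phi(\w,\eta)$; as a pointwise minimum of functions affine in $\eta$ minus the convex term $\lambda V_*(\eta)$, $G$ is concave, and by Danskin's theorem its derivative is obtained by differentiating $\phi(\w(\eta),\eta)$ with $\w(\eta)$ held fixed, namely $G'(\eta)=\lambda\big(\|\w(\eta)\|-V_*'(\eta)\big)$, where $V_*'$ is read as an element of $\partial V_*$ when $V_*$ is not differentiable. Setting $G'(\eta^*)=0$ gives $\|\w^*\|=\|\w(\eta^*)\|=V_*'(\eta^*)$, which is precisely the equation $\|\w^*\|-V_*'(\eta)=0$ in the statement.

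For the bisection claim I would establish the two monotonicities. For $\eta_1<\eta_2$, adding the optimality inequalities satisfied by $\w(\eta_1)$ and $\w(\eta_2)$ in their respective problems yields $\lambda(\eta_2-\eta_1)\big(\|\w(\eta_1)\|-\|\w(\eta_2)\|\big)\geq 0$, so $\eta\mapsto\|\w(\eta)\|$ is non-increasing; and $-V_*'$ is non-increasing because $V_*$ is convex. Hence $G'(\eta)$ is non-increasing, the root of $G'(\eta)=0$ is unique whenever it exists, and it can be located by a bisection search in which each probe value of $\eta$ costs one evaluation of the simple (or decomposable) proximal map $\w(\eta)$.

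The step I expect to be the main obstacle is the rigorous justification of the minimax interchange and of the identification of the $\w$-part of the saddle point with $\w^*$: the domain $\Q_\w$ may be unbounded and $V_*$ may have a bounded effective domain, so the compactness reductions need care, and the non-smooth corner $\w^*=0$ (where $\partial\|\cdot\|(0)$ is the whole unit ball and $\eta^*$ is any subgradient of $V$ at $0$) should be treated separately. A cleaner route that sidesteps minimax entirely is to start from the subgradient optimality condition $0\in\w^*-\widehat\w+\lambda\,\partial\big(V(\|\cdot\|)\big)(\w^*)+N_{\Q_\w}(\w^*)$, use the chain rule $\partial\big(V(\|\cdot\|)\big)(\w^*)=\{\eta g:\eta\in\partial V(\|\w^*\|),\ g\in\partial\|\cdot\|(\w^*)\}$ to pick $\eta^*\in\partial V(\|\w^*\|)$, note that $\w^*$ then satisfies the optimality condition of $\min_{\w\in\Q_\w}\frac12\|\w-\widehat\w\|_2^2+\lambda\eta^*\|\w\|$, and convert $\eta^*\in\partial V(\|\w^*\|)$ into $\|\w^*\|\in\partial V_*(\eta^*)$ by conjugate duality.
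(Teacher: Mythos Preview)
Your main argument is essentially the same as the paper's: introduce the conjugate $V_*$, swap the $\min_{\w}$ and $\max_{\eta}$, differentiate the outer concave function in $\eta$ to obtain $\|\w(\eta)\|=V_*'(\eta)$, and invoke the two monotonicities for bisection. The paper carries this out tersely (it simply asserts the interchange and the derivative), whereas you supply the missing justifications via Sion's theorem and Danskin's theorem and give an explicit comparison argument for the monotonicity of $\eta\mapsto\|\w(\eta)\|$; your flagged technical caveats about compactness and the corner $\w^*=0$ are issues the paper does not address. The alternative route you sketch at the end---reading off $\eta^*$ from the subdifferential chain rule $\partial(V\circ\|\cdot\|)$ and translating $\eta^*\in\partial V(\|\w^*\|)$ into $\|\w^*\|\in\partial V_*(\eta^*)$---is a genuinely different (and cleaner) argument that the paper does not pursue.
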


{The value of the step size $\gamma$ in Algorithms~\ref{alg:2} and~\ref{alg:3} depends on the value of $c$, a constant that upper bounds the spectral norm square  of the matrix $H(\mathbf X, \mathbf y)$.   In many machine learning applications, by assuming a bound on the data (e.g., $\|\x\|_2\leq R$), one can easily compute an estimate of  $c$. We present derivations of the constant $c$ for hinge loss and generalized hinge loss in Appendix~\ref{app:constant}. However, the computed  value of $c$ might be  overestimated, thus the step size $\gamma$ is underestimated. Therefore,  to improve the empirical performances, one can scale up the estimated value of $\gamma$ by a factor larger than one and choose the best factor by tuning among a set of values. In addition, the authors in~\citep{Chambolle:2011:FPA:1968993.1969036} suggested a two step sizes scheme with $\tau$ for updating the primal variable and $\sigma$ for updating the dual variable. Depending on the nature of applications, one may observe better performances by carefully choosing the ratio between the two step sizes provided that $\sigma$ and $\tau$ satisfy $\sigma\tau\leq 1/c$. In the last subsection, we observe the improved performance for solving SVM by using the two step sizes scheme and by carefully tuning the ratio between the two step sizes. Furthermore, \citep{Pock:2011:DPF:2355573.2356473} presents a technique for computing diagonal preconditioners in the cases when estimating the value of $c$ is difficult for complex problems, and applies it to general linear programing problems and some computer vision problems. 

Finally, we discuss the two implementation schemes for Algorithms~\ref{alg:2} and~\ref{alg:3}.  Note that in Algorithm~\ref{alg:2}, we maintain and update two primal variables $\w_t, \u_t\in\R^d$, while in Algorithm~\ref{alg:3} we maintain and update two dual variables $\balpha_t, \bbeta_t\in\R^n$. We refer to the implementation with two primal variables as double-primal implementation and the one with two dual variables as double-dual implementation. In fact, we can also implement Algorithm~\ref{alg:2} by double-dual implementation and implement Algorithm~\ref{alg:3} by double-primal implementation. For Algorithm~\ref{alg:2}, in which the updates are 
 \begin{align*}
&\w_t = \min_{\w\in\Q_\w} \frac{\|\w - (\u_{t-1} - \gamma
G_\w(\balpha_{t-1}))\|_2^2}{2} + \gamma \lambda R(\w)\\
&\balpha_t = \min_{\balpha\in\Q_{\balpha}}\frac{\|\balpha
-(\balpha_{t-1} + \gamma G_{\balpha}(\w_t) ) \|_2^2}{2}\\
&\u_t = \w_t + \gamma (G_\w(\balpha_{t-1}) - G_\w(\balpha_t)),
\end{align*}
we can plug the expression of $\u_t$ into $\w_t$ and obtain
 \begin{align*}
&\w_t = \min_{\w\in\Q_\w} \frac{\|\w - (\w_{t-1}+ 2\gamma G_\w(\balpha_{t-2})- 2\gamma
G_\w(\balpha_{t-1}))\|_2^2}{2} + \gamma \lambda R(\w)\\
&\balpha_t = \min_{\balpha\in\Q_{\balpha}}\frac{\|\balpha
-(\balpha_{t-1} + \gamma G_{\balpha}(\w_t) ) \|_2^2}{2}
\end{align*}
To implement above updates, we can only maintain one primal variable and two dual variables.  Depending on the nature of implementation, one may be better than the other. For example, if the number of examples $n$ is much larger than the number of dimensions $d$, the double-primal implementation may be more efficient than the double-dual implementation, and vice versa. In subsection~\ref{sec:exp5}, we provide more examples and an experiment to demonstrate this. }

\subsection{Extensions and Discussion}\label{sec:ext}
\paragraph{Nonlinear model }For a nonlinear model, the min-max formulation becomes
\begin{align*}
\min_{g\in\mathcal H_\kappa} \max_{\balpha\in\Q_{\balpha}} L(g,\balpha; \X, \y)+ \lambda R(g),
\end{align*}
where $\mathcal H_\kappa$ is a Reproducing Kernel Hilbert Space (RKHS) endowed with a kernel function $\kappa(\cdot, \cdot)$.
Algorithm~\ref{alg:1} can be applied to obtain the nonlinear model  by changing the primal variable to $g$. For example, step 5 in Algorithm~\ref{alg:1} is modified to the following composite gradient mapping
\begin{align}\label{eqn:mklprox}
g_t= \mathop{\arg\min_{g \in \mathcal H_\kappa}}  \frac{1}{2}\left\|g-\hat g_{t-1}\right\|_{\mathcal{H}_{\kappa}}^2 + \gamma \lambda R(g),
 \end{align}
 where 
 \begin{align*}
\hat g_{t-1}=\left(g_{t-1}-\gamma\nabla_g L(g_{t-1},\balpha_t;\X, \y)\right).
 \end{align*}
Similar changes can be made to  Algorithm~\ref{alg:2} for the extension to the nonlinear model. {To end this discussion, we make several remarks.  (1) The gradient with respect to the primal variable (i.e., the kernel predictor $g\in\mathcal H_{\kappa}$) is computed on each $g(\x_i)=\langle g, \kappa(\x_i,\cdot)\rangle$ by $\kappa(\x_i,\cdot)$. (2) We can perform the computation by manipulating  on a finite number of parameters  due to the representer theorem provided that the regularizer $R(g)$ is a monotonic norm~\citep{Bach:2011:OSP:2208224}. Therefore,  we only need to maintain and update the coefficients $\zeta=(\zeta_1, \ldots, \zeta_n)$ in $g=\sum_{i=1}^n\zeta_i\kappa(\x_i,\cdot)$. (3) The primal dual prox method for optimization with nonlinear model has been adopted in our prior work~\citep{Yangicml12mkl} for multiple kernel learning where the regularizer is $R(g_1,\ldots, g_m) = (\sum_{k=1}^m\|g_k\|_{\mathcal H_k})^2$. It can also be generalized to solve MKL with more general sparsity-induced norms. (\citep{Bach:2011:OSP:2208224} considers how to compute the proximal mapping in~(\ref{eqn:mklprox}) for more general sparsity induced norms.) }

%which can be solved efficiently when $R(g) $ is  a monotonically increasing function of $\|g\|_{\mathcal H_{\kappa}}$.

{t\paragraph{Incorporating the bias term}
%Algorithms~\ref{alg:1} or~\ref{alg:2} can be modified to incorporate
It is easy to learn a bias term $w_0$ in the classifier $\w^{\top}\x + w_0$ by Pdprox without too many changes. We can use the augmented feature vector $\widehat \x_i = \left(1\atop \x_i\right)$ and the augmented weight vector $\widehat\w=\left(w_0\atop \w\right)$, and  run Algorithms~\ref{alg:1} or~\ref{alg:2} with no changes except that the regularizer $R(\widehat\w) = R(\w)$ does not involve  $w_0$ and the step size $\gamma=\sqrt{1/(2c)}$ will be a different value due to the change in the bound of the new feature vectors by $\|\widehat \x\|_2\leq \sqrt{1+R^2}$, which would yield a different  value of $c$ in Lemma~\ref{lem:1} (c.f. Appendix~\ref{app:constant}). }

%viewing $b$ as an
%additional primal variable. For example, in Algorithm~\ref{alg:1}, at each iteration, we can update $b$ by  one prox mapping
%$b_{t}= \prod_{\Q_b} [b_{t-1} + \gamma \nabla_b L(\w_{t-1}, b_{t-1}, \balpha_t; \X,\y)]$,
%where $\Q_b$ is an appropriate domain for $b$. The step size $\gamma$ is changed to $\displaystyle \gamma=\sqrt{(1/c)}/{2}$ when the bias term is considered, and the convergence rate is amplified by a constant $\sqrt{2}$.

\paragraph{Domain constraint on primal variable}
%In previous sections, we assume the domain of the primal variable $\w$ is the entire space(e.g. $\mathbb R^d$).  Actually, Algorithm~\ref{alg:1}, \ref{alg:2} can handle the domain on $\w$, i.e. $\w\in \Q_\w$. The only change is adding the domain constraint $\w\in\Q_\w$ in the composite gradient mapping, i.e. 
%\begin{align}\label{eqn:concom1}
%\w_t =  \arg\min_{\w\in\Q_\w} \frac{1}{2}\|\w - (\w_{t-1} - \gamma G_\w(\w_{t-1}, \alpha_{t}))\|_2^2 + \gamma\lambda R(\w)
%\end{align}
%for Algorithm~\ref{alg:1}, 
%and 
%\begin{align}\label{eqn:concom2}
%\w_t =  \arg\min_{\w\in\Q_\w} \frac{1}{2}\|\w - (\u_{t-1} - \gamma G_\w(\u_{t-1}, \alpha_{t-1}))\|_2^2 + \gamma\lambda R(\w)
%\end{align}
%for Algorithm~\ref{alg:2},
%where we assume the domain constrained composite gradient mapping can be solved efficiently.  
Now we discuss how to generalize the convergence analysis to the case when a convex domain $\Q_\w$ is imposed on $\w$.  %that the convergence rate in Theorem~\ref{them:main},~\ref{them:main2} still hold except that $\min_{\w} F(\w, \widehat\alpha_T)$ is replaced with $\min_{\w\in\Q_\w}F(\w,\widehat\alpha_T)$. Although, we can not have the equivalent gradient mapping on $\w_t$ as in Lemma~\ref{lem:6},~\ref{lem:11}.
%\begin{align*}
%\w_t = \u_{t-1} - \gamma G_\w(\u_{t-1}, \alpha_{t-1}) - \gamma\lambda \partial R(\w_t)
%\end{align*}
We introduce  $\widehat R(\w) = \lambda R(\w) + Q(\w)$, where $Q(\w)$ is an indicator function for $\w\in\Q_\w$, i.e. 
\begin{align*}
Q(\w) =\left\{ \begin{array}{cc}0&\w\in\Q_\w\\ +\infty&\text{ otherwise }\end{array}\right..
\end{align*}
Then we can write the domain constrained composite gradient mapping in step 5 of Algorithm~\ref{alg:1} or step 4 of Algorithm~\ref{alg:2} into a domain free composite gradient mapping as the following: 
\begin{align*}
\w_t &=  \arg\min_{\w\in\mathbb R^d} \frac{1}{2}\|\w - (\w_{t-1} - \gamma G_\w(\w_{t-1}, \balpha_{t}))\|_2^2 + \gamma\widehat R(\w),\\
\w_t &=  \arg\min_{\w\in\mathbb R^d} \frac{1}{2}\|\w - (\u_{t-1} - \gamma G_\w(\u_{t-1}, \balpha_{t-1}))\|_2^2 + \gamma\widehat R(\w).
\end{align*}
Then we have an equivalent gradient mapping, 
\begin{align*}
\w_t &= \w_{t-1} - \gamma G_\w(\w_{t-1}, \balpha_{t}) - \gamma \partial\widehat R(\w_t),\\
\w_t &= \u_{t-1} - \gamma G_\w(\u_{t-1}, \balpha_{t-1}) - \gamma \partial\widehat R(\w_t).
\end{align*}
Then Lemmas~\ref{lem:6} and~\ref{lem:7}, and Lemmas~\ref{lem:11} and~\ref{lem:12} all hold as long as we replace $\lambda\mathbf v_t$ with $\widehat{\mathbf v}_t \in \partial\widehat R(\w_t)$. Finally in proving Theorems~\ref{them:main} we can absorb $Q(\w)$ in $L(\w, \balpha) + \widehat R(\w)$ into the domain constraint.

\paragraph{Additional constraints on dual variables}One advantage of the proposed primal dual prox method is that it provides a convenient way to handle additional constraints on the dual variables $\alpha$. Several studies introduce additional constraints on the dual variables. In~\citep{DBLP:conf/nips/DekelS06}, the authors address a budget SVM problem by introducing a $1-\infty$ interpolation norm on the empirical hinge loss, leading to a sparsity constraint $\|\balpha\|_1 \leq m$ on the dual variables, where $m$ is the target number of support vectors. The corresponding optimization problem is given by
\begin{eqnarray}
\min_{\w\in\mathbb R^d} \max_{\balpha\in[0, 1]^n, \|\balpha\|_1\leq m} \frac{1}{n}\sum_{i=1}^n \alpha_i (1-y_i\w^{\top}\x_i) + \lambda R(\w). \label{eqn:constrained-obj}
\end{eqnarray}
In~\citep{robustmetric10uai}, a similar idea is applied to learn a distance metric from noisy training examples. We can directly apply Algorithms~\ref{alg:1} or~\ref{alg:2} to (\ref{eqn:constrained-obj}) with $\Q_{\balpha}$ given by $\Q_{\balpha}=\{\balpha: \balpha\in[0, 1]^n, \|\balpha\|_1\leq m\}$. The prox mapping to this domain can be efficiently computed by Lemma~\ref{lem:2}. It is straightforward to show that the convergence rate is $[D_1 + m]/[\sqrt{2n}T]$ in this case.

\section{Experiments}\label{sec:exp}
In this section we present empirical studies to verify the efficiency of the proposed algorithm.  We organize our experiments as follows.  
\begin{itemize}
\item In subsections~\ref{sec:exp1}, \ref{sec:exp2}, and \ref{sec:exp3} we compare the proposed algorithm to the state-of-the-art first order methods that directly update the primal variable at each iteration. We apply all the algorithms  to three different tasks with different non-smooth loss functions and regularizers. The baseline first order methods used in this study include  the gradient descent algorithm (\textbf{gd}), the forward and backward splitting algorithm (\textbf{fobos})~\citep{Duchi:2009:EOB:1577069.1755882}, the regularized dual averaging algorithm (\textbf{rda})~\citep{NIPS2009_1048}, the accelerated gradient descent algorithm (\textbf{agd})~\citep{10.1109/ICDM.2009.128}. %., and Nesterov's smoothing technique (\textbf{nest})~\citep{Nesterov2005}.  
Since the proposed method is a non-stochastic method, we compare it to the non-stochastic variant of \textbf{gd, fobos}, and \textbf{rda}.  Note that \textbf{gd, fobos, rda}, and  \textbf{agd} share the same convergence rate of $O(1/\sqrt{T})$ for non-smooth problems.%, while \textbf{nest} has a convergence rate of $O(1/T)$ for the non-smooth problems considered in our experiments.

\item In subsection~\ref{sec:expg}, our algorithm is  compared to the state-of-the-art primal dual gradient method~\citep{Nesterov:2005:EGT:1081200.1085585}, which employs an excessive gap technique for non-smooth optimization, updates both the primal and dual variables at each iteration,  and has a convergence rate of $O(1/T)$.

\item In subsection~\ref{sec:exp4}, we test the proposed algorithm for optimizing problem in~(\ref{eqn:constrained-obj}) with a sparsity constraint on the dual variables. 
\item In subsection~\ref{sec:exp5}, we compare the two variants of the proposed method on a data set when $n\gg d$, and compare Pdprox to the Pegasos algorithm.
\end{itemize}

%To apply Nesterov's smoothing technique and excessive gap technique, we write both the loss function and  the regularizer into an explicit max form.  
All the algorithms are implemented in Matlab (except otherwise mentioned) and run on a 2.4GHZ machine. Since the performance of the baseline algorithms \textbf{gd}, \textbf{fobos} and \textbf{rda}  depends heavily on the initial value of the stepsize, we generate $21$ values for the initial stepsize by scaling their theoretically optimal values with factors $2^{[-10:1:10]}$, and report the best convergence among the $21$ possible values. The stepsize of \textbf{agd} is changed adaptively in the optimization process, and we just give it an appropriate initial step size. 
%A similar approach is applied to generate 21 values for the smoothing parameter of \textbf{nest}, and only the best performance is reported. 
Since in the first four subsections we focus on comparison with baselines, we use the Pdprox-dual algorithm (Algorithm~\ref{alg:1})  of the proposed Pdprox method.  {We also use the tuning technique to select the best scale-up factor for the step size $\gamma$ of Pdprox.}  Finally, all algorithms are initialized with a solution of all zeros.

\subsection{Group lasso regularizer for Grouped Feature Selection}\label{sec:exp1}
\begin{figure*}[t]
\centering
%\subfigure[$\lambda=10^{-3}$]{\includegraphics[scale=0.25]{gl_hinge_lamda_1e-3_T_1000_nonest.eps}}\hspace*{0.2in}
%\subfigure[$\lambda=10^{-5}$]{\includegraphics[scale=0.25]{gl_hinge_lamda_1e-5_T_1000_nonest.eps}}\\
%\subfigure[$\lambda=10^{-3}$]{\includegraphics[scale=0.25]{gl_abs_lamda_1e-3_T_1000_nonest.eps}}\hspace*{0.2in}
%\subfigure[$\lambda=10^{-5}$]{\includegraphics[scale=0.25]{gl_abs_lamda_1e-5_T_1000_nonest.eps}}
\subfigure[$\lambda=10^{-3}$]{\includegraphics[scale=0.25]{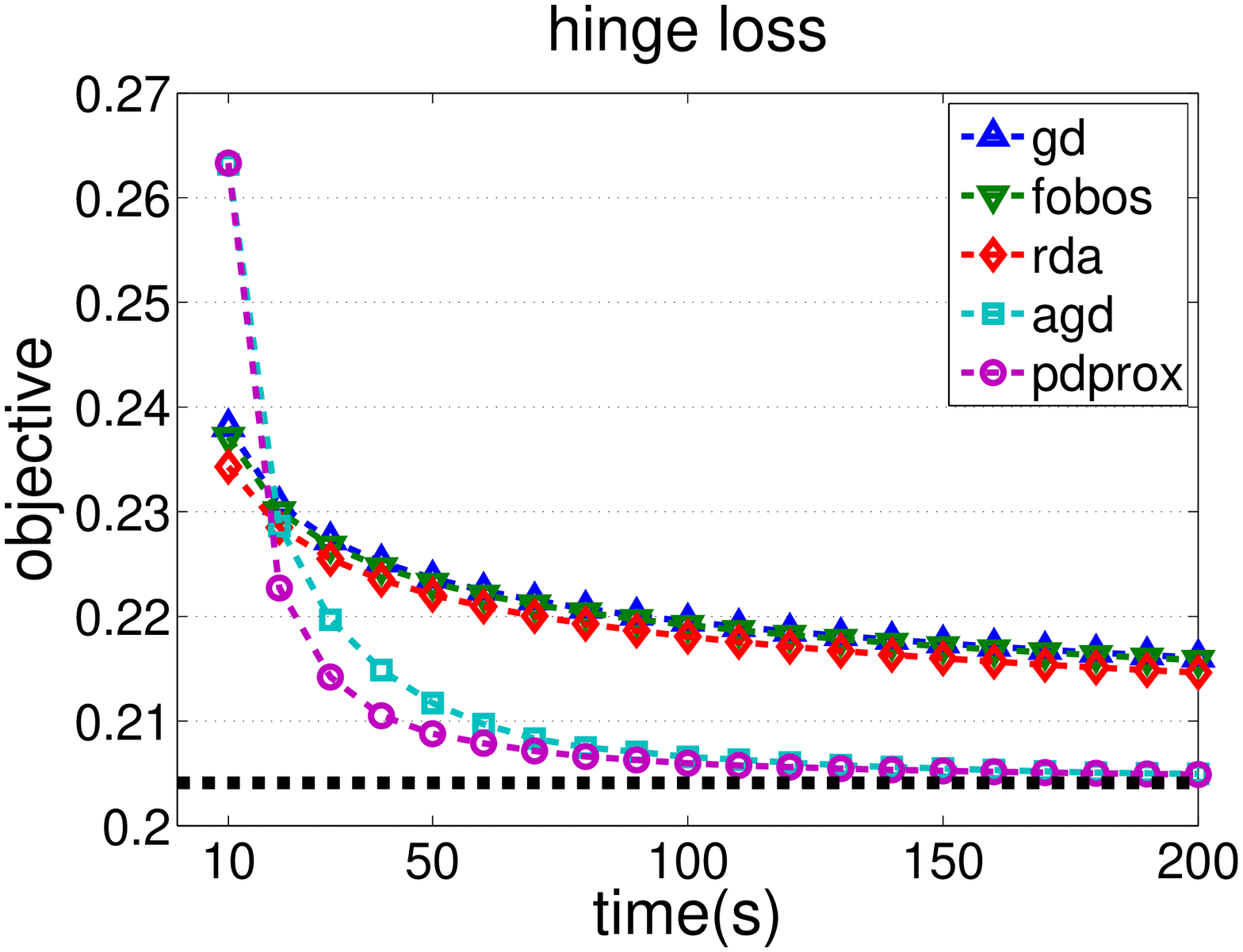}}\hspace*{0.2in}
\subfigure[$\lambda=10^{-5}$]{\includegraphics[scale=0.25]{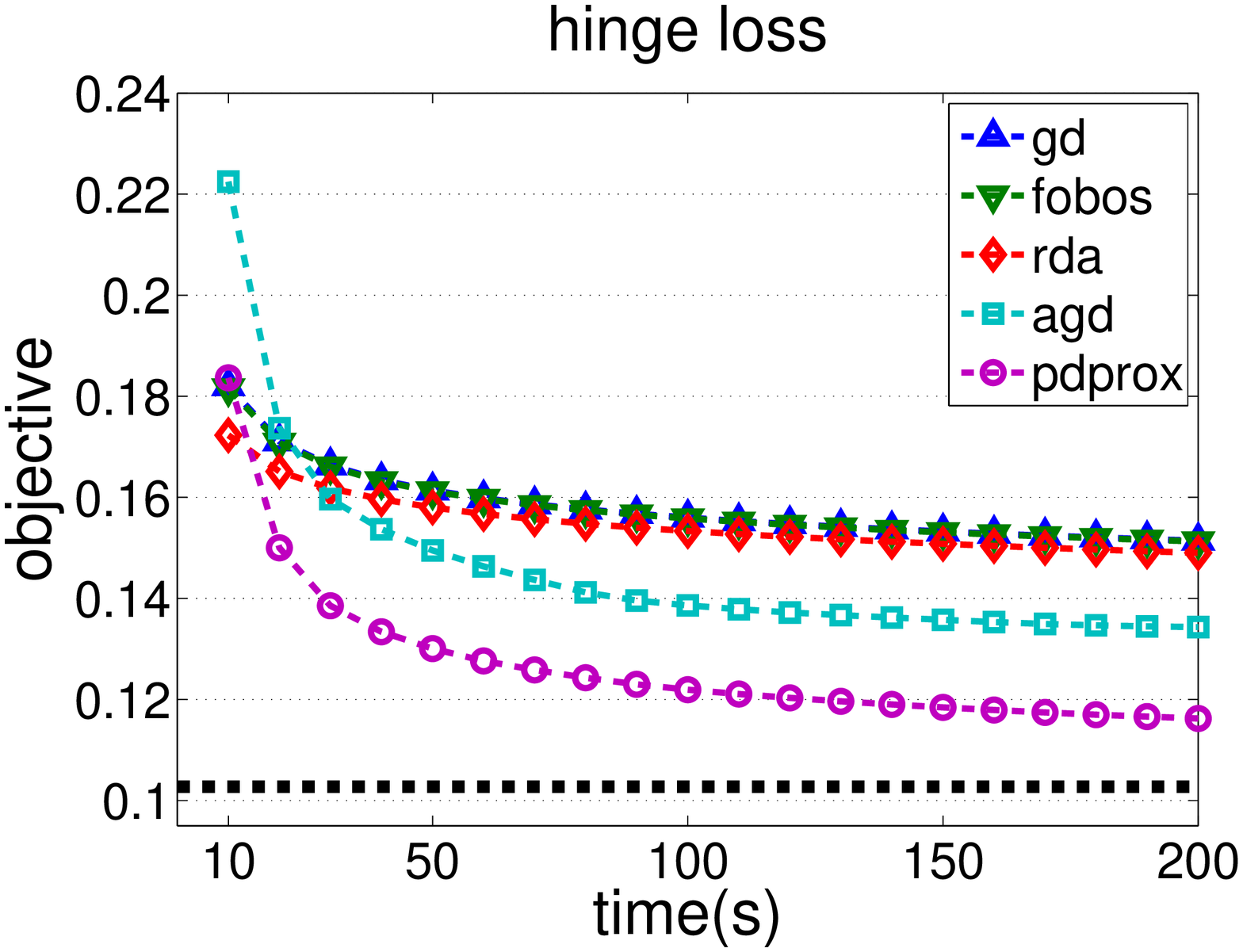}}\\
\subfigure[$\lambda=10^{-3}$]{\includegraphics[scale=0.25]{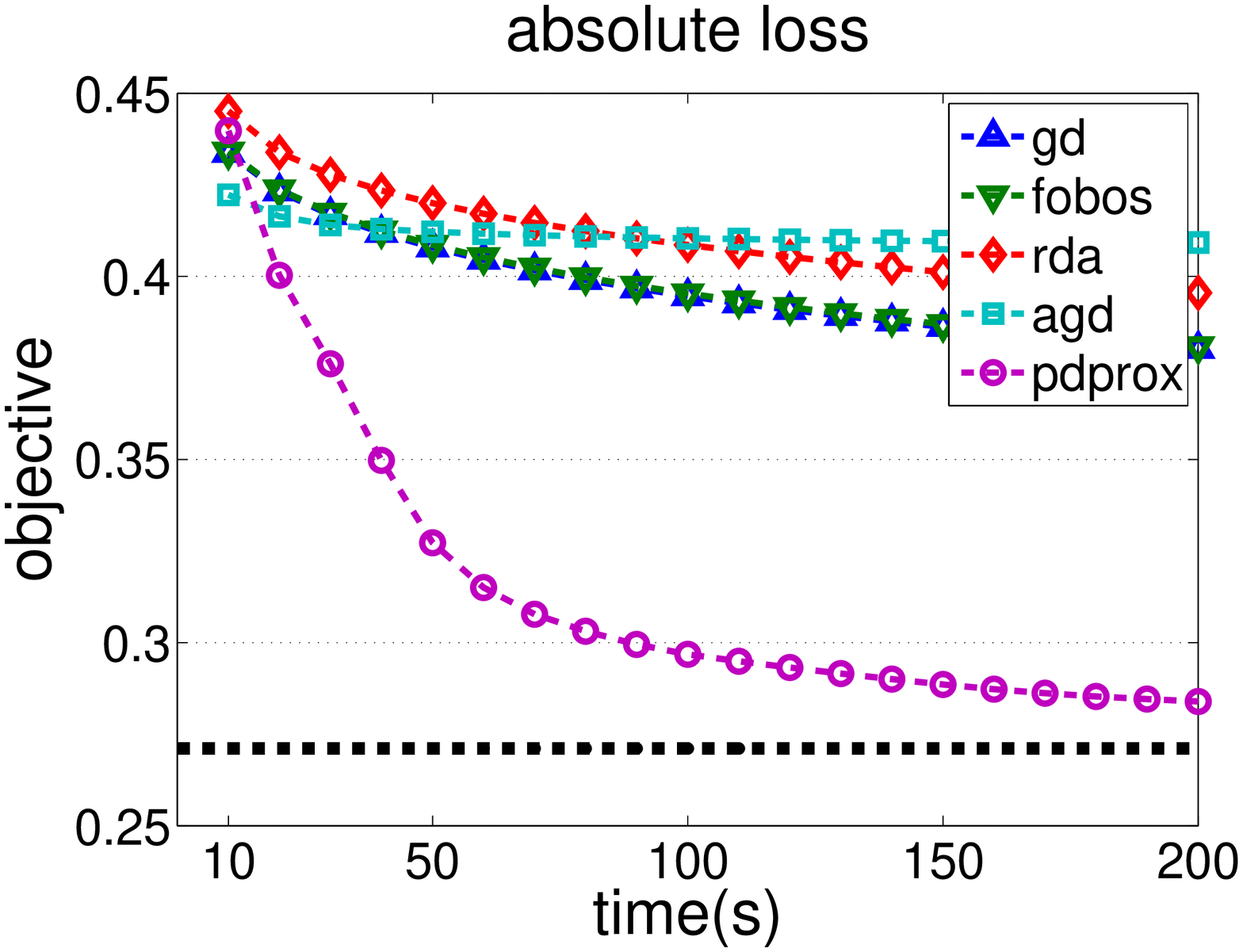}}\hspace*{0.2in}
\subfigure[$\lambda=10^{-5}$]{\includegraphics[scale=0.25]{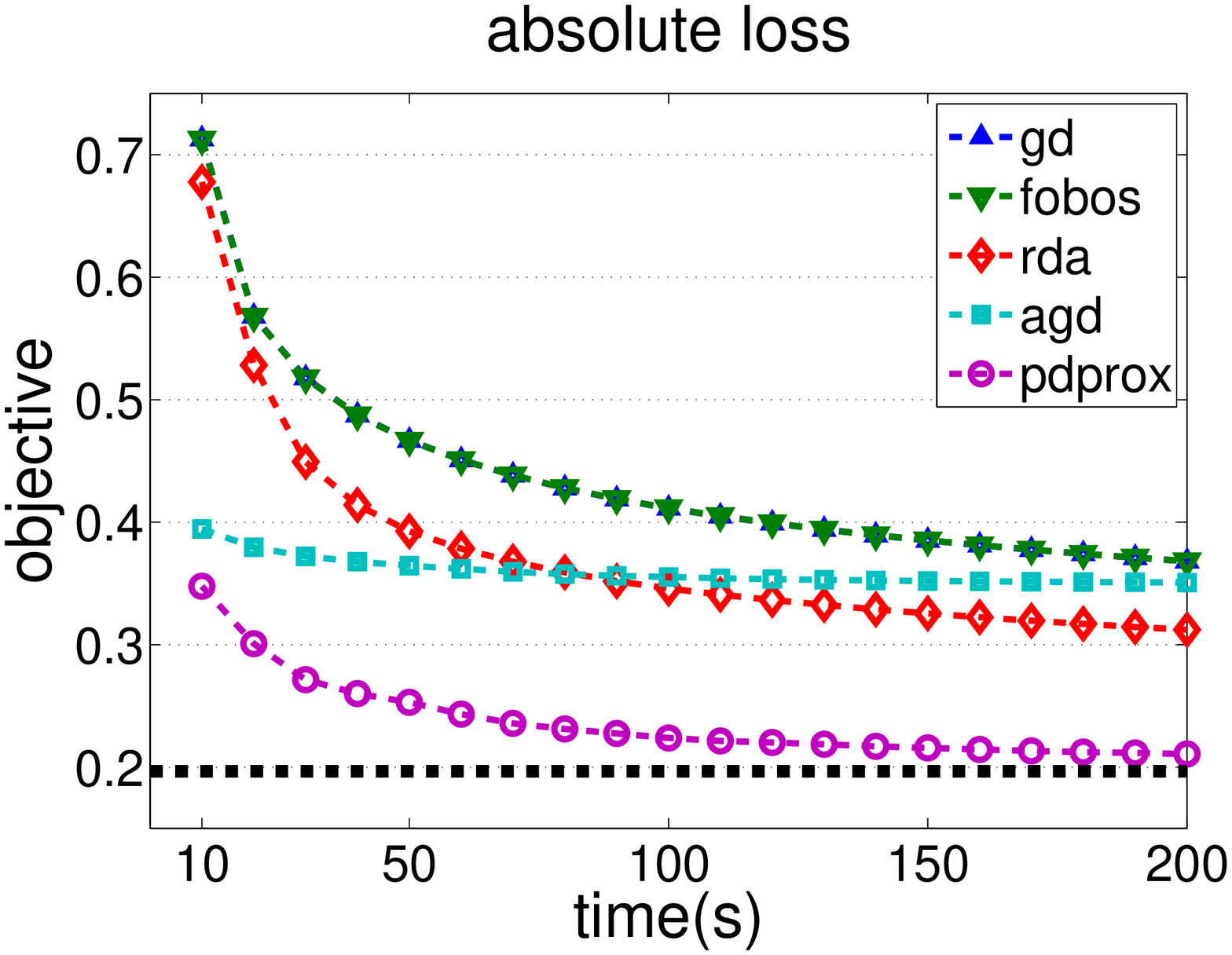}}

\caption{Comparison of convergence speed for hinge loss ((a),(b)) and absolute loss ((c),(d)) with group lasso regularizer. Note that for better visualization we plot the objective starting from 10 seconds  in all figures. The objective of all algorithms at 0 second is 1. {The black bold dashed lines in all Figures show the optimal objective value by running Pdprox with a large number of iterations so that the difference between the last two objective values is less than $10^{-4}$. } }\label{fig:1}
\end{figure*}
%We focus on the non-smooth optimization problem that uses 
In this experiment  we use the group lasso for regularization, i.e., $R(\w)=\sum_g\sqrt{d_g} \|\w_g\|_2$, where $\w_g$ corresponds to the $g$th group variables and $d_g$ is the number of variables in group $g$. To apply Nesterov's method, we can write  $R(\w)=\max_{\|\u_g\|_2\leq 1} \sum_g\sqrt{d_g}\w_g^{\top}\u_g$. We use the MEMset Donar dataset~\citep{Yeo:2003:MEM:640075.640118} as the testbed. This dataset was originally used for splice site detection. It is divided into a training set and a test set: the training set consists  of $8,415$ true  and $179,438$ false donor sites, and the testing set has $4,208$ true  and $89,717$ false donor sites. Each example in this dataset was originally described by a sequence of \{A, C, G, T\} of length $7$. We follow~\citep{DBLP:conf/icml/YangXKL10} and generate group features with up to three-way interactions between the $7$ positions, leading to $2,604$ attributes in $63$ groups. We normalize the length of each example to $1$. Following the experimental setup in~\citep{DBLP:conf/icml/YangXKL10}, we construct a balanced training dataset consisting of all $8,415$ true and $8,415$ false donor sites that are randomly sampled from all $179,438$ false sites.

Two non-smooth loss functions are examined in this experiment: hinge loss and absolute loss.   Figure~\ref{fig:1} plots the values of the objective function vs. running time (second), using two different values of regularization parameter, i.e., $\lambda=10^{-3}, 10^{-5}$  to produce different levels of sparsity. We observe that { (i) the proposed algorithm \textbf{Pdprox} clearly outperforms all the baseline algorithms in all the cases; (ii) for the absolute loss, which has a sharp curvature change at zero compared to hinge loss, the baseline algorithms of \textbf{gd, fobos, rda, agd}, especially of \textbf{agd} that is originally designed for smooth loss functions,  deteriorate significantly compared to the proposed algorithm \textbf{Pdprox}.%; (iii) the convergent performance of \textbf{nest} depends heavily on the value of $\lambda$. The reason is that \textbf{nest} handles the regularizer differently from other algorithms and the value of $\lambda$ affects the smoothness of the smoothed objective and therefore affects the convergent performance of \textbf{nest}. 
}   Finally, we observe that for the hinge loss and $\lambda = 10^{-3}$, the classification performance of the proposed algorithm on the testing dataset is $0.6565$, measured by maximum correlation coefficient~\citep{Yeo:2003:MEM:640075.640118}. This is almost identical to the best result reported in~\citep{DBLP:conf/icml/YangXKL10} (i.e., $0.6520$).

\begin{figure}[t]
\centering
%\subfigure[$\lambda=10^{-3}$]{\includegraphics[scale=0.25]{l1inf_abs_lamda_1e-3_school_nonest.eps}}\hspace*{0.2in}
%\subfigure[$\lambda=10^{-5}$]{\includegraphics[scale=0.25]{l1inf_abs_lamda_1e-5_school_nonest.eps}}\\
%\subfigure[$\lambda=10^{-3}$]{\includegraphics[scale=0.25]{l1inf_sens_1e_2_lamda_1e-3_school_nonest.eps}}%\hspace*{0.2in}
%\subfigure[$\lambda=10^{-5}$]{\includegraphics[scale=0.25]{l1inf_sens_1e_2_lamda_1e-5_school_nonest.eps}}
\subfigure[$\lambda=10^{-3}$]{\includegraphics[scale=0.25]{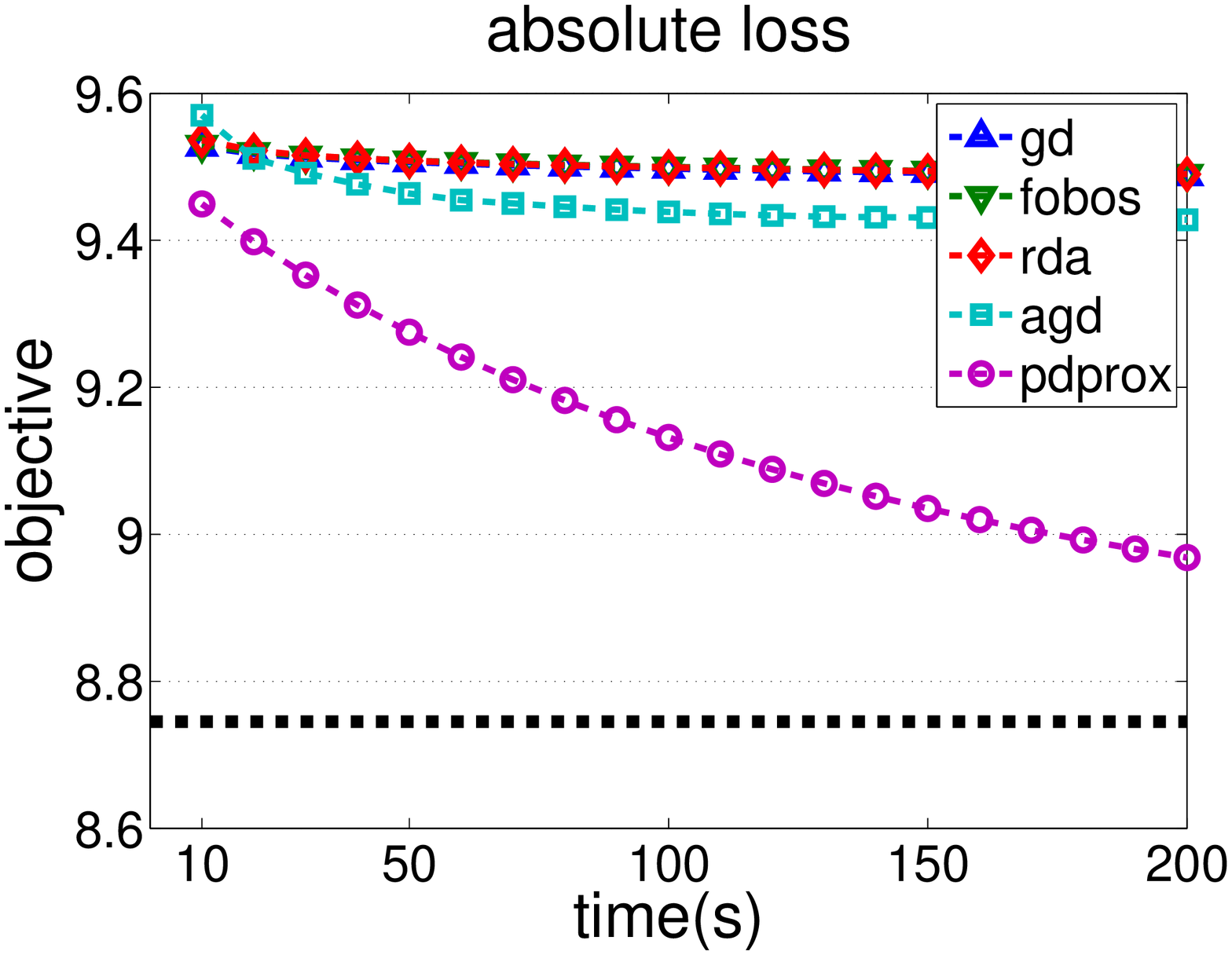}}\hspace*{0.2in}
\subfigure[$\lambda=10^{-5}$]{\includegraphics[scale=0.25]{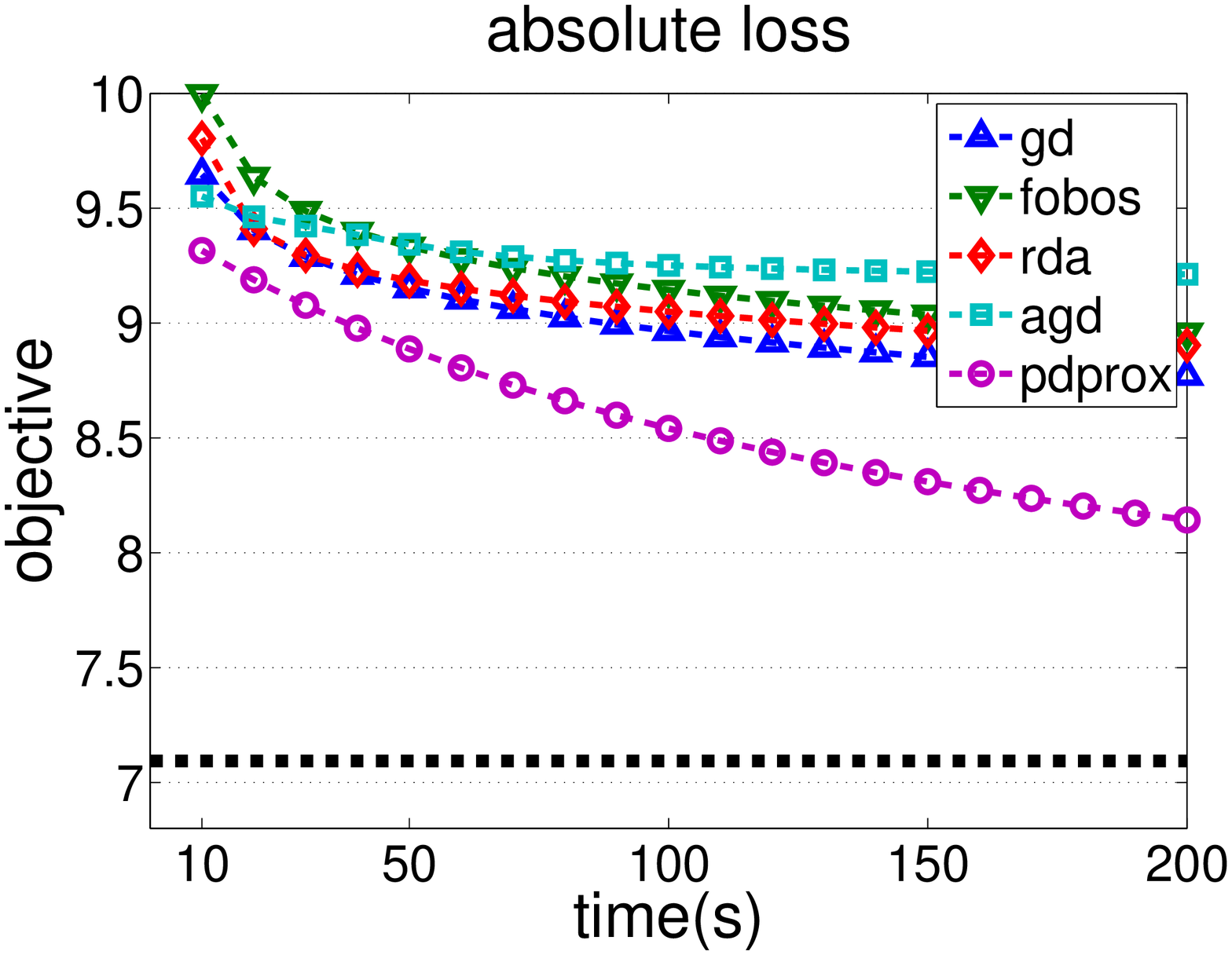}}\\
\subfigure[$\lambda=10^{-3}$]{\includegraphics[scale=0.25]{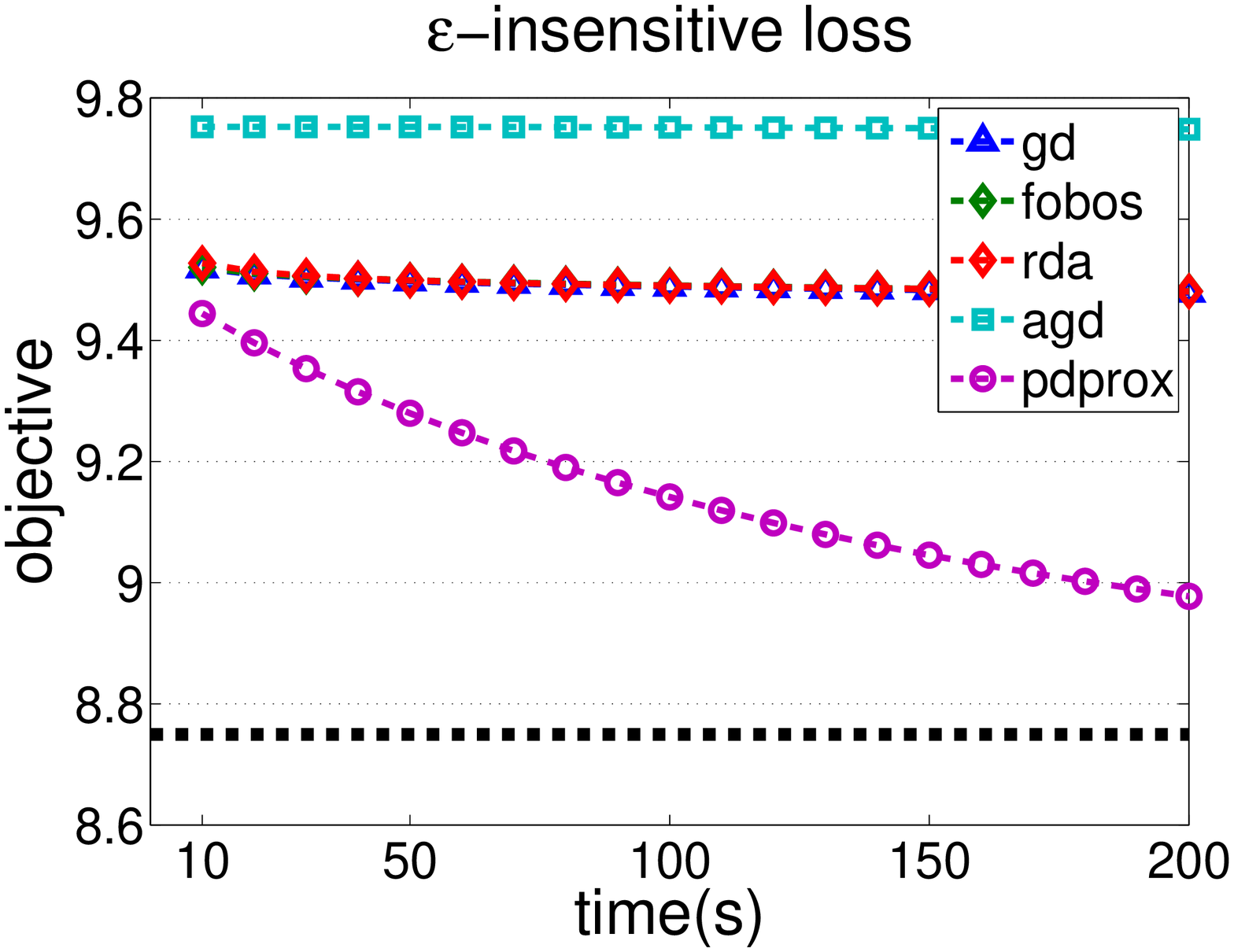}}\hspace*{0.2in}
\subfigure[$\lambda=10^{-5}$]{\includegraphics[scale=0.25]{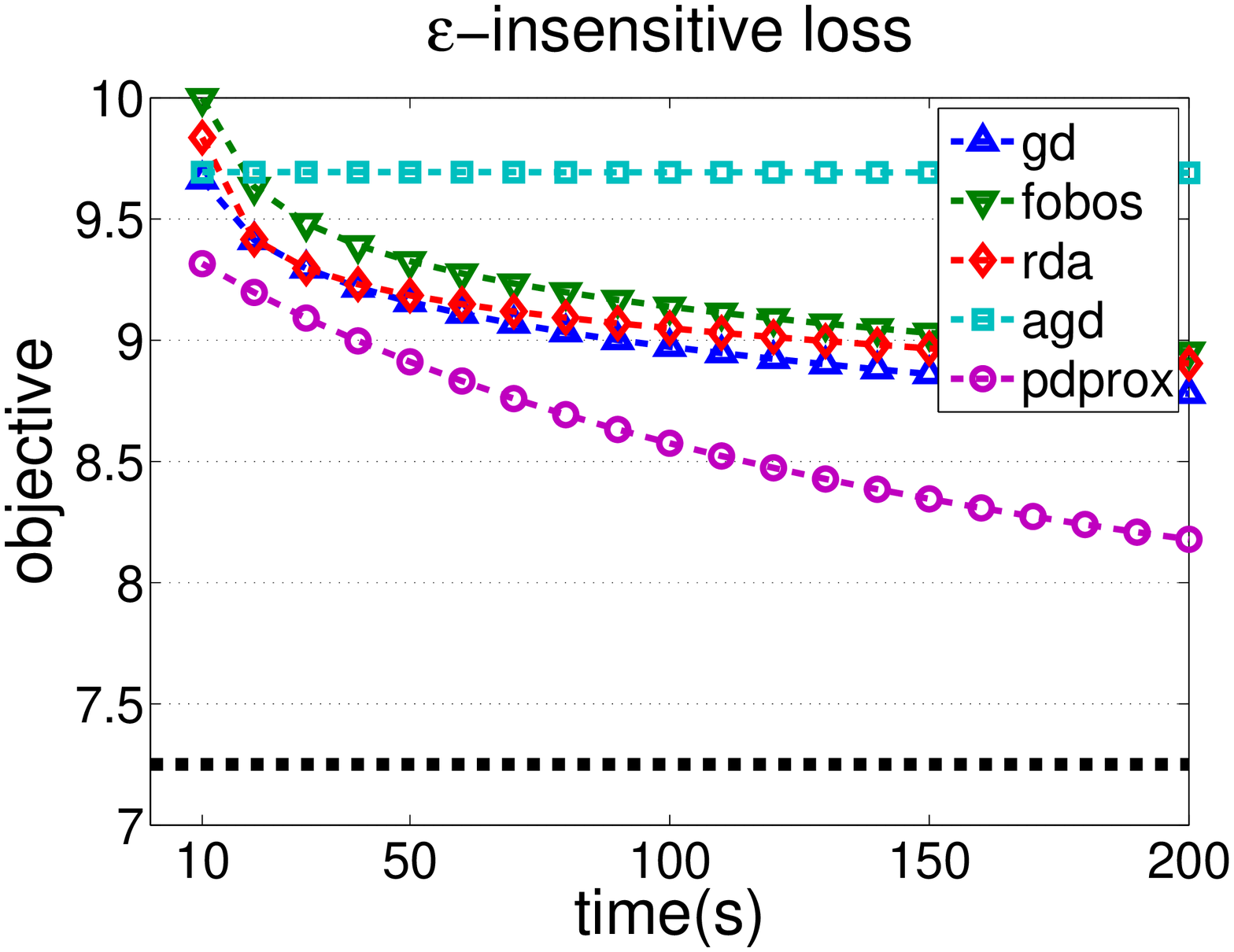}}

\caption{Comparison of convergence speed for absolute loss ((a),(b)) and $\epsilon$-insensitive loss ((c),(d)) with $\ell_{1,\infty}$ regularizer. Note that for better visualization we plot the objective starting from 10 seconds  in all figures. The objective  of all algorithms at 0 second is 20.52. {The black bold dashed lines in all Figures show the optimal objective value by running Pdprox with a large number of iterations so that the difference between the last two objective values is less than $10^{-4}$.}}\label{fig:2}
\end{figure}

\subsection{$\ell_{1,\infty}$ regularization for Multi-Task Learning}\label{sec:exp2}
In this experiment, we perform  multi-task regression with $\ell_{1,\infty}$ regularizer~\citep{10.1109/ICDM.2009.128}. Let $\mathbf W=(\w_1,\cdots, \w_k)\in\mathbb R^{d\times k}$ denote the $k$ linear hypotheses for regression.  The $\ell_{1,\infty}$ regularizer is given by $R(\mathbf W)=\sum_{j=1}^d \|\w^{j}\|_\infty$, where $\w^j$ is the $j$th row of $\mathbf W$. To apply Nesterov's method, we rewrite the $\ell_{1,\infty}$ regularizer as  $R(\mathbf W)=\max_{\|\u_j\|_1\leq 1}\sum_{j=1}^d{\u_j}^{\top}\w^{j}$. We use  the School data set~\citep{journals/ml/ArgyriouEP08}, a common dataset for multi-task learning. This data set contains the examination scores of $15,362$ students from $139$ secondary schools corresponding to $139$ tasks, one for each school. Each student in this dataset is described by 27 attributes.  We follow the setup in~\citep{journals/ml/ArgyriouEP08}, and generate a training data set with $75\%$ of the examples from each school and a testing data set with the remaining examples. We test the algorithms using both the absolute loss and the $\epsilon$-insensitive loss with $\epsilon=0.01$.  The initial stepsize for \textbf{gd}, \textbf{fobos}, and \textbf{rda} are tuned similarly as that for the experiment of group lasso. We plot the objective versus the running time  in Figure~\ref{fig:2}, from which we observe the similar results in the group feature selection task, i.e. (i) the proposed \textbf{Pdprox} algorithm outperforms the baseline algorithms, (ii) the baseline algorithm of \textbf{agd} becomes even worse for $\epsilon$-insensitive loss than for absolute loss.  Finally, we observe that the regression performance measured by root mean square error (RMSE) on the testing data set  for absolute loss and $\epsilon$-insensitive loss is $10.34$ (optimized by \textbf{Pdprox}), comparable to the performance reported in~\citep{10.1109/ICDM.2009.128}.

\begin{figure}[t]
\centering
%\subfigure[$\lambda=10^{-3}$]{\includegraphics[scale=0.25]{trace_hinge_lamda_1e-3_alldata_nonest.eps}}\hspace*{0.2in}
%\subfigure[$\lambda=10^{-5}$]{\includegraphics[scale=0.25]{trace_hinge_lamda_1e-5_alldata_nonest.eps}}\\
%\subfigure[$\lambda=10^{-3}$]{\includegraphics[scale=0.25]{trace_abs_lamda_1e-3_alldata_nonest.eps}}\hspace*{0.2in}
%\subfigure[$\lambda=10^{-5}$]{\includegraphics[scale=0.25]{trace_abs_lamda_1e-5_alldata_nonest.eps}}\\
\subfigure[$\lambda=10^{-3}$]{\includegraphics[scale=0.25]{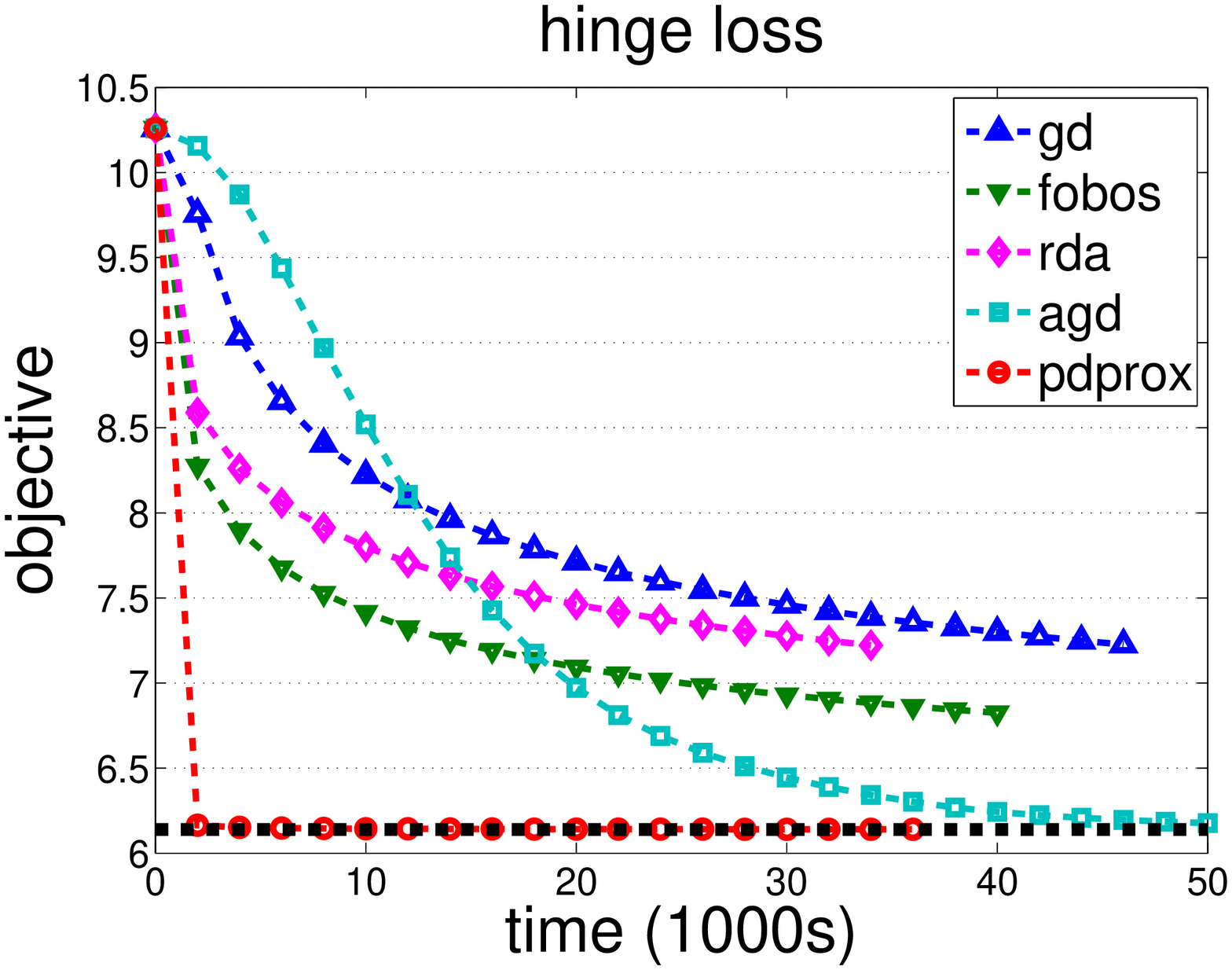}}\hspace*{0.2in}
\subfigure[$\lambda=10^{-5}$]{\includegraphics[scale=0.25]{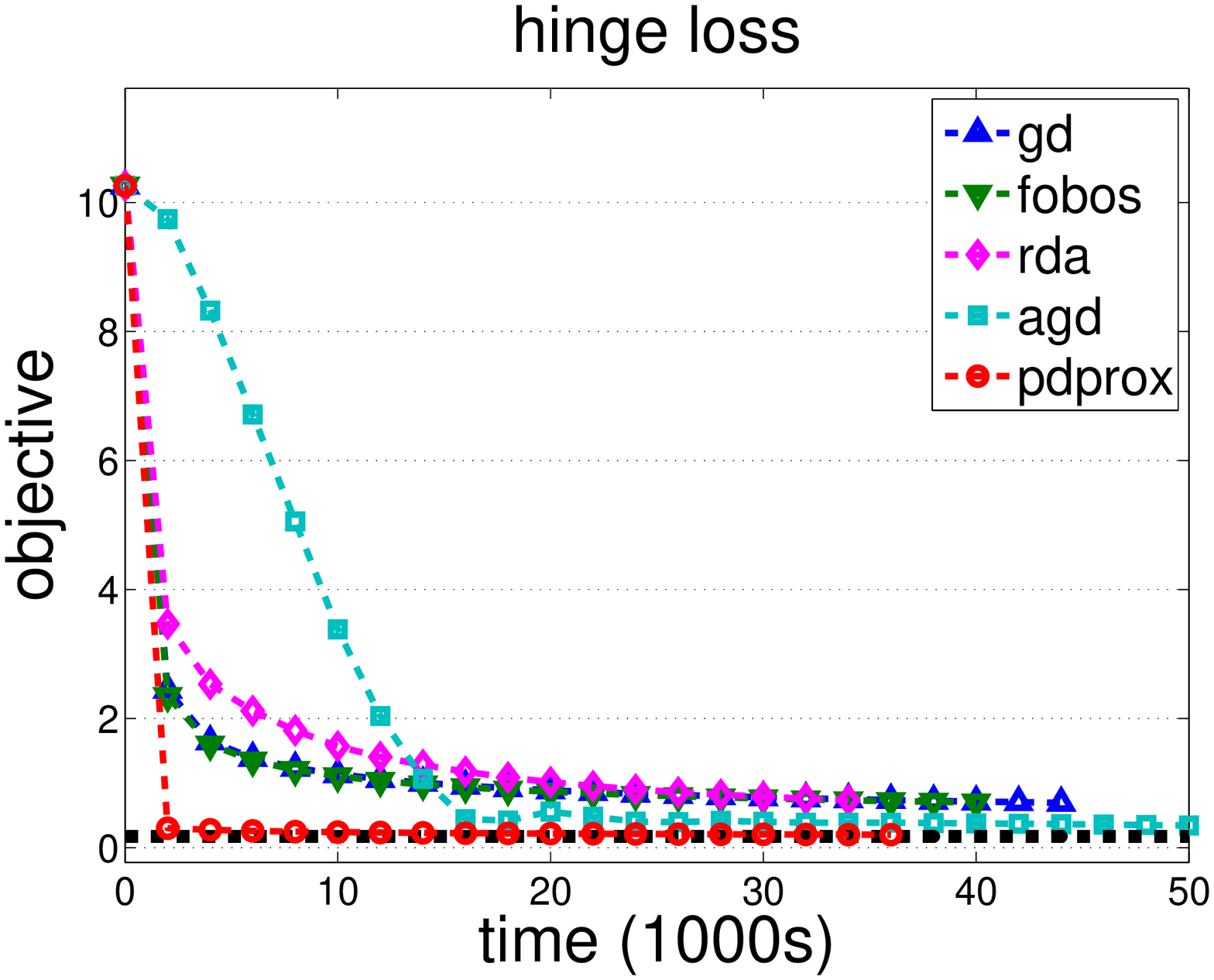}}\\
\subfigure[$\lambda=10^{-3}$]{\includegraphics[scale=0.25]{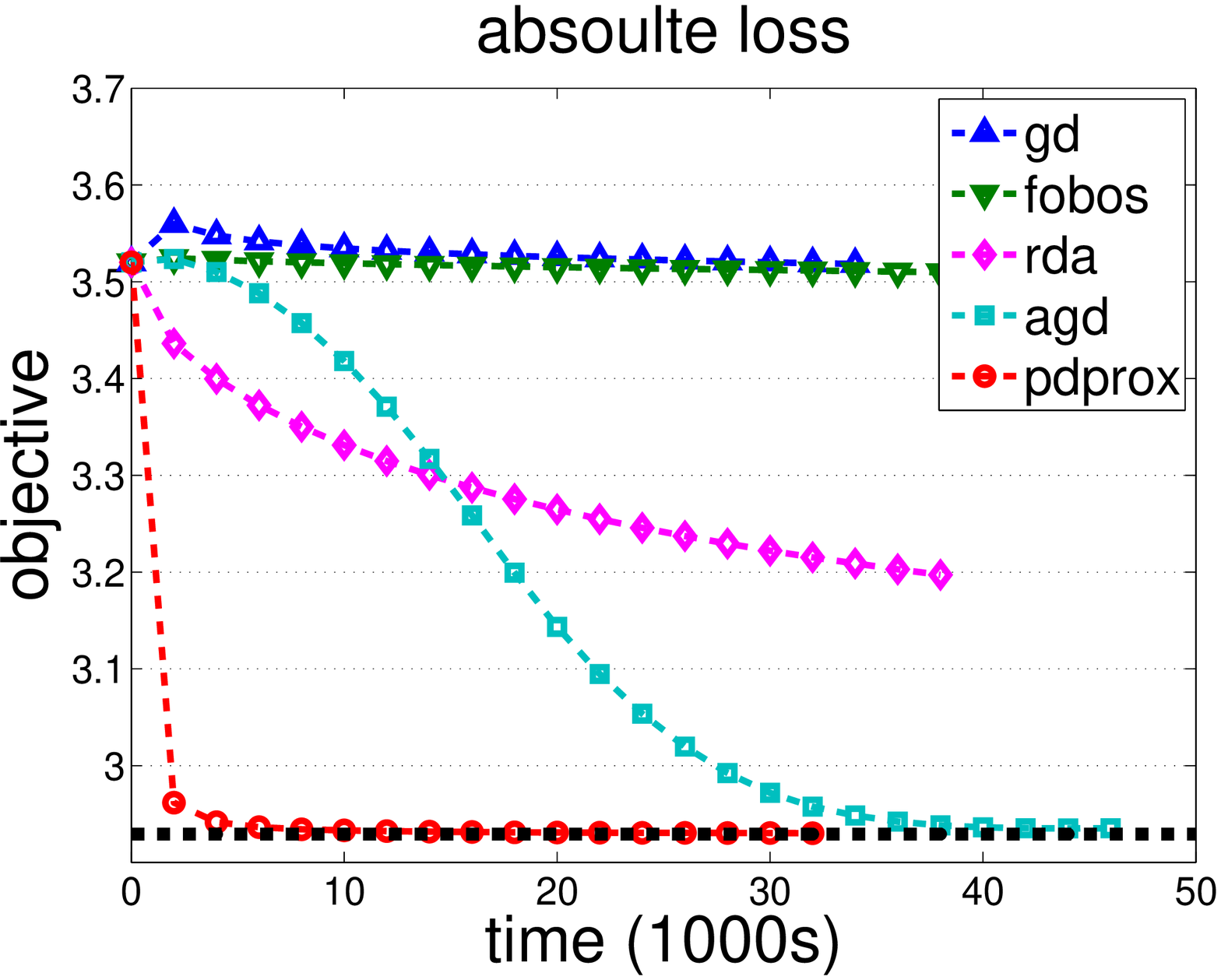}}\hspace*{0.2in}
\subfigure[$\lambda=10^{-5}$]{\includegraphics[scale=0.25]{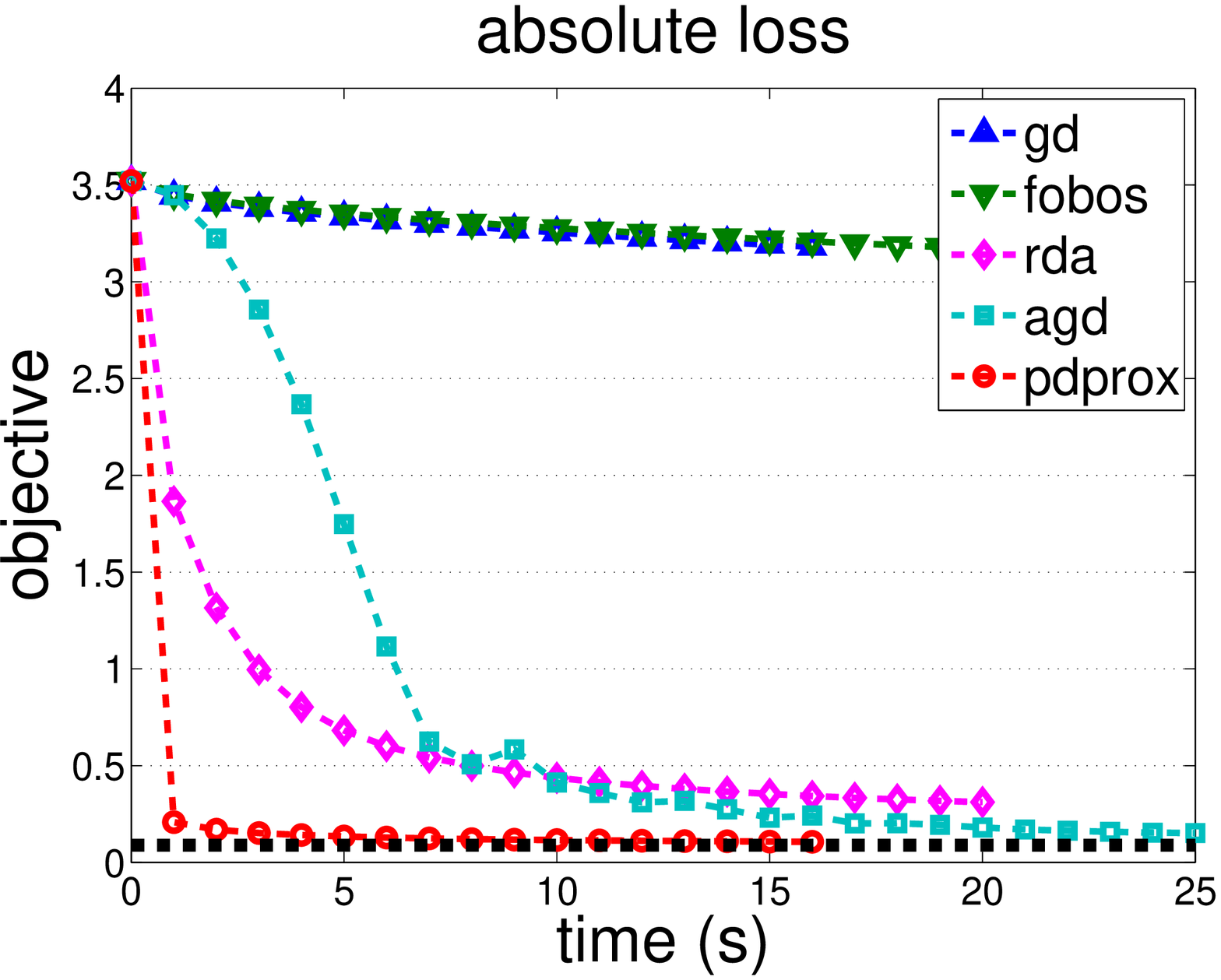}}\\
\caption{Comparison of convergence speed for (a,b): max-margin matrix factorization with hinge loss and trace norm regularizer,  and (c,d): matrix completion with absolute loss and trace norm regularizer. {The black bold dashed lines in all Figures show the optimal objective value by running Pdprox with a large number of iterations so that the difference between the last two objective values is less than $10^{-4}$.}}\label{fig:2-1}
\end{figure}

\subsection{Trace norm regularization for Max-Margin Matrix Factorization/ Matrix Completion}\label{sec:exp3}
In this experiment, we evaluate the proposed method using trace norm regularization, a regularizer often  used in max-margin matrix factorization and matrix completion, where the goal is to recover a full matrix $\mathbf X$ from partially observed matrix $\mathbf Y$.  The objective  is composed of a loss function measuring the difference between $\mathbf X$ and $\mathbf Y$ on the observed entries and a trace norm regularizer on $\mathbf X$, assuming that $\mathbf X$ is low rank.  Hinge loss function is used in max-margin matrix factorization~\citep{Rennie:2005:FMM:1102351.1102441,citeulike:3224462}, and absolute loss is used instead of square loss in matrix completion. We test on 100K MovieLens data set~\footnote{ \url{http://www.cs.umn.edu/Research/GroupLens/}} that contains 1 million ratings  from $943$ users on 1682 movies. Since there are five distinct ratings that can be assigned to each movie,  we follow~\citep{Rennie:2005:FMM:1102351.1102441,citeulike:3224462} by introducing  four thresholds $\theta_{1,2,3,4}$ to measure the hinge loss between the predicted value $X_{ij}$ and the ground truth $Y_{ij}$. Because our goal is to demonstrate the efficiency of the proposed algorithm for non-smooth optimization, therefore we simply set $\theta_{1,2,3,4}=(0, 3, 6, 9)$.  Note that we did not compare to the optimization algorithm in~\citep{Rennie:2005:FMM:1102351.1102441} since it cast the problem into a non-convex problem by using explicit factorization of $\mathbf X=\mathbf U\mathbf V^{\top}$, which suffers a local minimum, and the optimization algorithm in~\citep{citeulike:3224462} since it formulated the problem into a SDP problem, which suffers from a high computational cost. To apply Nesterov's method, we write $\|\mathbf X\|_1=\max_{\|\mathbf A\|\leq 1} tr(\mathbf A^{\top}\mathbf X)$, and at each iteration we need to solve a maximization problem $\max_{\|\mathbf A\|\leq 1}\lambda tr(\mathbf A^{\top}\mathbf X ) - \frac{\mu}{2}\|\mathbf A\|_F^2$, where $\|\mathbf A\|$ is the spectral norm on $\mathbf A$. The solution of this optimization is obtained  by performing SVD decomposition of $\mathbf X$ and thresholding  the singular values appropriately. Since MovieLens data set is much larger than the data sets used in last two subsections, in this experiment, we (i) run all the algorithms for 1000 iterations and plot the objective versus time; (ii) enlarge the range of tuning parameters to $2^{[-15:1:15]}$. The results are shown in Figure~\ref{fig:2-1}, from which we observe that (i) Pdprox can quickly reduce the objective in a small amount of time, e.g., for absolute loss when setting $\lambda=10^{-3}$  in order to obtain a solution with an accuracy of $10^{-3}$, \textbf{Pdprox} needs $10^{3}$ second, while \textbf{agd} needs $3.2\times 10^{4}$ seconds;  (ii) for absolute loss no matter how we tune the stepsizes for each baseline algorithm, Pdprox performs the best; and (iii) for hinge loss when $\lambda=10^{-5}$, by tuning the stepsizes of baseline algorithms,  \textbf{gd, fobos}, and  \textbf{rda} can achieve comparable performance to Pdprox. We note that although  \textbf{agd} can achieve smaller objective value than Pdprox at the end of $1000$ iterations, however, the objective value is reduced slowly. 
%$\mathbf A= \mathbf X(\mu \mathbf I + \mathbf Z)^{-1}$, $\mathbf X= \mathbf U\Sigma\mathbf V^{\top}$, $\mathbf Z= \mathbf V\Sigma_z\mathbf V^{\top}, \sigma(\mathbf A)_i= \sigma(\mathbf X)_i/\mu$, if $\sigma(\mathbf X)_i\leq \mu$, otherwise equal 1. 

\subsection{Comparison: Pdprox vs Primal-Dual method with excessive gap technique}\label{sec:expg}
In this section, we compare the proposed primal dual prox method to Nesterov's primal dual method~\citep{Nesterov:2005:EGT:1081200.1085585}, which is an improvement of his algorithm in~\citep{Nesterov2005}. The algorithm in~\citep{Nesterov2005} for non-smooth optimization suffers a problem of setting the value of smoothing parameter that requires the number of iterations to be fixed in advance.  \citep{Nesterov:2005:EGT:1081200.1085585} addresses the problem by exploring an excessive gap technique and updating both the primal and dual variables, which is similar to the proposed Pdprox method.  We refer to this baseline as \textbf{Pdexg}.  We run both algorithms on the three tasks as in subsections~\ref{sec:exp1}, \ref{sec:exp2}, and \ref{sec:exp3}, i.e.,  group feature selection with  hinge loss and group lasso regularizer on MEMset Donar data set,  multi-task learning with $\epsilon$-insensitive loss and $\ell_{1,\infty}$ regularizer on School data set,  and matrix completion with absolute loss  and trace norm regularizer on 100K MovieLens data set.   To implement the primal dual method with excessive gap technique, we need to intentionally add a domain on the optimal primal variable, which can be derived from the formulation.  For example,  in group feature selection problem whose objective is $1/n\sum_{i=1}^n\ell(\w^{\top}\x_i, y_i) + \lambda \sum_g \sqrt{d_g}\|\w_g\|_2$, we can derive that the optimal primal variable $\w^*$ lies in $\|\w\|_2\leq \sum_{g}\|\w_g\|_2\leq \frac{1}{\lambda \sqrt{d_{\min}}}$, where $d_{\min}=\min_g d_g$. Similar techniques are applied to multi-task learning and matrix completion. 

\begin{figure}[t]
\centering
%\subfigure[Group feature selection: hinge loss and group lasso regularizer with $\lambda=10^{-3}$.]{\includegraphics[scale=0.25]{pd_prox_exg_hinge_gl_time_iterations_lamda_1e-3.eps}}\hspace*{0.2in}
%\subfigure[Group feature selection: hinge loss and group lasso regularizer with $\lambda=10^{-5}$. ]{\includegraphics[scale=0.25]{pd_prox_exg_hinge_gl_time_iterations_lamda_1e-5.eps}}\hspace*{0.2in}

\subfigure[Group feature selection: hinge loss and group lasso regularizer with $\lambda=10^{-3}$.]{\includegraphics[scale=0.25]{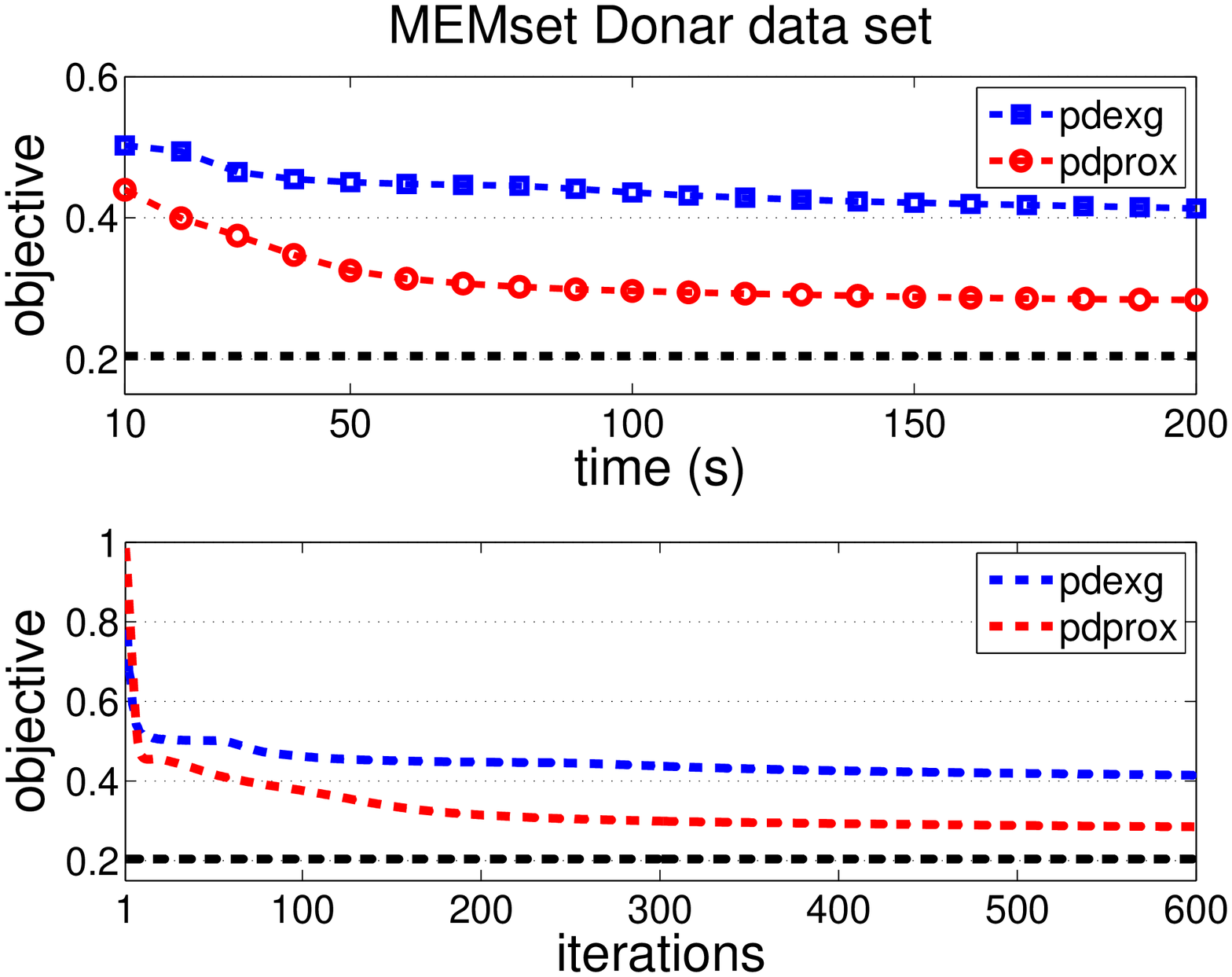}}\hspace*{0.2in}
\subfigure[Group feature selection: hinge loss and group lasso regularizer with $\lambda=10^{-5}$. ]{\includegraphics[scale=0.25]{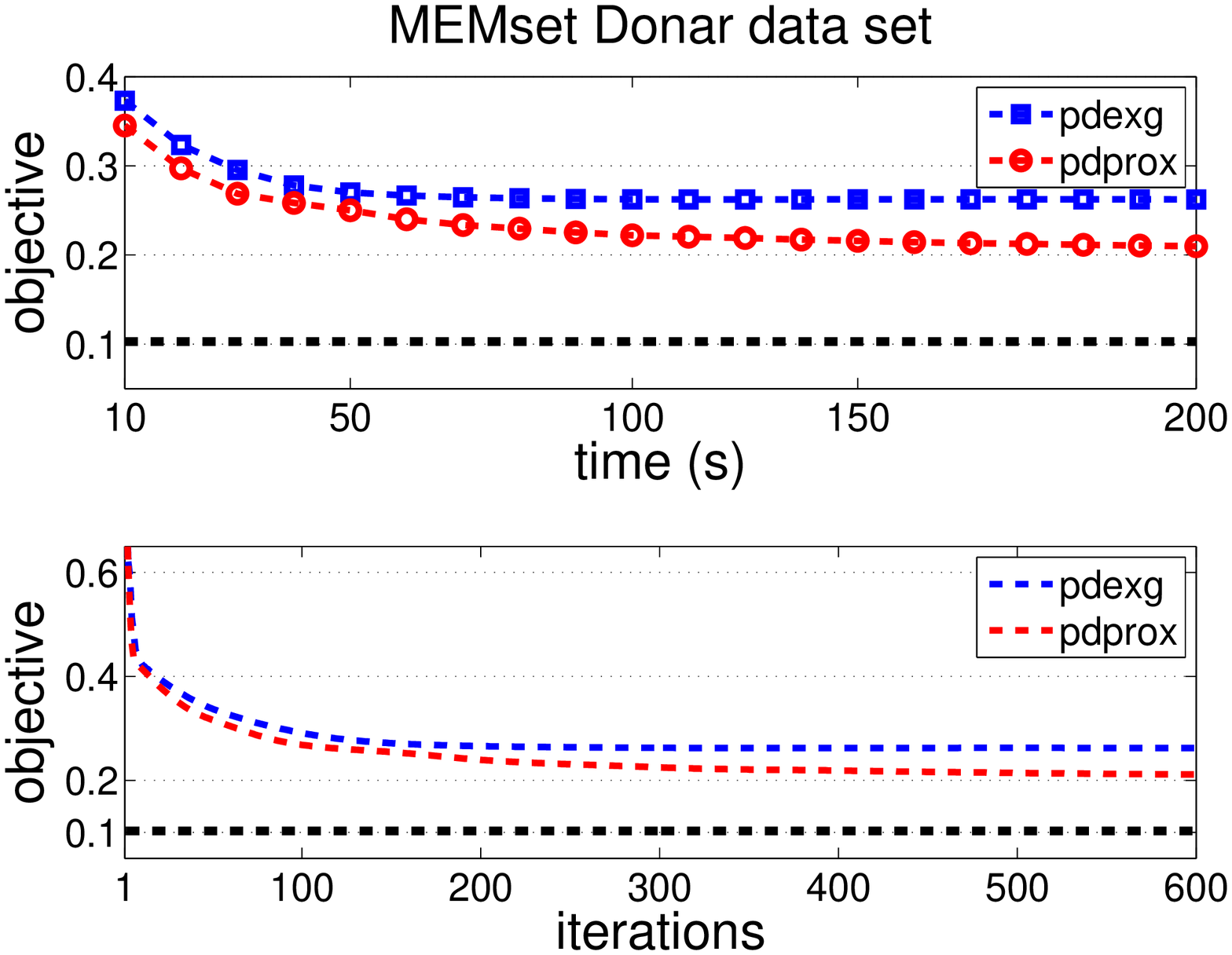}}\hspace*{0.2in}

%\subfigure[Multi-task learning: $\epsilon$-insensitive loss and $\ell_{1,\infty}$ regularizer with $\lambda=10^{-3}$. ]{\includegraphics[scale=0.25]{pd_prox_exg_epsilon_l1inf_time_iterations_lamda_1e-3.eps}}\hspace*{0.2in}
%\subfigure[Multi-task learning: $\epsilon$-insensitive loss and $\ell_{1,\infty}$ regularizer with $\lambda=10^{-5}$. ]{\includegraphics[scale=0.25]{pd_prox_exg_epsilon_l1inf_time_iterations_lamda_1e-5.eps}}\hspace*{0.2in}

\subfigure[Multi-task learning: $\epsilon$-insensitive loss and $\ell_{1,\infty}$ regularizer with $\lambda=10^{-3}$. ]{\includegraphics[scale=0.25]{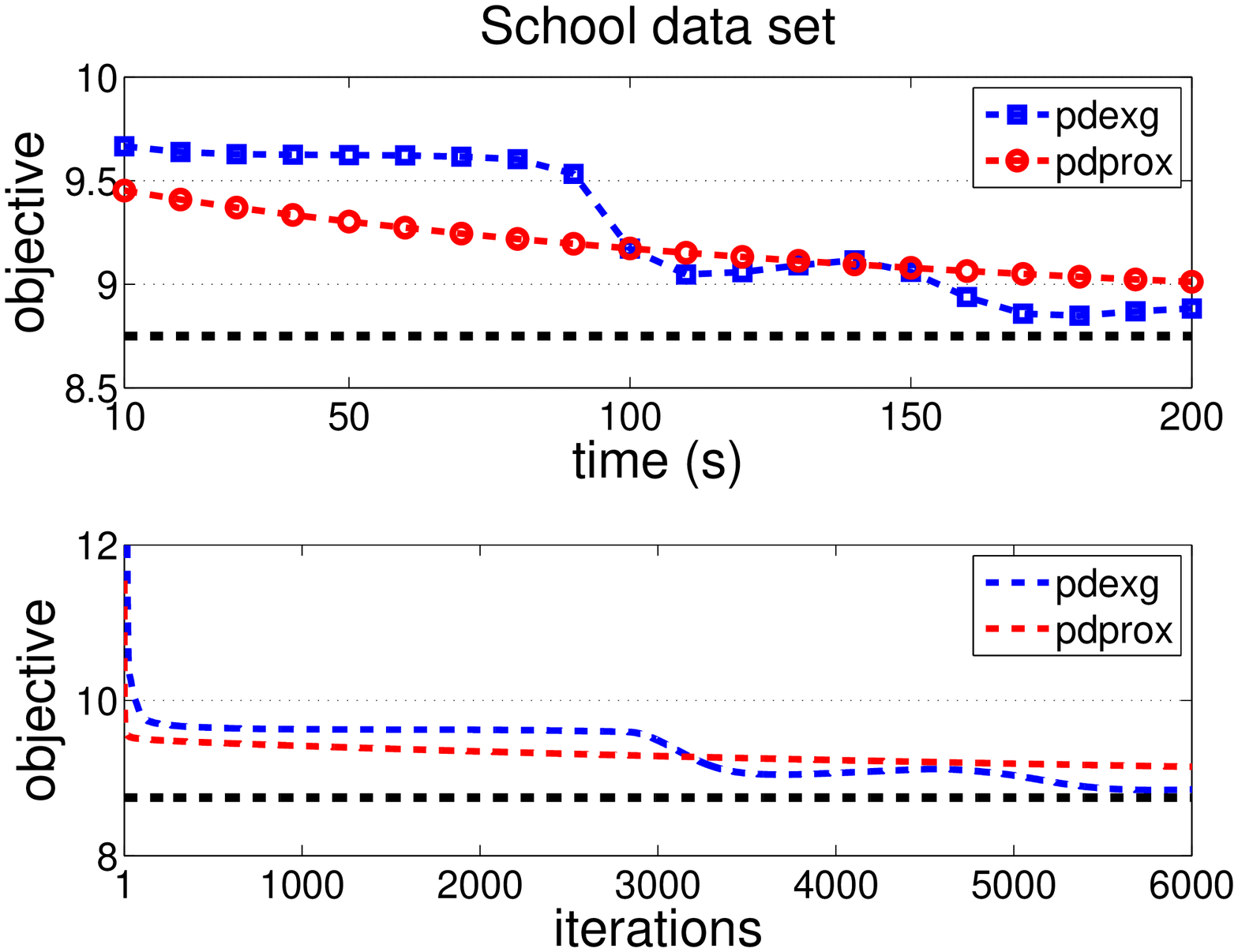}}\hspace*{0.2in}
\subfigure[Multi-task learning: $\epsilon$-insensitive loss and $\ell_{1,\infty}$ regularizer with $\lambda=10^{-5}$. ]{\includegraphics[scale=0.25]{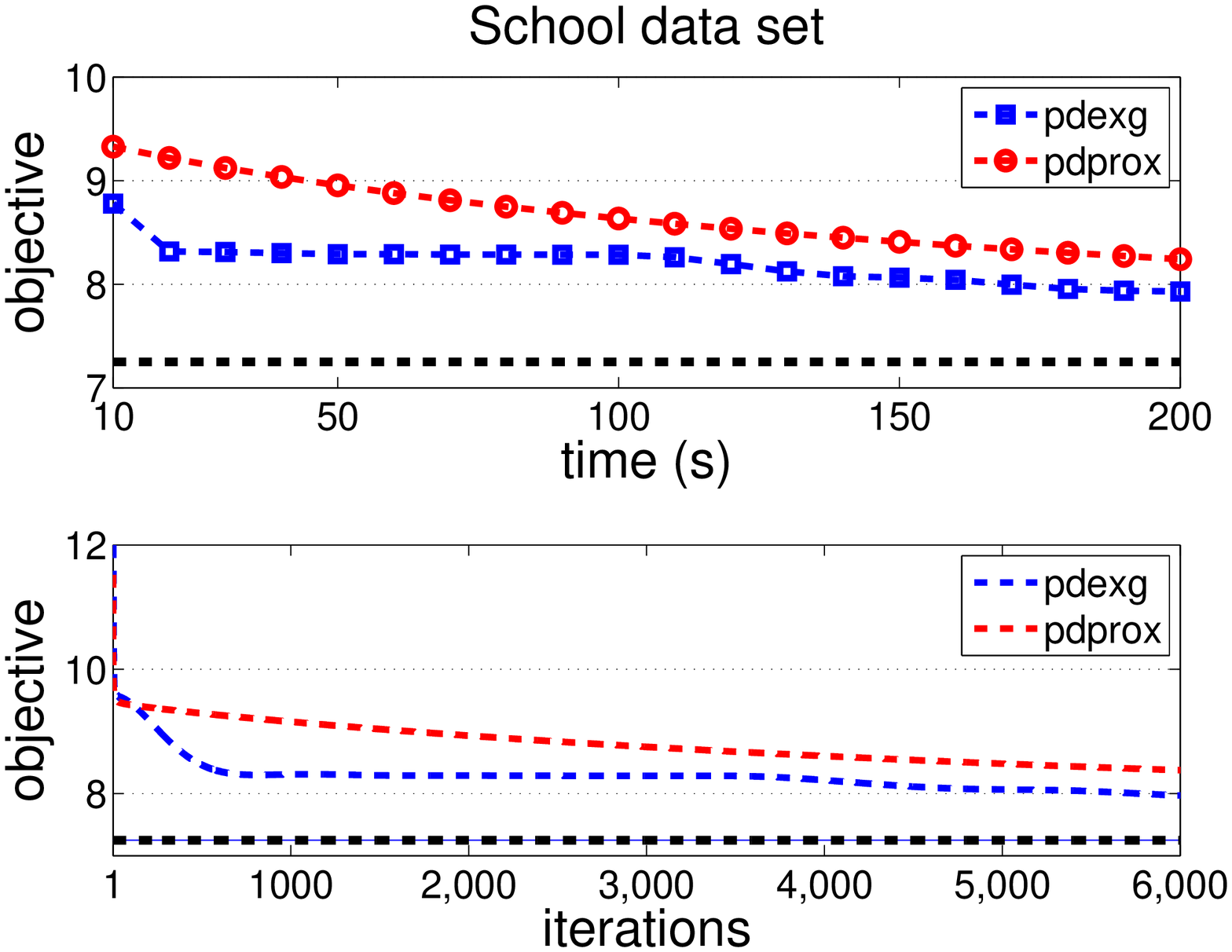}}\hspace*{0.2in}

%\subfigure[Matrix completion: absolute loss and trace norm regularizer with $\lambda=10^{-3}$.]{\includegraphics[scale=0.25]{pd_prox_exg_abs_trace_time_iterations_lamda_1e-3.eps}}\hspace*{0.2in}
%\subfigure[Matrix completion: absolute loss and trace norm regularizer with $\lambda=10^{-5}$.]{\includegraphics[scale=0.25]{pd_prox_exg_abs_trace_time_iterations_lamda_1e-5.eps}}\hspace*{0.2in}

\subfigure[Matrix completion: absolute loss and trace norm regularizer with $\lambda=10^{-3}$.]{\includegraphics[scale=0.25]{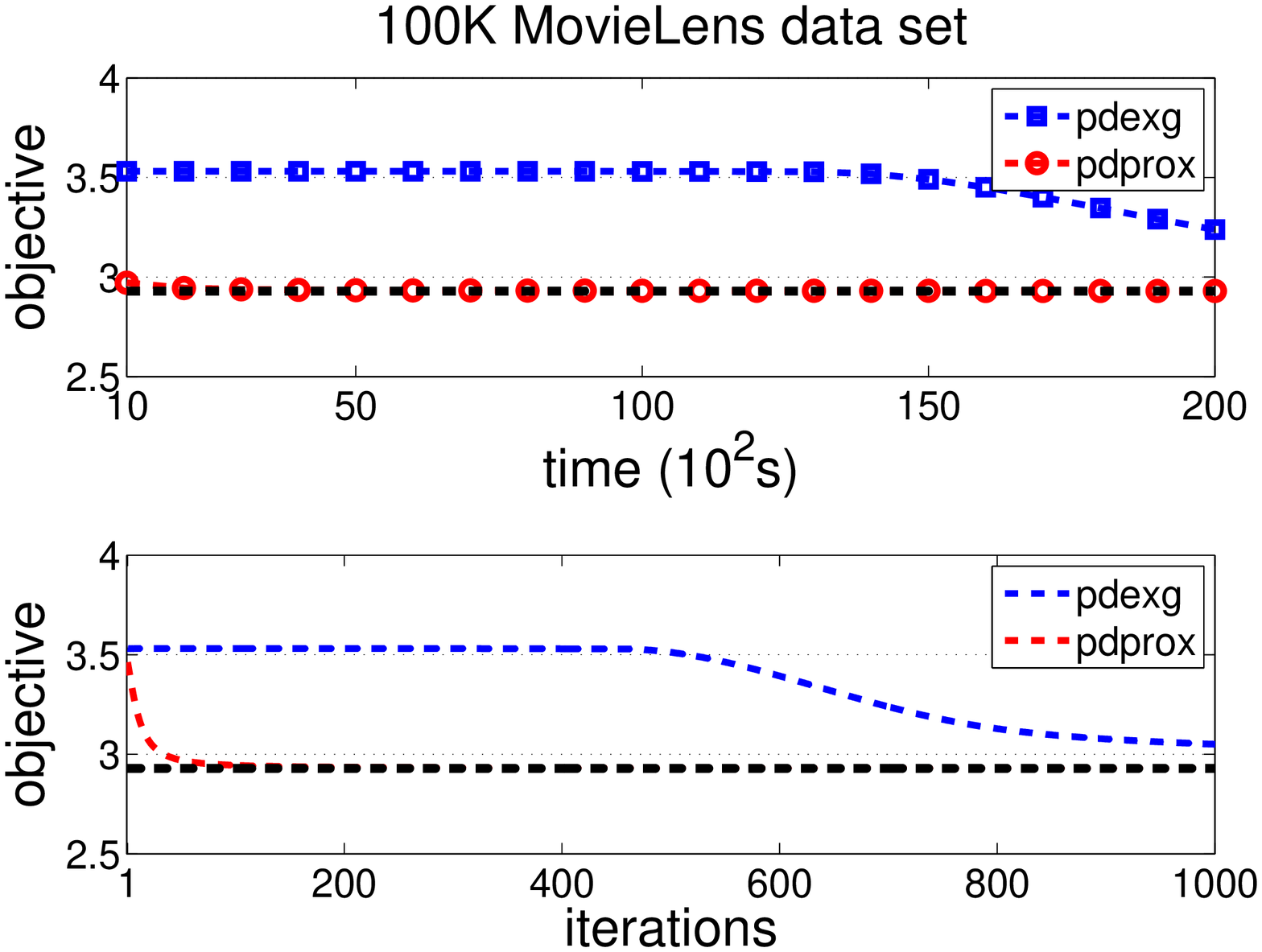}}\hspace*{0.2in}
\subfigure[Matrix completion: absolute loss and trace norm regularizer with $\lambda=10^{-5}$.]{\includegraphics[scale=0.25]{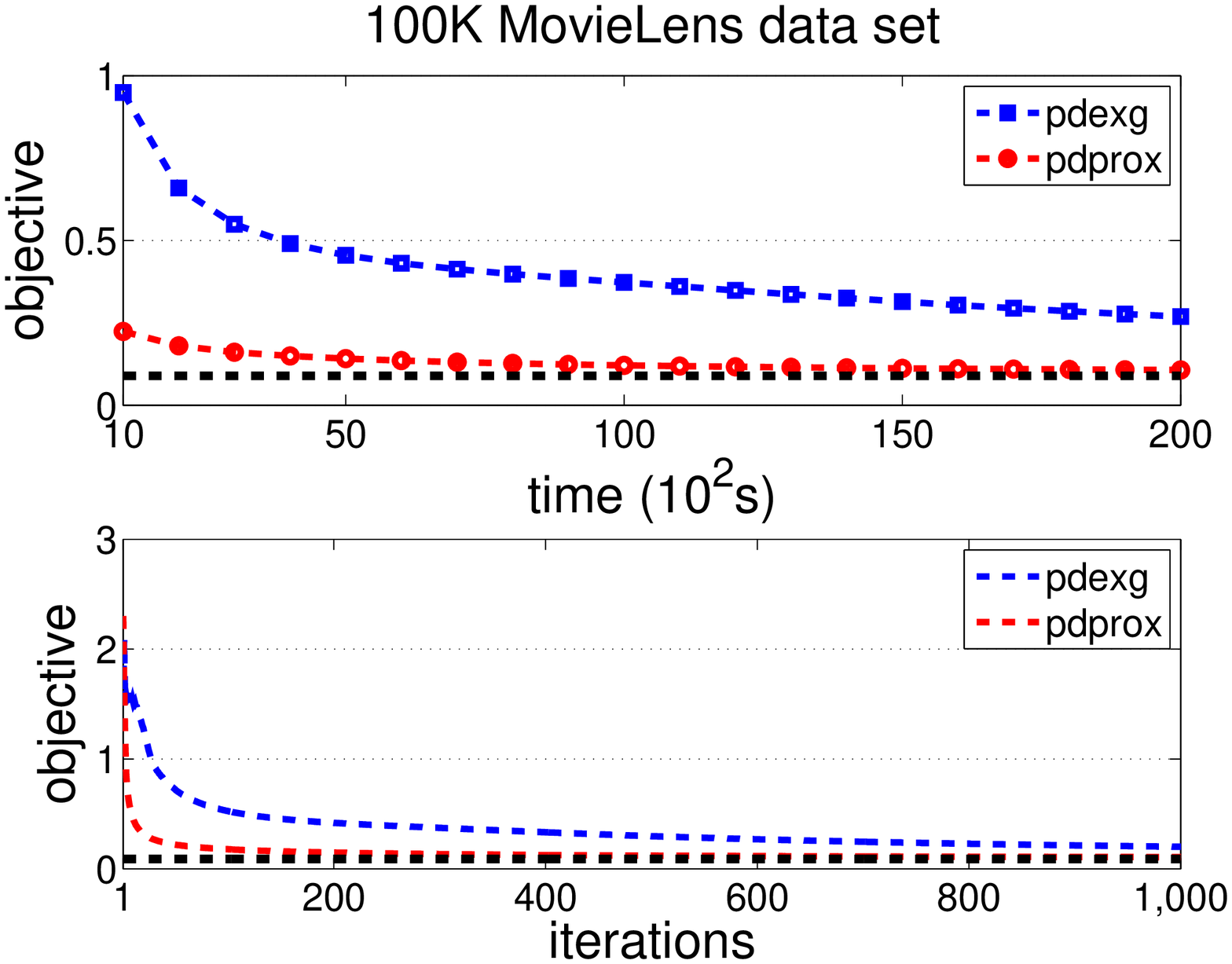}}\hspace*{0.2in}

\caption{Pdprox vs Primal-Dual method with excessive gap technique. {The black bold dashed lines in all Figures show the optimal objective value by running Pdprox with a large number of iterations so that the difference between the last two objective values is less than $10^{-4}$.}}\label{fig:exg}
\end{figure} 

The performance of the two algorithms on the three tasks is shown  in Figure~\ref{fig:exg}. Since both algorithms are in the same category, i.e. updating both primal and dual variables at each iteration and having a convergence rate in the order of $O(1/T)$,  we also plot the objective versus the number of  iterations in the bottom panels of each subfigure in Figure~\ref{fig:exg}.
%Before discussing the obtained results, we need to make a point clear about Figure~\ref{fig:exg}. Although both algorithms are primal dual methods and share the same convergence rate $O(1/T)$, but in terms of running time, there is a slight difference between two algorithms. The proposed Pdprox algorithm  is more efficient than Pdexg at each single iteration since Pdexg needs to solve several auxiliary optimization problems at each iteration. Hence, due to the mentioned difference, it is more meaningful to compare the objective value versus the number of iterations instead of comparing the objective versus running time. 
The results show that the proposed Pdprox method converges faster than Pdexg on MEMset Donar data set for group feature selection  with hinge loss and group lasso regularizer, and on 100K MovieLens data set for matrix completion with absolute loss and trace norm regularizer. However, Pdexg performs better on School data set  for multi-task learning with $\epsilon$-insensitive loss and $\ell_{1,\infty}$ regularizer. One interesting phenomenon we can observe from Figure~\ref{fig:exg}  is that for larger values of  $\lambda$ (e.g., $10^{-3}$), the improvement of Pdprox over Pdexg is also larger. The reason is that the proposed Pdprox captures the sparsity of primal variable at each iteration. This does not hold for  Pdexg because it casts the non-smooth regularizer into a dual form and consequently does not explore the sparsity of the primal variable at each iteration. Therefore the larger of $\lambda$, the sparser of the primal variable at each iteration in Pdprox that yields  to larger improvement  over Pdexg. For the example of group feature selection task with hinge loss and group lasso regularizer,  when setting $\lambda=10^{-3}$, the sparsity of the primal variable (i.e., the proportion of the number of group features with zero norm) in Pdprox averaged over all iterations is $0.7886$. However, by reducing $\lambda$ to $10^{-5}$ the average sparsity of the primal variable in Pdprox is reduced to $0$. In both settings the average sparsity of the primal variable in Pdexg is $0$.  The same argument also  explains why Pdprox does not perform as well as Pdexg on School data set when setting $\lambda=10^{-5}$, since in this case the primal variables in both algorithms are not sparse. When setting $\lambda=10^{-3}$, the average sparsity (i.e., the proportion of the number of features with zero norm across all tasks) of the primal variable in Pdprox and Pdexg is $0.3766$ and $0$, respectively. 
Finally, we also observe similar performance of the two algorithms on the three tasks with other loss functions including absolute loss for group feature selection, absolute loss for multi-task learning,  and hinge loss for max-margin matrix factorization.

%Note that we plot the objective versus the number of iterations instead of running
%time is because both algorithms are primal dual method and achieve $O(1/T)$ convergence
%rate, and therefore by visualizing the objective versus the number of iterations we can compare the convergence rate of the two algorithms empirically. However, the proposed
%algorithm Pdprox is more efficient than Pdexg at each iteration, since Pdexg needs to
%solve several auxiliary optimization problems at each iteration, and hence comparing the
%objective versus the number of iterations is less fair for Pdprox than comparing the objective
%versus running time. The results show that the proposed Pdprox method converges faster than Pdexg on MEMset Donar data set for group feature selection  with hinge loss and group lasso regularizer and 100K MovieLens data set for matrix completion with absolute loss and trace norm regularizer. However, Pdexg performs better on School data set  for multi-task learning with $\epsilon$-insensitive loss and $\ell_{1,\infty}$ regularizer. We also observe similar performance of the two algorithms on the three tasks with other loss functions, e.g. absolute loss for group feature selection, absolute loss for multi-task learning,  and hinge loss for max-margin matrix factorization. 

\begin{table}[t]
%\centering\caption{Comparison of running time (second) and classification accuracy  for~(\ref{eqn:constrained-obj}) with varied $m$}\label{tab:m}\small{
%\begin{tabular}{llcc}
%\toprule
% Data Set&m&Running time & Accuracy\\
% \midrule
% %&m=10&9s&0.8009\\
% %&m=10&0.1s&0.8009\\
% %\raisebox{0.1in}{ionosphere}&m=250(n)&49s&0.7381\\
%% \raisebox{0.1in}{ionosphere}&m=n=250&0.05s&0.7381\\
%% \midrule
%%&m=10&56s&0.8328\\
%% &m=10&0.7s&0.8304\\
% %\raisebox{0.1in}{arcene}&m=100(n)&1792s&0.7232\\
%% \raisebox{0.1in}{arcene}&m=n=100&2s&0.7232\\
% %\midrule
%% &m=100&81s&0.7700\\
%&m=100&13s&0.7675\\
% %\raisebox{0.1in}{a9a}&m=32432(n)&$>$5h$^*$&0.7593\\
% \raisebox{0.1in}{a9a}&m=n=32432&125s&0.7589\\
% \midrule
% %&m=100&795s&0.9580\\
% &m=100&19s&0.9558\\
% %\raisebox{0.1in}{rcv1}&m=22421(n)&$>4$h$^*$&0.9654\\
% \raisebox{0.1in}{rcv1}&m=n=22421&147s&0.9654\\
% \midrule
%\bottomrule
%\end{tabular}}
\centering\caption{Running time (forth column) and classification accuracy (fifth column) of Pdprox for~(\ref{eqn:constrained-obj}) and of Liblinear on noisily labeled training data, where noise is added to labels by random flipping with a probability $0.2$. We  fix $\lambda=1/n$ or $C=1$ in Liblinear. In the second column, we report the number of training examples ($n$), the number of attributes ($d$), and also the accuracy by training Liblinear on the original data and evaluating it on the testing data. }\label{tab:m}\small{
{
\begin{tabular}{lllcc}
\toprule
 Data Set&($n,d$)/ACC&Alg.&Running Time & ACC\\
 \midrule
 %&m=10&9s&0.8009\\
 %&m=10&0.1s&0.8009\\
 %\raisebox{0.1in}{ionosphere}&m=250(n)&49s&0.7381\\
% \raisebox{0.1in}{ionosphere}&m=n=250&0.05s&0.7381\\
% \midrule
%&m=10&56s&0.8328\\
% &m=10&0.7s&0.8304\\
 %\raisebox{0.1in}{arcene}&m=100(n)&1792s&0.7232\\
% \raisebox{0.1in}{arcene}&m=n=100&2s&0.7232\\
 %\midrule
% &m=100&81s&0.7700\\
{a9a}&(32561, 123)&Pdprox(m=200)&0.82s(0.01)&0.8344(0.00)\\
 %&m=32432(n)&$>$5h$^*$&0.7593\\
%(n=32561, d=123)&m=32561&684s&0.8860\\
&0.8501& Liblinear&1.15s(0.57)&0.7890(0.00)\\
 \midrule
 %&m=100&795s&0.9580\\
 rcv1&(20242, 47236)&Pdprox(m=200)&1.57s(0.23)&0.9405(0.00)\\
 %\raisebox{0.1in}{rcv1}&m=22421(n)&$>4$h$^*$&0.9654\\
%(n=20242, d=47236)&m=20242&952s&0.9772\\
&0.9654&Liblinear&3.30s(0.74)&0.9366(0.00)\\
 \midrule
covtype&(571012, 54)  &Pdprox(m=4000)&48s(3.34)&0.7358($0.00$)\\
 %\raisebox{0.1in}{rcv1}&m=22421(n)&$>4$h$^*$&0.9654\\
 %(n=571012, d=54)&m=571012&22524s&0.7768\\
&0.7580&Liblinear&37s(0.64)&0.6866($0.00$)\\
\bottomrule
\end{tabular}}}

\end{table}

\subsection{Sparsity constraint on the dual variables}\label{sec:exp4}
In this subsection, we examine empirically the proposed algorithm for optimizing the problem in equation~(\ref{eqn:constrained-obj}), in which a sparsity constraint is introduced for the dual variables.  We test the algorithm on three large data sets from the UCI repository, namely, a9a, rcv1(binary) and covtye\footnote{\url{http://www.csie.ntu.edu.tw/~cjlin/libsvmtools/datasets}}.  In the experiments we use $\ell^2_2$ regularizer and  fix $\lambda = 1/n$.   First, we  run the proposed algorithm 100 seconds on the three data sets with different values of $m=100, 200, 400$ and plot the objective versus the number of iterations. The results are  shown in Figure~\ref{fig:m}, which verify that the  convergence is faster with smaller $m$, which is consistent with  the convergence bound $O([D + m]/[\sqrt{2n}\lambda]$) of the proposed algorithm for~(\ref{eqn:constrained-obj}).%, i.e., the smaller the $m$, the faster the convergence.

Second, we demonstrate that the formulation in equation~(\ref{eqn:constrained-obj}) with a sparsity constraint on the dual variables is useful in the case when labels are contaminated with noise. To generate the noise in labels, we randomly flip the labels with a probability $0.2$. We run both the proposed algorithm for~(\ref{eqn:constrained-obj}) and Liblinear\footnote{\url{http://www.csie.ntu.edu.tw/~cjlin/liblinear}} on the training data with noise added to the labels. The stopping criterion for the proposed algorithm is when duality gap is less than $10^{-3}$, and for Liblinear is when the maximal dual violation is less than $10^{-3}$. 
The running time and accuracy on testing data  averaged over $5$ random trials are reported in Table~\ref{tab:m}, which demonstrate that in the presence of noise in labels, by adding a sparsity constraint on the dual variables, we are able to obtain better performance than Liblinear trained on the noisily labeled data. Furthermore the running time of Pdprox is comparable to, if not less than, that of Liblinear.  

Finally, we note that choosing a small $m$ in equation~(\ref{eqn:constrained-obj}) is different from simply training a classifier with a small number of examples. For instance, for rcv1, we have run the experiment with $200$ training examples, randomly selected from the entire data set. With the same stopping criterion, the testing performance is $0.8131(\pm 0.05)$, significantly lower than that of optimizing~(\ref{eqn:constrained-obj}) with $m = 200$.

\begin{figure}[t]
\centering
\subfigure[a9a]{\includegraphics[scale=0.18]{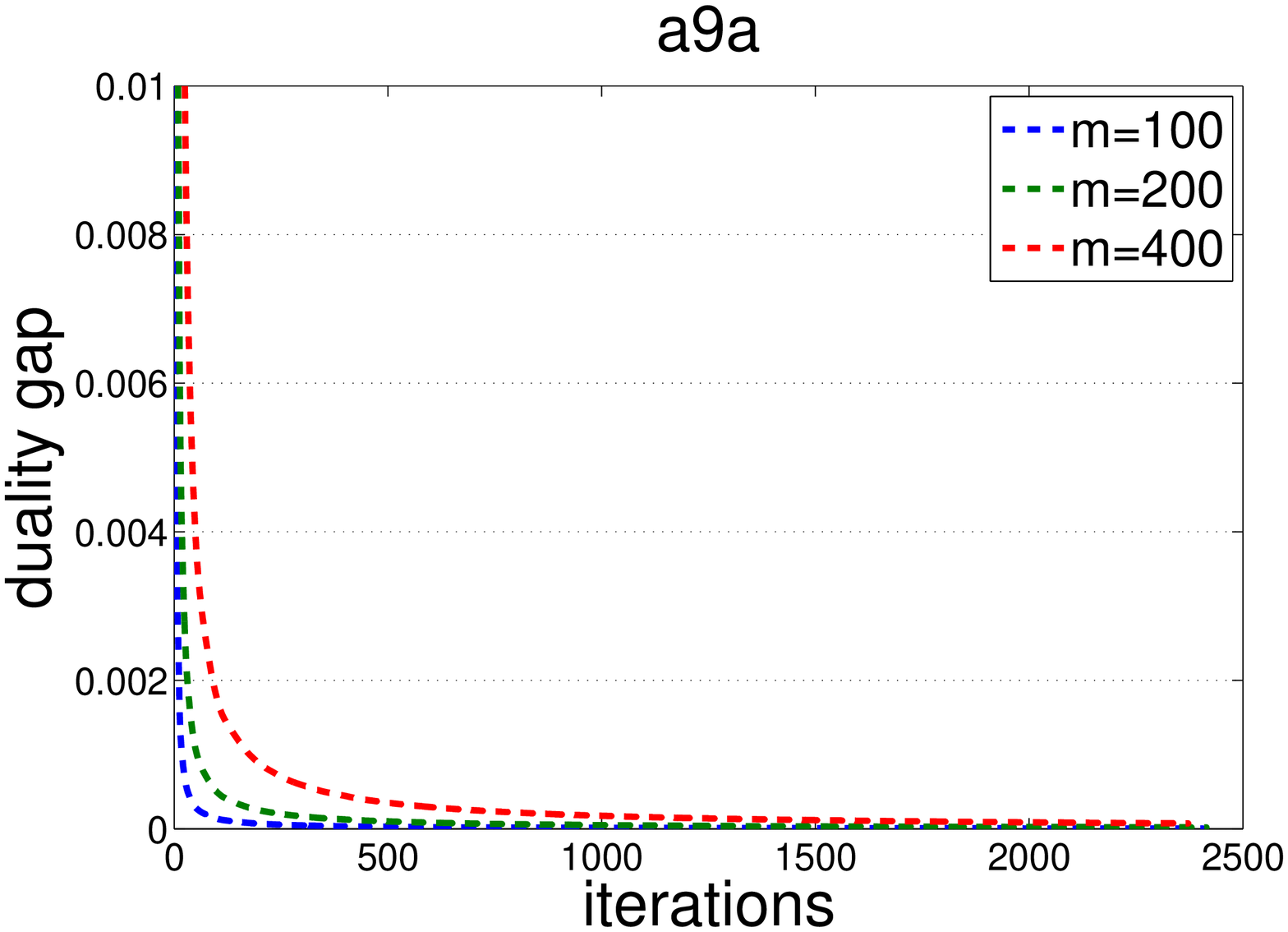}}\hspace*{-0.1in}
\subfigure[rcv1]{\includegraphics[scale=0.18]{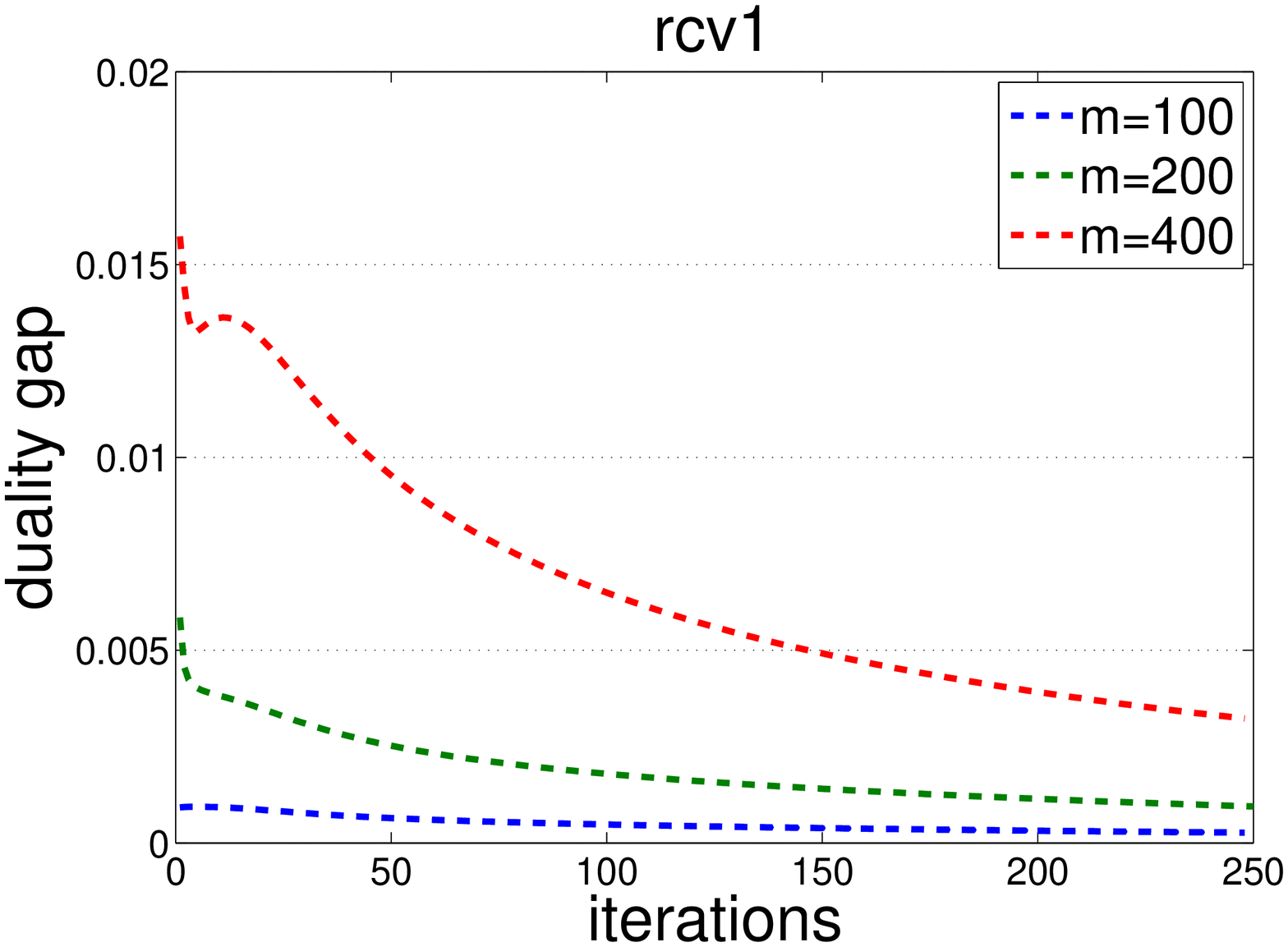}}\hspace*{-0.1in}
\subfigure[covtype]{\includegraphics[scale=0.18]{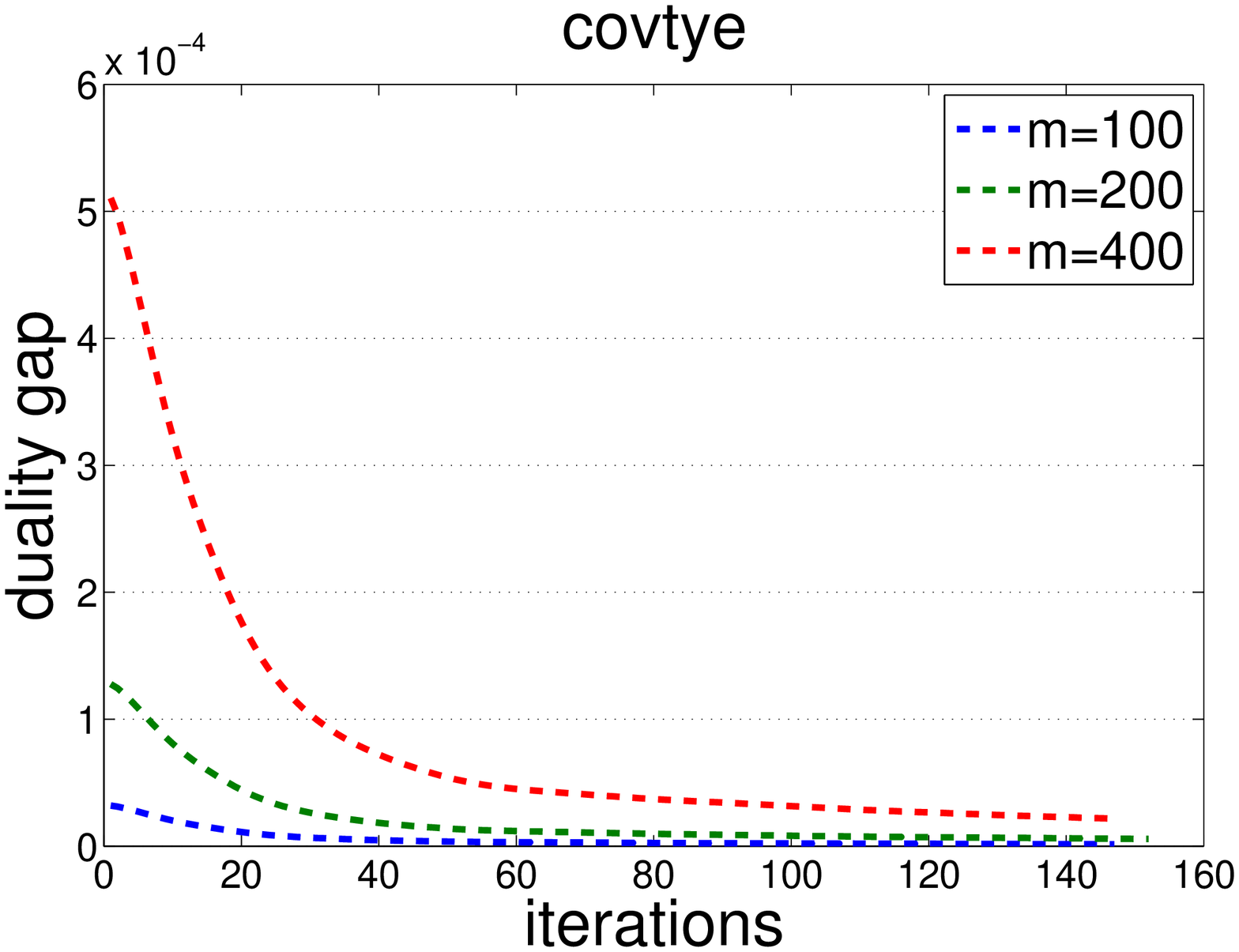}}\hspace*{-0.1in}
\caption{Comparison of Convergence with varied $m$.}\label{fig:m}
\end{figure}

%Finally, we  present the experimental results showing the converge time vs. $n$ with small value of $m$. We use rcv1 data %set, and constructing three training sets with $n=1000, 10000, 100000$ from the whole data. In all cases, we fixed $\lambda=10^{-4}$ and $m=100$. We run \textbf{pdprox}  with $\ell_2^2$ regularizer, and stop the algorithm when obtaining a solution in precision of $\epsilon=10^{-5}. $ The result is shown shown in Table~\ref{tab:n}. The BAR is evaluated on the same  separate $10000$ examples in the three cases. We can see that  the converge time on large $n=10000, 100000$ is smaller than small $n=1000$, and we still obtain very good classifier.
%%The number of iterations for obtaining a solution in precision of $10^{-4}$ on the three training sets are $T=40000, 1000, %100$, and the corresponding cpu time is $289s, 61s, 142s$.
%
%\begin{table}[t]
%\centering\caption{Comparison of Convergence for increasing $n$ with fixed $m$ and $\lambda$.}\label{tab:n}
%\begin{tabular}{llcc}
%\toprule
%n&Iterations&CoT&BAR\\
% \midrule
% 1000&459000&3112s&0.9246\\
% %10000&600&54s&0.9589(epsilon=1e-4)\\
% 10000&2200&179s&0.9573\\
% 100000&200&285s&0.9673\\
% %100000&100&116s&0.9579(epsilon=1e-4)\\
%  %100000&10&35s&0.9252(epsilon=1e-4)\\
%\bottomrule
%\end{tabular}
%\end{table}

\subsection{Comparison:  double-primal vs double-dual implementation}\label{sec:exp5}
%We have discussed the difference between   Pdprox-primal algorithm and Pdprox-dual and  the Primal Dual algorithm in~\citep{Chambolle:2011:FPA:1968993.1969036}, to which we refer as \textbf{Pdsplit}. %, in the order of updating the primal variable and the dual variable and the gradients used in the updating as well. Pdprox-dual algorithm (Algorithm~\ref{alg:3}) is the same to Pdsplit in terms of the updates on $(\balpha_t, \w_t)$ but different  in the number of maintained copies of the primal and the dual variables. 
%From  the algorithms presented in Section~\ref{sec:algo} and the comparison shown in Appendix~\ref{sec:apc},
From the discussion in subsection~\ref{sec:imp}, we have seen  that  both Pdprox-primal and Pdprox-dual algorithm can be implemented either  by maintaining two dual variables, to which we refer as double-dual implementation,  or by maintaining two primal variables, to which we refer as double-primal implementation. One implementation could be more efficient than the other implementation, depending on the nature of applications.  %There exist many applications that can be considered for demonstration. 
For example, in multi-task regression with $\ell_2$ loss~\citep{citeulike:9315911}, if the number of examples is much larger than the number of attributes, i.e., $n\gg d$, and the number of tasks $K$ is large, then the size of  dual variable $\alpha\in\R^{n\times K}$ is much larger than the size of primal variable $W\in\mathbb R^{d\times K}$. It would be expected that the double-primal implementation is more efficient than the double-dual implementation. In contrast, in matrix completion with absolute loss, if the number of observed entries $|\Omega|$ which corresponds to the size of dual variable is much less than the total number of entries $n^2$ which corresponds to the size of primal variable, then the double-dual implementation would be more efficient than the double-primal implementation. 

 In the following experiment, we restrict our demonstration to a binary classification problem that given a set of training examples $(\x_i,y_i),i=1,\ldots, n$, where $\x_i\in\R^d$, one aims to learn a prediction model $\w\in\R^d.$ We choose web spam data set~\footnote{\url{http://www.csie.ntu.edu.tw/~cjlin/libsvmtools/datasets/binary.html}} as the testing bed, which contains $350000$ examples, and  $16609143$ trigrams extracted for each example. We use hinge loss and $\ell_2^2$ regularizer with $\lambda=1/n$, where $n$ is the number of experimented data. 
  %Two different set of features are extracted, 254 unigrams and 16609143 trigrams. In the experiments below, we use hinge loss and $\ell^2_2$ regularizer with $\lambda=1/n$, where $n$ is the number of experimented data. 

%In the first experiment, we show that when $n\gg d$,  which is very common in real applications where the number of attributes is limited due to the difficulty or high cost in obtaining the attributes, the double-primal implementation is more efficient than double-dual implementation. We use the unigrams as the attributes. 

%In the second experiment, 
We demonstrate that when $d\gg n$, the double-dual implementation is more efficient than double-primal implementation. For  the  purpose of demonstration, we  randomly sample from the whole data a subset of $n=1000$ examples, which have a total of $8287348$ features,  and we solve the sub-optimization problem over the subset.   It is worth noting that such kind of  problem appears commonly in distributed computing  on individual nodes when the number of attributes is huge. %For example in ADMM~\citep{Boyd:2011:DOS:2185815.2185816}, at each iteration we need to solve a sub-optimization problem on each individual node over a subset of the whole data. %when the size of the primal variable is much larger than the size of the dual variable, i.e., $d\gg n$,  Pdprox-dual implemented by Algorithm~\ref{alg:3} (or Pdprox-primal implemented by maintaing two dual variables and one primal variable) is more preferable than Pdsplit.  
%We use the  trigrams as attributes. 
The objective value versus running time of the two implementations of Pdprox-dual are plotted in Figure~\ref{fig:pdvs}, which shows that double-dual implementation  is more efficient than double-primal implementation is this case. %Finally, we note that the performance of Pdprox-primal is  close to Pdsplit, where the difference comes from the difference in updating the extra primal variable. 
As a complement, we also plot the objective of Pdprox-dual and Pdprox-primal both with double-dual implementation, which shows that Pdprox-primal and Pdprox-dual performs similarly. 

\begin{figure}[t]
\centering
\includegraphics[scale=0.18]{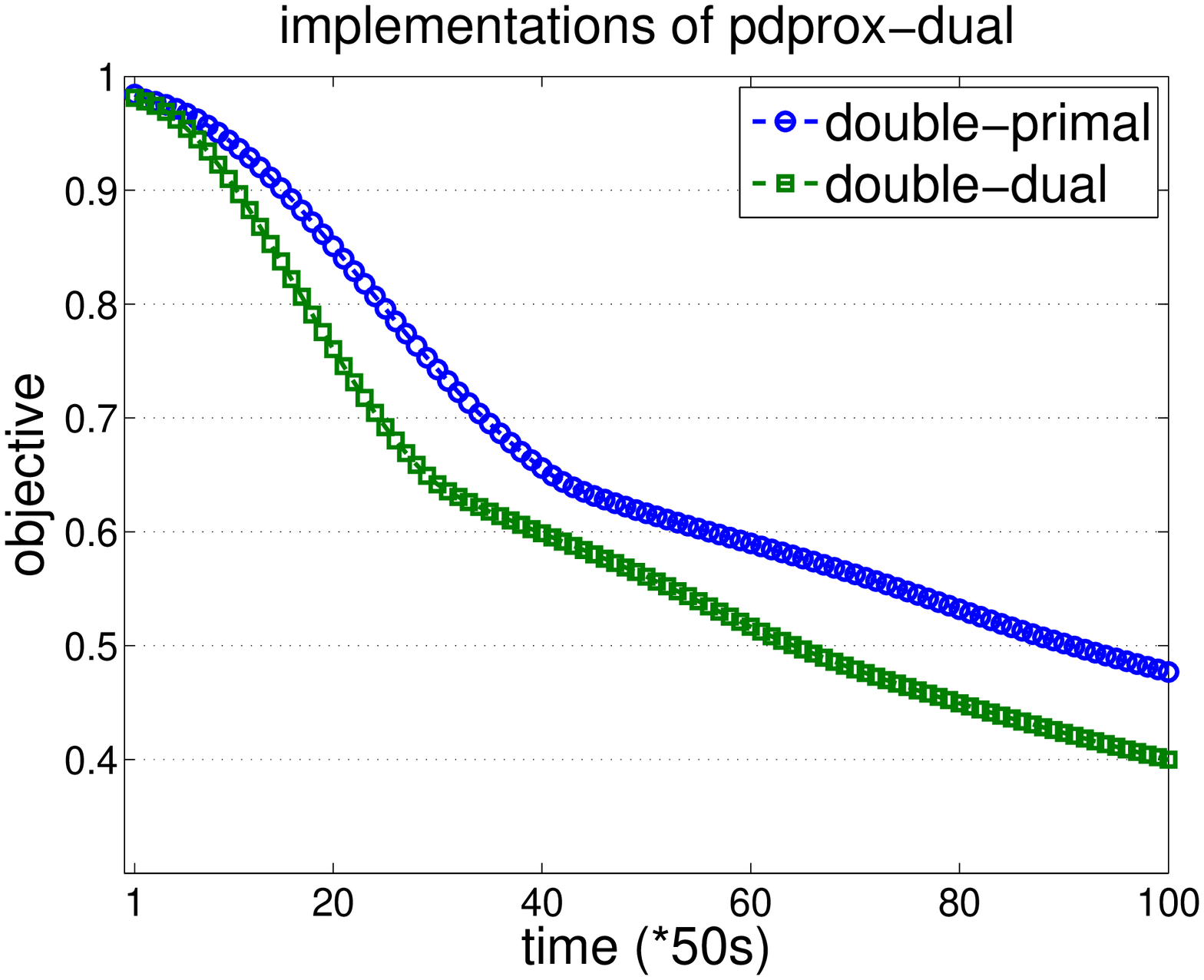}\hspace*{-0.1in}
\includegraphics[scale=0.18]{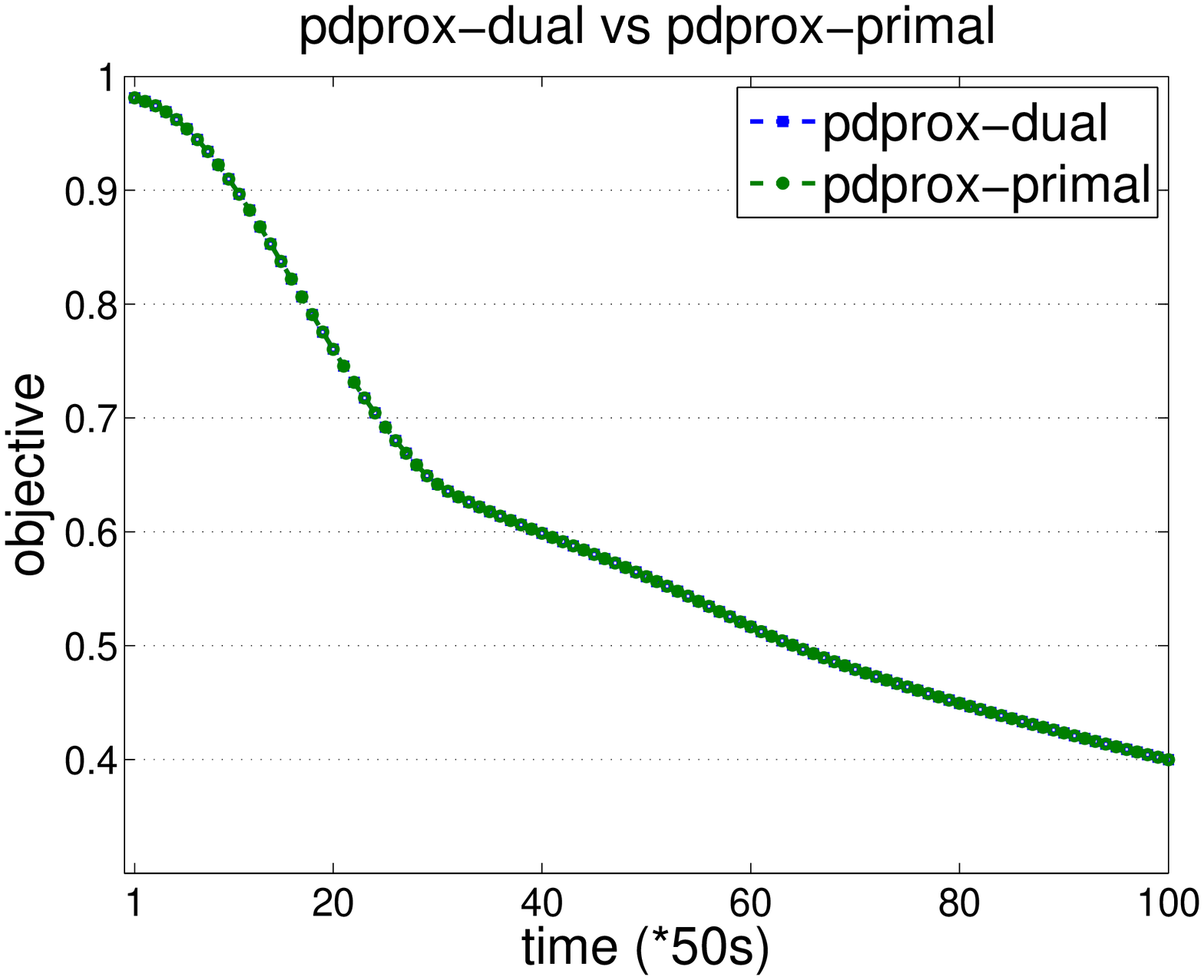}

%\subfigure[covtype data]{\includegraphics[scale=0.18]{pdprox_primal_vs_dual_hl.eps}}\hspace*{-0.1in}
%\subfigure[covtype data]{\includegraphics[scale=0.18]{pdprox_primal_vs_dual_ghl.eps}}\hspace*{-0.1in}
%\subfigure[a subset of news20 data ]{\includegraphics[scale=0.18]{pdprox1_vs_pdprox2_news20_l2svm.eps}}\hspace*{-0.1in}
%%\subfigure[a subset of news20 data]{\includegraphics[scale=0.27]{pdprox1_vs_pdprox2_news20_l2svm.eps}}\hspace*{-0.1in}
\caption{Comparison of double-primal implementation vs. double-dual implementation of Pdprox-dual, and Comparison of Pdprox-dual vs. Pdprox-primal both with double-dual implementation, on a subset of webspam data using trigram features.}\label{fig:pdvs}
\end{figure}

{
\subsection{Comparison for solving $\ell^2_2$ regularized SVM}\label{sec:exp5}
In this subsection, we compare the proposed Pdprox method with Pegasos for solving $\ell_2^2$ regularized SVM when $\lambda = O(n^{-1/(1+\epsilon}), \epsilon\in(0,1]$. We also compare Pdprox using one step size and two step sizes, and compare them to the accelerated version proposed in~\citep{Chambolle:2011:FPA:1968993.1969036} for strongly convex functions. %For a fair comparison,  we use $\ell_2^2$ norm for the regularizer since Pegasos is designed for $\ell_2^2$ regularizer.  
We implement Pdprox-dual algorithm (by double-dual implementation) in C++  using the same data structures as coded by Shai Shalev-Shwartz~\footnote{\url{http://www.cs.huji.ac.il/~shais/code/index.html}}. 
% $10^4$ iterations for the two different domains of $\balpha$: $\Q_{\balpha}=\{\balpha: \balpha\in[0,1]^n\}$ and $\Q_{\balpha}=\{\balpha: \balpha\in[0,1]^n, \|\balpha\|_1\leq 100\}$. The parameter $\lambda$ is fixed to $1/n$.  At the end of $10^4$ iterations, two algorithms have the same objective value. We can see that Pdprox-primal speeds up Pdprox-dual, especially when the domain of $\balpha$ is complex. 
\begin{figure}[t]
\centering\hspace*{-0.1in}
%\subfigure[$\lambda=n^{-0.5}$]{\includegraphics[scale=0.18]{pdprox_vs_pegasos_n_0p5_time.eps}}\hspace*{-0.1in}
%\subfigure[$\lambda=n^{-0.8}$]{\includegraphics[scale=0.18]{pdprox_vs_pegasos_n_0p8_time.eps}}\hspace*{-0.1in}
%\subfigure[$\lambda=n^{-1}$]{\includegraphics[scale=0.18]{pdprox_vs_pegasos_n_1_time.eps}}
%
%
%
%\hspace*{-0.1in}
%\subfigure[$\lambda=n^{-0.5}$]{\includegraphics[scale=0.18]{pdprox_vs_pegasos_n_0p5_iterations.eps}}\hspace*{-0.1in}
%\subfigure[$\lambda=n^{-0.8}$]{\includegraphics[scale=0.18]{pdprox_vs_pegasos_n_0p8_iterations.eps}}\hspace*{-0.1in}
%\subfigure[$\lambda=n^{-1}$]{\includegraphics[scale=0.18]{pdprox_vs_pegasos_n_1_iterations.eps}}
%
%\hspace*{-0.1in}
%\subfigure[$\lambda=n^{-0.5}$]{\includegraphics[scale=0.193]{pdprox_vs_pegasos_n_0p5_500iterations.eps}}\hspace*{-0.1in}
%\subfigure[$\lambda=n^{-0.8}$]{\includegraphics[scale=0.193]{pdprox_vs_pegasos_n_0p8_500iterations.eps}}\hspace*{-0.1in}
%\subfigure[$\lambda=n^{-1}$]{\includegraphics[scale=0.193]{pdprox_vs_pegasos_n_1_500iterations.eps}}

\subfigure[$\lambda=n^{-0.5}$]{\includegraphics[scale=0.2]{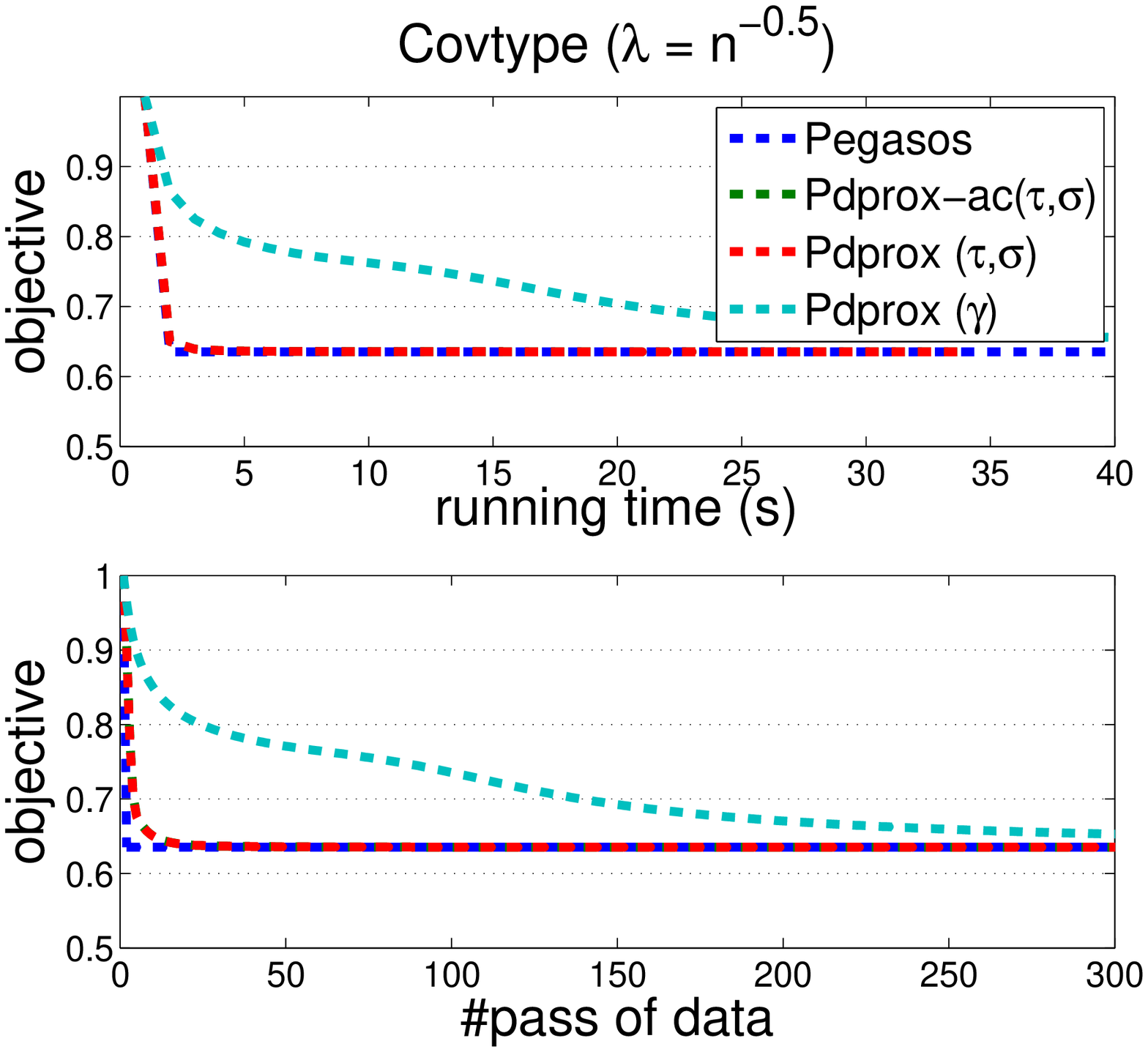}}\hspace*{-0.1in}
\subfigure[$\lambda=n^{-0.8}$]{\includegraphics[scale=0.2]{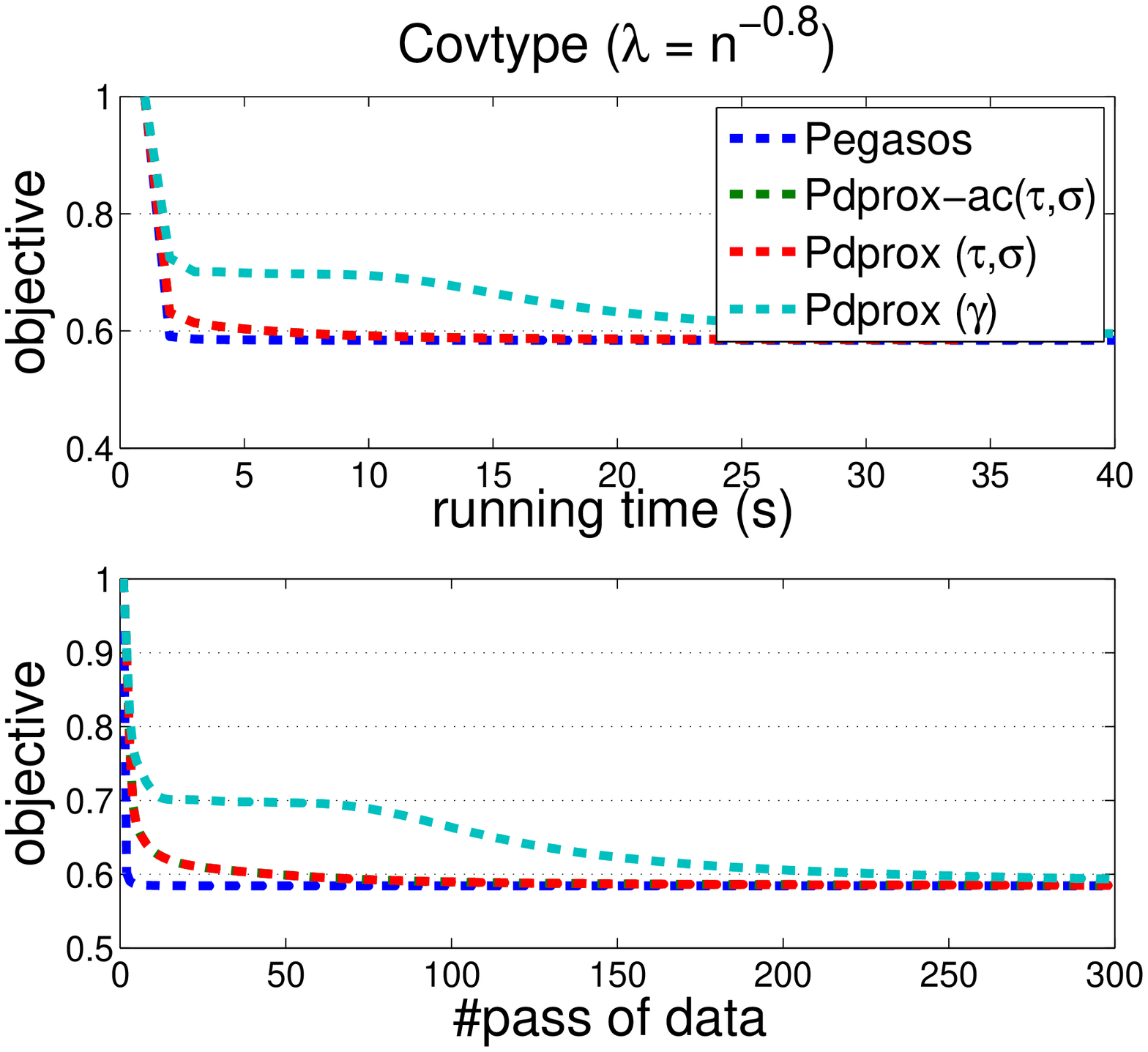}}\hspace*{-0.1in}
\subfigure[$\lambda=n^{-1}$]{\includegraphics[scale=0.2]{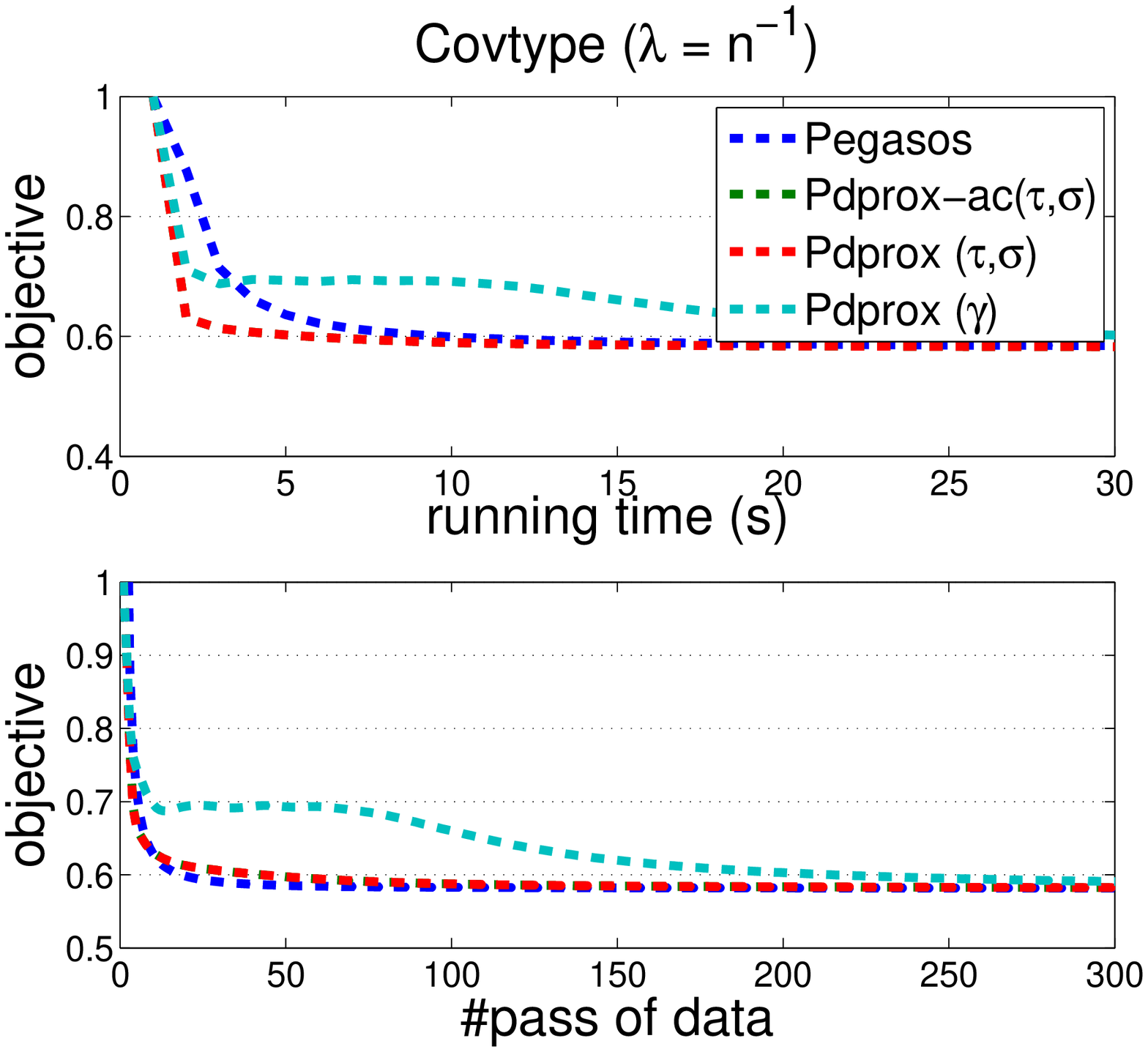}}

\caption{Comparison of convergence speed of Pdprox vs. Pegasos on covtype data set. The best ratio between the step size $\tau$ for updating $\w$ and the step size $\sigma$ for updating $\balpha$ is 0.01. The curves of Pdprox-ac($\tau,\sigma$) are almost identical to that of Pdprox ($\tau,\sigma$).  }\label{fig:3}
\end{figure}

Figure~\ref{fig:3} shows the comparison of \textbf{Pdprox} vs. \textbf{Pegasos} on covtype data set with three different levels of $\lambda=n^{-0.5}, n^{-0.8}, n^{-1}$.  We compute the objective value of Pdprox after each iteration and compute the objective value of Pegasos after one effective pass of all data (i.e., $n$ number of iterations where $n$ is the total number of training examples). We also compare the one step size scheme (Pdprox ($\gamma$)) with  the two step sizes scheme (Pdprox ($\tau,\sigma$)) and  the accelerated version (Pdprox-ac($\tau,\sigma$)) proposed in~\citep{Chambolle:2011:FPA:1968993.1969036} for strongly convex functions. The relative ratio between the step size $\tau$ for updating the primal variable and the step size $\sigma$ for updating the dual variable  is selected among a set of values $\{1000, 100, 10, 1, 0.1, 0.01, 0.001\}$.

The results demonstrate that (1) the two step sizes scheme with careful tuning of the relative ratio yields better convergences than the one step size scheme; (2) Pegasos still remains a state-of-the-art algorithm for solving the $\ell_2^2$ regularized SVM; but when the problem is relatively difficult, i.e., $\lambda$ is relatively small (e.g., less than $1/n$), the Pdprox algorithm with the two step sizes may converge faster in terms of running time; (3) the accelerated version for solving SVM  is almost identical the basic version. 
}
%For Pdprox, we use the Pdprox-primal algorithm. Since Pegasos can perform stochastic updating,  we also report the performance of Pegasos algorithm by sampling $1$ example and $1\%$ examples for computing the stochastic gradient. We run both algorithms 1000 seconds, report the objective versus time in the top panel of Figure~\ref{fig:3}, and report the objective versus the number of iterations (achieved in  1000 seconds) in the middle panel. In the bottom panel, we also report the objective versus the first 500 iterations to better understand the behavior of algorithms at the beginning of iterations.  The results show that { (i) Pdprox converges  faster than Pegasos when $\lambda = O(n^{-1/(1+\tau)}), \tau = 0, 0.25,1$, which verifies our discussion in subsection~\ref{sec:ext}; (ii) the objective of Pegasos always has a sharp jump at the beginning of iterations, which becomes severe when $\lambda$ is reduced. This is because at the beginning, the gradient of the primal variable is not informative compared to that in Pdprox which uses the updated dual variables to compute the gradient.   Finally,  we observe that the performance of stochastic Pegasos with sampling $1\%$ examples becomes better than the performance of batch Pegasos when $\lambda$ is reduced to be less than $n^{-0.8}$,  however, stochastic Pegasos  performs consistently worse than Pdprox.}

\section{Conclusions}\label{sec:conc}
In this paper, we study non-smooth optimization in machine learning where both the loss function and the regularizer are non-smooth. We develop an efficient gradient based method for a family of non-smooth optimization problems in which the dual form of the loss function can be expressed as a bilinear function in primal and dual variables. {We show that,  assuming the proximal step can be efficiently solved,  the proposed algorithm achieves a convergence rate of $O(1/T)$, faster than $O(1/\sqrt{T})$ suffered by many other first order methods for non-smooth optimization. In contrast to  existing studies on non-smooth optimization, our work enjoys more simplicity in implementation and analysis, and provides a unified methodology  for a diverse set of non-smooth optimization problems. Our empirical studies demonstrate the efficiency of the proposed algorithm in comparison with the state-of-the-art first order methods for solving many non-smooth machine learning problems, and the effectiveness of the proposed algorithm for optimizing the problem with a sparse constraint on the dual variables for tackling the noise in labels.} In future, we plan to adapt the proposed algorithm for stochastic updating and for distributed computing environments.

%In this appendix we prove the following theorem from
%Section~\ref{sec:textree-generalization}:
%\bibliographystyle{spmpsci} 
\bibliography{icml11}

\begin{thebibliography}{66}
\providecommand{\natexlab}[1]{#1}
\providecommand{\url}[1]{\texttt{#1}}
\expandafter\ifx\csname urlstyle\endcsname\relax
  \providecommand{\doi}[1]{doi: #1}\else
  \providecommand{\doi}{doi: \begingroup \urlstyle{rm}\Url}\fi

\bibitem[Argyriou et~al.(2008)Argyriou, Evgeniou, and
  Pontil]{journals/ml/ArgyriouEP08}
Andreas Argyriou, Theodoros Evgeniou, and Massimiliano Pontil.
\newblock Convex multi-task feature learning.
\newblock \emph{Machine Learning}, 73:\penalty0 243--272, 2008.

\bibitem[Bach et~al.(2011)Bach, Jenatton, and Mairal]{Bach:2011:OSP:2208224}
Francis Bach, Rodolphe Jenatton, and Julien Mairal.
\newblock \emph{Optimization with Sparsity-Inducing Penalties (Foundations and
  Trends(R) in Machine Learning)}.
\newblock Now Publishers Inc., Hanover, MA, USA, 2011.
\newblock ISBN 160198510X, 9781601985101.

\bibitem[Bartlett and Wegkamp(2008)]{Bartlett:2008:CRO:1390681.1442792}
Peter~L. Bartlett and Marten~H. Wegkamp.
\newblock Classification with a reject option using a hinge loss.
\newblock \emph{JMLR}, 9:\penalty0 1823--1840, 2008.

\bibitem[Beck and Teboulle(2009)]{Beck:2009:FIS:1658360.1658364}
Amir Beck and Marc Teboulle.
\newblock A fast iterative shrinkage-thresholding algorithm for linear inverse
  problems.
\newblock \emph{SIAM J. Img. Sci.}, 2:\penalty0 183--202, 2009.

\bibitem[Boyd and Vandenberghe(2004)]{citeulike:163662}
Stephen Boyd and Lieven Vandenberghe.
\newblock \emph{Convex Optimization}.
\newblock Cambridge University Press, 2004.

\bibitem[Boyd et~al.(2011)Boyd, Parikh, Chu, Peleato, and
  Eckstein]{Boyd:2011:DOS:2185815.2185816}
Stephen Boyd, Neal Parikh, Eric Chu, Borja Peleato, and Jonathan Eckstein.
\newblock Distributed optimization and statistical learning via the alternating
  direction method of multipliers.
\newblock \emph{Found. Trends Mach. Learn.}, 3:\penalty0 1--122, 2011.

\bibitem[Bredies(2009)]{IOPORT.05500734}
Kristian Bredies.
\newblock A forward-backward splitting algorithm for the minimization of
  non-smooth convex functionals in banach space.
\newblock \emph{Inverse Problems}, 25:\penalty0 Article ID 015005, 20 p., 2009.

\bibitem[Cai et~al.(2010)Cai, Sun, Cheng, Li, and
  Goodison]{DBLP:conf/sdm/CaiSCLG10}
Yunpeng Cai, Yijun Sun, Yubo Cheng, Jian Li, and Steve Goodison.
\newblock Fast implementation of l1 regularized learning algorithms using
  gradient descent methods.
\newblock In \emph{SDM}, pages 862--871, 2010.

\bibitem[Cand{\`e}s and Recht(2008)]{DBLP:journals/corr/abs-0805-4471}
Emmanuel~J. Cand{\`e}s and Benjamin Recht.
\newblock Exact matrix completion via convex optimization.
\newblock \emph{CoRR}, abs/0805.4471, 2008.

\bibitem[Chambolle and Pock(2011)]{Chambolle:2011:FPA:1968993.1969036}
Antonin Chambolle and Thomas Pock.
\newblock A first-order primal-dual algorithm for convex problems with
  applications to imaging.
\newblock \emph{J. Math. Imaging Vis.}, 40:\penalty0 120--145, 2011.

\bibitem[Chen et~al.(2009)Chen, Pan, Kwok, and
  Carbonell]{10.1109/ICDM.2009.128}
Xi~Chen, Weike Pan, James~T. Kwok, and Jaime~G. Carbonell.
\newblock Accelerated gradient method for multi-task sparse learning problem.
\newblock In \emph{ICDM}, pages 746--751, 2009.

\bibitem[Combettes and Pesquet()]{combettes:hal-00643381}
Patrick~L. Combettes and Jean-Christophe Pesquet.
\newblock {Primal-dual splitting algorithm for solving inclusions with mixtures
  of composite, Lipschitzian, and parallel-sum monotone operators}.
\newblock URL \url{http://hal.inria.fr/hal-00643381}.

\bibitem[Dekel and Singer(2006)]{DBLP:conf/nips/DekelS06}
Ofer Dekel and Yoram Singer.
\newblock Support vector machines on a budget.
\newblock In \emph{NIPS}, pages 345--352, 2006.

\bibitem[Duchi and Singer(2009)]{Duchi:2009:EOB:1577069.1755882}
John Duchi and Yoram Singer.
\newblock Efficient online and batch learning using forward backward splitting.
\newblock \emph{JMLR.}, 10:\penalty0 2899--2934, 2009.

\bibitem[Duchi et~al.(2008)Duchi, Shalev-Shwartz, Singer, and
  Chandra]{Duchi:2008:EPL:1390156.1390191}
John Duchi, Shai Shalev-Shwartz, Yoram Singer, and Tushar Chandra.
\newblock Efficient projections onto the l1-ball for learning in high
  dimensions.
\newblock In \emph{Proceedings of the 25th international conference on Machine
  learning}, pages 272--279, 2008.

\bibitem[Esser et~al.(2010)Esser, Zhang, and Chan]{citeulike:10247176}
Ernie Esser, Xiaoqun Zhang, and Tony~F. Chan.
\newblock A general framework for a class of first order {Primal-Dual}
  algorithms for convex optimization in imaging science.
\newblock \emph{SIAM J. Imaging Sciences}, 3:\penalty0 1015--1046, 2010.

\bibitem[Fung and Mangasarian(2002)]{Fung02afeature}
Glenn Fung and O.~L. Mangasarian.
\newblock A feature selection newton method for support vector machine
  classification.
\newblock Technical report, Computational Optimization and Applications, 2002.

\bibitem[Gneiting(2008)]{tilmann}
Tilmann Gneiting.
\newblock Quantiles as optimal point predictors.
\newblock Technical report, Department of Statistics, University of Washington,
  2008.

\bibitem[Hastie et~al.(2008)Hastie, Tibshirani, and Friedman]{HastieEtAl2008}
Trevor Hastie, Robert Tibshirani, and Jerome Friedman.
\newblock \emph{{The elements of statistical learning: data mining, inference
  and prediction}}.
\newblock Springer, 2008.

\bibitem[{H}e and {Y}uan(2012)]{heyu12}
{B}ingsheng {H}e and {X}iaoming {Y}uan.
\newblock {C}onvergence analysis of primal-dual algorithms for a saddle-point
  problem: from contraction perspective.
\newblock \emph{{S}{I}{A}{M} {J}. {I}maging {S}ci.}, 5:\penalty0 119--149,
  2012.

\bibitem[Hou et~al.(2011)Hou, Nie, Yi, and Wu]{DBLP:conf/ijcai/HouNYW11}
Chenping Hou, Feiping Nie, Dongyun Yi, and Yi~Wu.
\newblock Feature selection via joint embedding learning and sparse regression.
\newblock In \emph{IJCAI}, pages 1324--1329, 2011.

\bibitem[Hsieh et~al.(2008)Hsieh, Chang, Lin, Keerthi, and
  Sundararajan]{Hsieh:2008:DCD:1390156.1390208}
Cho-Jui Hsieh, Kai-Wei Chang, Chih-Jen Lin, S.~Sathiya Keerthi, and
  S.~Sundararajan.
\newblock A dual coordinate descent method for large-scale linear svm.
\newblock In \emph{ICML}, pages 408--415, 2008.

\bibitem[Hu et~al.(2009)Hu, Kwok, and Pan]{NIPS2009_0997}
Chonghai Hu, James Kwok, and Weike Pan.
\newblock Accelerated gradient methods for stochastic optimization and online
  learning.
\newblock In \emph{NIPS}, pages 781--789, 2009.

\bibitem[Huang et~al.(2010)Huang, Jin, Xu, and Liu]{robustmetric10uai}
Kaizhu Huang, Rong Jin, Zenglin Xu, and Cheng-Lin Liu.
\newblock Robust metric learning by smooth optimization.
\newblock In \emph{UAI}, pages 244--251, 2010.

\bibitem[Ji and Ye(2009)]{Ji2009}
Shuiwang Ji and Jieping Ye.
\newblock An accelerated gradient method for trace norm minimization.
\newblock In \emph{ICML}, pages 457--464, 2009.

\bibitem[Joachims(1999)]{citeulike:530839}
Thorsten Joachims.
\newblock Making large-scale support vector machine learning practical.
\newblock In \emph{Advances in Kernel Methods: Support Vector Learning}, pages
  169--184, 1999.

\bibitem[Joachims(2006)]{Joachims:2006:TLS:1150402.1150429}
Thorsten Joachims.
\newblock Training linear svms in linear time.
\newblock In \emph{KDD}, pages 217--226, 2006.

\bibitem[Koenker(2005)]{Koenker2005}
Roger Koenker.
\newblock \emph{Quantile Regression}.
\newblock Cambridge University Press, 2005.

\bibitem[Lan(2010)]{Lan08}
Guanghui Lan.
\newblock An optimal method for stochastic composite optimization.
\newblock \emph{Math. Program.}, 2010.

\bibitem[Lan et~al.(2011)Lan, Lu, and Monteiro]{Lan:2011:PFM:1922499.1922500}
Guanghui Lan, Zhaosong Lu, and Renato D.~C. Monteiro.
\newblock Primal-dual first-order methods with 1/epsilon iteration-complexity
  for cone programming.
\newblock \emph{Math. Program.}, 126:\penalty0 1--29, January 2011.

\bibitem[Lin(2010)]{2010arXiv1008.5204L}
Qihang Lin.
\newblock A smoothing stochastic gradient method for composite optimization.
\newblock \emph{ArXiv e-prints}, 2010.

\bibitem[Lions and Mercier(1979)]{1977776}
P.~L. Lions and B.~Mercier.
\newblock {Splitting Algorithms for the Sum of Two Nonlinear Operators}.
\newblock \emph{Siam Journal on Numerical Analysis}, 16:\penalty0 964--979,
  1979.

\bibitem[Liu et~al.(2009)Liu, Ji, and Ye]{Liu:2009:MFL:1795114.1795154}
Jun Liu, Shuiwang Ji, and Jieping Ye.
\newblock Multi-task feature learning via efficient l2, 1-norm minimization.
\newblock In \emph{UAI}, pages 339--348, 2009.

\bibitem[Mosci et~al.(2010)Mosci, Villa, Verri, and Rosasco]{mosci10nips}
Sofia Mosci, Silvia Villa, Alessandro Verri, and Lorenzo Rosasco.
\newblock A primal-dual algorithm for group sparse regularization with
  overlapping groups.
\newblock In \emph{NIPS}, pages 2604--2612, 2010.

\bibitem[Nemirovski(2005)]{Nemirovski2005}
Arkadi Nemirovski.
\newblock Prox-method with rate of convergence o(1/t) for variational
  inequalities with lipschitz continuous monotone operators and smooth
  convex-concave saddle point problems.
\newblock \emph{SIAM J. on Optimization}, 15:\penalty0 229--251, 2005.

\bibitem[Nesterov(2005{\natexlab{a}})]{Nesterov2005}
Yu~Nesterov.
\newblock Smooth minimization of non-smooth functions.
\newblock \emph{Math. Program.}, 103:\penalty0 127--152, 2005{\natexlab{a}}.

\bibitem[Nesterov(2005{\natexlab{b}})]{Nesterov:2005:EGT:1081200.1085585}
Yu. Nesterov.
\newblock Excessive gap technique in nonsmooth convex minimization.
\newblock \emph{SIAM J. on Optimization}, pages 235--249, 2005{\natexlab{b}}.

\bibitem[Nesterov(2007)]{RePEc:cor:louvco:2007076}
Yu. Nesterov.
\newblock Gradient methods for minimizing composite objective function.
\newblock Core discussion papers, 2007.

\bibitem[Nie et~al.(2010)Nie, Huang, Cai, and Ding]{citeulike:9315911}
Feiping Nie, Heng Huang, Xiao Cai, and Chris Ding.
\newblock Efficient and robust feature selection via joint {ℓ2,1-Norms}
  minimization.
\newblock In \emph{Advances in Neural Information Processing Systems 23}, pages
  1813--1821. 2010.

\bibitem[Platt(1998)]{platt98}
John~C. Platt.
\newblock Fast training of support vector machines using sequential minimal
  optimization.
\newblock In \emph{Advances in Kernel Methods: Support Vector Learning}, pages
  185--208, Cambridge, MA, 1998.

\bibitem[Pock and Chambolle(2011)]{Pock:2011:DPF:2355573.2356473}
Thomas Pock and Antonin Chambolle.
\newblock Diagonal preconditioning for first order primal-dual algorithms in
  convex optimization.
\newblock In \emph{Proceedings of the 2011 International Conference on Computer
  Vision}, pages 1762--1769, 2011.

\bibitem[Popov(1980)]{popov1980}
L.D. Popov.
\newblock A modification of the arrow-hurwitz method of search for saddle
  points.
\newblock \emph{Mat. Zametki}, 28:\penalty0 777--784, 1980.

\bibitem[Quattoni et~al.(2009)Quattoni, Carreras, Collins, and
  Darrell]{Quattoni:2009:EPL:1553374.1553484}
Ariadna Quattoni, Xavier Carreras, Michael Collins, and Trevor Darrell.
\newblock An efficient projection for l1, infinity regularization.
\newblock In \emph{ICML}, pages 857--864, 2009.

\bibitem[Radu~loan Bot(2012)]{arxivradu2012}
Andre~Heinrich Radu~loan Bot, Ern\"{o} Robert~Csetnek.
\newblock A primal-dual splitting algorithm for finding zeros of sums of
  maximally monotone operators.
\newblock \emph{ArXiv e-prints}, 2012.

\bibitem[Recht et~al.(2010)Recht, Fazel, and Parrilo]{citeulike:9621053}
Benjamin Recht, Maryam Fazel, and Pablo~A. Parrilo.
\newblock Guaranteed {Minimum-Rank} solutions of linear matrix equations via
  nuclear norm minimization.
\newblock \emph{SIAM Rev.}, 52:\penalty0 471--501, August 2010.

\bibitem[Rennie and Srebro(2005)]{Rennie:2005:FMM:1102351.1102441}
Jasson D.~M. Rennie and Nathan Srebro.
\newblock Fast maximum margin matrix factorization for collaborative
  prediction.
\newblock In \emph{Proceedings of the 22nd international conference on Machine
  learning}, pages 713--719, 2005.

\bibitem[Rockafellar(1976)]{citeulike:9472207}
R.~Tyrrell Rockafellar.
\newblock Monotone operators and the proximal point algorithm.
\newblock \emph{SIAM J. on Control and Optimization}, 14, 1976.

\bibitem[Rosasco et~al.(2004)Rosasco, De~Vito, Caponnetto, Piana, and
  Verri]{Rosasco:2004:LFS:996933.996940}
Lorenzo Rosasco, Ernesto De~Vito, Andrea Caponnetto, Michele Piana, and
  Alessandro Verri.
\newblock Are loss functions all the same?
\newblock \emph{Neural Comput.}, 16:\penalty0 1063--1076, 2004.

\bibitem[Shalev-Shwartz et~al.(2011)Shalev-Shwartz, Singer, Srebro, and
  Cotter]{Shalev-Shwartz:2007:PPE:1273496.1273598}
Shai Shalev-Shwartz, Yoram Singer, Nathan Srebro, and Andrew Cotter.
\newblock Pegasos: primal estimated sub-gradient solver for svm.
\newblock \emph{Math. Program.}, 127\penalty0 (1):\penalty0 3--30, 2011.

\bibitem[Smale and Zhou(2003)]{citeulike:416064}
Steve Smale and Ding-Xuan Zhou.
\newblock Estimating the approximation error in learning theory.
\newblock \emph{Anal. Appl. (Singap.)}, 1\penalty0 (1):\penalty0 17--41, 2003.

\bibitem[Smola and Sch\"{o}lkopf(2004)]{Smola:2004:TSV:1011935.1011939}
Alex~J. Smola and Bernhard Sch\"{o}lkopf.
\newblock A tutorial on support vector regression.
\newblock \emph{Statistics and Computing}, 14:\penalty0 199--222, 2004.

\bibitem[Srebro et~al.(2005)Srebro, Rennie, and Jaakkola]{citeulike:3224462}
N.~Srebro, J.~D.~M. Rennie, and T.~S. Jaakkola.
\newblock {Maximum-margin matrix factorization}.
\newblock In \emph{Advances in Neural Information Processing Systems}, pages
  1329--1336, 2005.

\bibitem[Sun et~al.(2009)Sun, Liu, Chen, and Ye]{Sun:2009:NIPS}
Liang Sun, Jun Liu, Jianhui Chen, and Jieping Ye.
\newblock Efficient recovery of jointly sparse vectors.
\newblock In \emph{Advances in Neural Information Processing Systems 22}, pages
  1812--1820, 2009.

\bibitem[Traub et~al.(1988)Traub, Wasilkowski, and
  Wo\'{z}niakowski]{Traub:1988:IC:49153}
J.~F. Traub, G.~W. Wasilkowski, and H.~Wo\'{z}niakowski.
\newblock \emph{Information-based complexity}.
\newblock Academic Press Professional, Inc., 1988.

\bibitem[Tseng(2008)]{pau08}
Paul Tseng.
\newblock On accelerated proximal gradient methods for convex-concave
  optimization.
\newblock Technical report, 2008.

\bibitem[Vapnik(1998)]{citeulike:106699}
V.N. Vapnik.
\newblock \emph{Statistical Learning Theory}.
\newblock Wiley-Interscience, 1998.

\bibitem[Wu and Zhou(2005)]{Wu:2005:SSM:1118516.1118524}
Qiang Wu and Ding-Xuan Zhou.
\newblock Svm soft margin classifiers: Linear programming versus quadratic
  programming.
\newblock \emph{Neural Comput.}, 17:\penalty0 1160--1187, 2005.

\bibitem[Xiao(2009)]{NIPS2009_1048}
Lin Xiao.
\newblock Dual averaging method for regularized stochastic learning and online
  optimization.
\newblock In Y.~Bengio, D.~Schuurmans, J.~Lafferty, C.~K.~I. Williams, and
  A.~Culotta, editors, \emph{NIPS}, pages 2116--2124. 2009.

\bibitem[Yang et~al.(2010)Yang, Xu, King, and Lyu]{DBLP:conf/icml/YangXKL10}
Haiqin Yang, Zenglin Xu, Irwin King, and Michael~R. Lyu.
\newblock Online learning for group lasso.
\newblock In \emph{ICML}, pages 1191--1198, 2010.

\bibitem[Yang et~al.(2012)Yang, Mahdavi, Jin, Zhang, and Zhou]{Yangicml12mkl}
Tianbao Yang, Mehrdad Mahdavi, Rong Jin, Lijun Zhang, and Yang Zhou.
\newblock Multiple kernel learning from noisy labels by stochastic programming.
\newblock In \emph{ICML}, pages~--, 2012.

\bibitem[Yeo and Burge(2003)]{Yeo:2003:MEM:640075.640118}
Gene Yeo and Christopher~B. Burge.
\newblock Maximum entropy modeling of short sequence motifs with applications
  to rna splicing signals.
\newblock In \emph{RECOMB}, pages 322--331, 2003.

\bibitem[Yuan and Lin(2006)]{citeulike:448082}
Ming Yuan and Yi~Lin.
\newblock Model selection and estimation in regression with grouped variables.
\newblock \emph{JRSS}, 68:\penalty0 49--67, 2006.

\bibitem[Zhou et~al.(2010{\natexlab{a}})Zhou, Tao, and
  Wu]{DBLP:journals/corr/abs-1008-4000}
Tianyi Zhou, Dacheng Tao, and Xindong Wu.
\newblock Nesvm: a fast gradient method for support vector machines.
\newblock \emph{CoRR}, abs/1008.4000, 2010{\natexlab{a}}.

\bibitem[Zhou et~al.(2010{\natexlab{b}})Zhou, Jin, and Hoi]{YZhouaistat}
Yang Zhou, Rong Jin, and Steven~C.H. Hoi.
\newblock Exclusive lasso for multi-task feature selection.
\newblock In \emph{AISTAT}, pages 988--995, 2010{\natexlab{b}}.

\bibitem[Zhu et~al.(2003)Zhu, Rosset, Hastie, and
  Tibshirani]{Zhu031-normsupport}
Ji~Zhu, Saharon Rosset, Trevor Hastie, and Rob Tibshirani.
\newblock 1-norm support vector machines.
\newblock In \emph{NIPS}, 2003.

\bibitem[Zhu and Chan(2008)]{citeulike:7217768}
M.~Zhu and T.~Chan.
\newblock {An efficient primal-dual hybrid gradient algorithm for total
  variation image restoration}.
\newblock \emph{UCLA CAM Report}, pages 08--34, 2008.

\end{thebibliography}

\appendix

\section{Derivation of   constant $c$ for (generalized) hinge loss}
\label{app:constant}
%as used in Lemma~\ref{lem:1}
As mentioned before, it is easy to derive the constant $c$ in equations~(\ref{eqn:const}) and~(\ref{eqn:const2})  for the non-smooth loss functions listed before under the assumption  that $\|\x\|_2\leq R$.  As an example, here we derive the constant  for  hinge loss and generalized hinge loss. For other non-smooth loss functions, we can derive the value of $c$  in a similar way.  For hinge loss,  $L(\w, \balpha)$ in~(\ref{eqn:dual2}) is given by
\begin{align*}
L(\w, \balpha; \X, \y) = \frac{1}{n}\sum_{i=1}^n \alpha_i(1- y_i \w^{\top}\x_i), 
\end{align*}
and its partial gradients are
\begin{align*}
G_{\balpha}(\w, \balpha) &= \frac{1}{n}\mathbf 1 - \frac{1}{n}(\x_1y_1,\cdots, \x_ny_n)^{\top}\w, \\
G_\w(\w,\balpha)&= - \frac{1}{n}\X(\balpha\circ \mathbf y),
\end{align*}
where $\mathbf 1$ denotes a vector of all ones, and $\circ$ denotes the element-wise product. Then, 
\begin{align*}
\|G_{\balpha}(\w_1, \balpha_1) - G_{\balpha}(\w_2,\balpha_2)\|^2_2&= \frac{1}{n^2}\sum_{i=1}^n (\w_1^{\top}\x_i y_i - \w_2^{\top}\x_iy_i)^2\leq \frac{R^2}{n}\|\w_1-\w_2\|_2^2,\\
\|G_\w(\w_1, \balpha_1) - G_\w(\w_2,\balpha_2)\|^2_2&= \frac{1}{n^2}\left\|\sum_{i=1}^n(\alpha^1_i - \alpha^2_i)y_i\x_i\right\|_2^2\leq \frac{R^2}{n}\sum_{i=1}^n(\alpha^1_i -\alpha^2_i)^2 =\frac{R^2}{n}\|\balpha_1-\balpha_2\|_2^2,
\end{align*}
which implies $c=R^2/n$. 
 For the example of generalized hinge loss, 
$L(\w, \balpha)$ in~(\ref{eqn:dual2}) is 
\begin{align*}
L(\w, \balpha; \X, \y) = \frac{1}{n}\sum_{i=1}^n \alpha^1_i(1- ay_i \w^{\top}\x_i)+ \alpha^2_i(1-y_i\w^{\top}\x_i), 
\end{align*}
where $\balpha=[\balpha^1,\balpha^2]\in\Q_{\balpha}=\{\balpha: \balpha\in [0, 1]^{n\times 2}, \balpha^1+\balpha^2\leq \mathbf 1\}$, and its partial gradients are
\begin{align*}
G_{\balpha}(\w, \balpha) &= \frac{1}{n}[\mathbf 1, \mathbf 1] - \frac{1}{n}[a(\x_1y_1,\cdots, \x_ny_n)^{\top} \w, (\x_1y_1,\cdots, \x_ny_n)^{\top} \w], \\
G_\w(\w,\balpha)&= - \frac{1}{n}\X(a(\balpha^1\circ \mathbf y) + \balpha^2\circ\mathbf y),
\end{align*}
where $\mathbf 1$ denotes a vector of all ones, and $\circ$ denotes the element-wise product. Then for any $\w_1,\w_2$ and $\balpha_1=(\balpha^{1,1}, \balpha^{2,1}), \balpha_2=(\balpha^{1,2}, \balpha^{2,2})\in \Q_{\balpha}$, given $\|\x\|_2\leq R$, we have
\begin{align*}
\|G_{\balpha}(\w_1, \balpha_1) - G_{\balpha}(\w_2,\balpha_2)\|^2_F&= \frac{a^2+1}{n^2}\sum_{i=1}^n (\w_1^{\top}\x_i y_i - \w_2^{\top}\x_iy_i)^2\leq \frac{(a^2+1)R^2}{n}\|\w_1-\w_2\|_2^2,\\
\|G_\w(\w_1, \balpha_1) - G_\w(\w_2,\balpha_2)\|^2_2&= \frac{1}{n^2}\left\|\sum_{i=1}^na(\alpha^{1,1}_i - \alpha^{1,2}_i)y_i\x_i + \sum_{i=1}^n(\alpha^{2,1}_i - \alpha^{2,2}_i)y_i\x_i\right\|_2^2\\
&\leq \frac{2a^2R^2}{n}\sum_{i=1}^n(\alpha^{1,1}_i -\alpha^{1,2}_i)^2+ \frac{2R^2}{n}\sum_{i=1}^n(\alpha_i^{2,1}-\alpha_i^{2,2})^2\\
&\leq \frac{2a^2R^2}{n}\|\balpha_1-\balpha_2\|_F^2,
\end{align*}
which implies $c=(a^2+1)R^2/n$ in equation~(\ref{eqn:const}) and $c=(2a^2R^2)/n$ in equation~(\ref{eqn:const2}). We can derive the value of $c$ in~(\ref{eqn:const}) and~(\ref{eqn:const2}) similarly for other non-smooth loss functions. 
%\end{itemize}

\section{Proof of Lemma~\ref{lem:1}}
\label{app:lem:1}
Since 
\begin{align*}
&G_{\balpha}(\w,\balpha; \X, \y) = \mathbf a(\X, \y) + H(\X, \y)^{\top}\w,\\
&G_\w(\w,\balpha; \X, \y) = \mathbf b(\X, \y) + H(\X, \y)\balpha.
\end{align*}
Then 
\begin{align*}
&\|G_{\balpha}(\w_1,\balpha_1; \X, \y)-G_{\balpha}(\w_2,\balpha_2; \X, \y)\|_2^2\leq \|H(\X, \y)^{\top}(\w_1-\w_2)\|_2^2\leq c \|\w_1-\w_2\|_2^2,
\\
&\|G_\w(\w_1,\balpha_1; \X, \y)-G_\w(\w_2,\balpha_2; \X, \y)\|_2^2\leq \|H(\X, \y)(\balpha_1-\balpha_2)\|_2^2\leq c \|\balpha_1-\balpha_2\|_2^2,
\end{align*}
where we use the assumption $\|H(\X, \y)\|_2^2 = \|H(\X, \y)^{\top}\|_2^2\leq c$.

\section{The differences between Algorithm 1 in~\citep{Chambolle:2011:FPA:1968993.1969036}  and Pdprox-primal algorithm (Algorithm~\ref{alg:2}) and Pdprox-dual algorithm (Algorithm~\ref{alg:3})}\label{sec:apc}
We make the following correspondences between our notations (appearing the R.H.S of the following equalities) and the notations in~\citep{Chambolle:2011:FPA:1968993.1969036} (appearing the L.H.S of the following equalities),
\begin{align*}
&\x=\w, \quad \y=\balpha, \quad \bar\x=\u\\
&G(\w)= \lambda R(\w) +\w^{\top}\b + I_{\Q_\w}(\w) \\
&F^*(\balpha) = - \balpha^{\top}\a + I_{\Q_{\balpha}}(\balpha)\\
&K = H^{\top} \\
& \alpha^{\top}H^{\top}\w  + \w^{\top}\b + \balpha^{\top}\a + c_0 = L(\w, \balpha)\\
&\delta=\tau=\gamma\\
&\theta=1
\end{align*}
where we suppress  the dependence of $\a, \b, H, c_0$ on $(\X, \y)$, and $I_{\Q}(\x)$ is an indicator function 
\begin{align*}
I_{\Q}(\x)=\left\{\begin{array}{cc}0, \text{ if $\x\in\Q$}\\ +\infty, \text{ otherwise}\end{array} \right.
\end{align*}
The problem in ~\citep{Chambolle:2011:FPA:1968993.1969036} is to solve
\begin{align*}
\min_\w\max_{\balpha} \mathcal O (\w,\balpha) = \alpha^{\top}H^{\top}\w  + G(\w)
- F^*(\alpha)
\end{align*}
and the updates in ~\citep{Chambolle:2011:FPA:1968993.1969036} are calculated by
\begin{align*}
&\balpha_t = \min_{\balpha}\frac{\|\balpha -(\balpha_{t-1} + \gamma
H^{\top}\u_{t-1} ) \|_2^2}{2\gamma}  + F^*(\alpha)\\
&\w_t = \min_{\w} \frac{\|\w - (\w_{t-1} - \gamma H\balpha_t
))\|_2^2}{2\gamma} + G(\w)\\
&\u_t = \w_t + \theta(\w_t - \w_{t-1})
\end{align*}
or equivalently
\begin{align*}
&\balpha_t = \min_{\balpha\in\Q_{\balpha}}\frac{\|\balpha
-(\balpha_{t-1} + \gamma (H^{\top}\u_{t-1}  + \a) ) \|_2^2}{2\gamma}\\
&\w_t = \min_{\w\in\Q_\w} \frac{\|\w - (\w_{t-1} - \gamma(H\balpha_t +
\b))\|_2^2}{2\gamma} + \lambda R(\w)\\
&\u_t = \w_t + \theta(\w_t - \w_{t-1})
\end{align*}
Note that the partial gradients of $L(\w,\balpha)$ are $G_\w(\w,\balpha) = G_\w(\balpha) = H\balpha  + b$ and
$G_{\balpha}(\w,\balpha)= G_{\balpha}(\w) = H^{\top}\w + \a$~\footnote{We use $G_\w$ and $G_{\balpha}$ to denote partial gradients.}, then we can
write the above updates as
\begin{align*}
&\balpha_t = \min_{\balpha\in\Q_{\balpha}}\frac{\|\balpha
-(\balpha_{t-1} + \gamma G_{\balpha}(\u_{t-1}) ) \|_2^2}{2}\\
&\w_t = \min_{\w\in\Q_\w} \frac{\|\w - (\w_{t-1} - \gamma
G_\w(\balpha_t))\|_2^2}{2} + \gamma\lambda R(\w)\\
&\u_t = \w_t + \theta(\w_t - \w_{t-1})
\end{align*}
However the updates of Pdprox-primal algorithm (Algorithm~\ref{alg:2})  in our paper are
\begin{align*}
&\w_t = \min_{\w\in\Q_\w} \frac{\|\w - (\u_{t-1} - \gamma
G_\w(\balpha_{t-1}))\|_2^2}{2} + \gamma \lambda R(\w)\\
&\balpha_t = \min_{\balpha\in\Q_{\balpha}}\frac{\|\balpha
-(\balpha_{t-1} + \gamma G_{\balpha}(\w_t) ) \|_2^2}{2}\\
&\u_t = \w_t + \gamma (G_\w(\balpha_{t-1}) - G_\w(\balpha_t))
\end{align*}
If we remove the extra primal variable $\u_t$, we have the following
updates of Algorithm 1 in~\citep{Chambolle:2011:FPA:1968993.1969036}:
\begin{equation}
\begin{aligned}
&\balpha_t = \min_{\balpha\in\Q_{\balpha}}\frac{\|\balpha
-(\balpha_{t-1} + \gamma G_{\balpha}(2\w_{t-1}- \w_{t-2}) )
\|_2^2}{2}\\
&\w_t = \min_{\w\in\Q_\w} \frac{\|\w - (\w_{t-1} - \gamma
G_\w(\balpha_t))\|_2^2}{2} + \gamma\lambda R(\w)
\end{aligned}
\end{equation}\label{eqn:pdprimal-2}
and the following updates of the Pdprox-primal algorithm:
\begin{equation}
\begin{aligned}
&\w_t = \min_{\w\in\Q_\w} \frac{\|\w - (\w_{t-1} -
\gamma G_\w(2\balpha_{t-1}-\balpha_{t-2}))\|_2^2}{2} + \gamma
\lambda R(\w)\\
&\balpha_t = \min_{\balpha\in\Q_{\balpha}}\frac{\|\balpha
-(\balpha_{t-1} + \gamma G_{\balpha}(\w_t) ) \|_2^2}{2}\\
\end{aligned}
\end{equation}
We can clearly see the difference between our updates and the updates of Algorithm 1 in~\citep{Chambolle:2011:FPA:1968993.1969036}, which lies in the order  of updating on the primal variable and the dual variable, and the gradients used in the updating as well.   On the other hand, if we remove the extra dual variable in Algorithm~\ref{alg:3},  the updates are the same to that of Algorithm in~\citep{Chambolle:2011:FPA:1968993.1969036}, i.e., 
\begin{equation}
\begin{aligned}
&\balpha_t = \min_{\balpha\in\Q_{\balpha}}\frac{\|\balpha
-(\balpha_{t-1} + \gamma (2G_{\balpha}(\w_{t-1})-G_{\balpha}( \w_{t-2}) )
\|_2^2}{2}\\
&\w_t = \min_{\w\in\Q_\w} \frac{\|\w - (\w_{t-1} - \gamma
G_\w(\balpha_t))\|_2^2}{2} + \gamma\lambda R(\w)
\end{aligned}
\end{equation}
by noting that $G_{\balpha}(\w)$ is linear in $\w$.  It is also worth noting that Pdprox-primal can be implemented by maintaing one primal variable and two dual variables as in~(20), and similarly Pdprox-dual can be implemented by maintaing two primal variables and one dual variable as in (21). Depending on the nature of applications, we can choose different  implementations for Pdprox-primal or Pdprox-dual to achieve better efficiency.

\section{Proof of Lemma~\ref{lem:7}}
\label{app:lem:7}
In order to prove Lemma~\ref{lem:7},  we first present the following lemma with its proof.  

  \begin{lemma}\label{lem:8}Let $Z$ be a convex compact set, and $U\subseteq Z$ be convex and closed,  $\z_0\in Z$, and $\gamma>0$. Considering the following points with fixed $\eta, \xi$, 
 \begin{align*}
 \z_h &= \arg\min_{\z\in U}\frac{1}{2}\|\z- (\z_0- \gamma\xi)\|_2^2,\\
 \z_1&=\arg\min_{\z\in U} \frac{1}{2}\|\z-(\z_0-\gamma\eta)\|_2^2,
 \end{align*}
 then for any $\z\in U$, we have
 \begin{align*}
 \gamma\eta^{\top}(\z_h-\z)&\leq  \frac{1}{2}\|\z-\z_0\|_2 -\frac{1}{2}\|\z-\z_1\|_2 + \gamma^2\|\xi-\eta\|_2^2 - \frac{1}{2}\Big{[}\|\z_h-\z_0\|_2^2 + \|\z_h-\z_1\|_2^2\Big{]}.
 \end{align*}
 \end{lemma}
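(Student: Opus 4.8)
The plan is to derive the inequality from the variational (first-order optimality) characterization of the two Euclidean projections, a pair of elementary three-point identities, and the non-expansiveness of the projection map. Since $Z$ is compact and $U\subseteq Z$ is closed and convex, both $\z_h=\Pi_U[\z_0-\gamma\xi]$ and $\z_1=\Pi_U[\z_0-\gamma\eta]$ are well defined, and the optimality conditions for these two strongly convex subproblems read: for every $\z\in U$,
\begin{align*}
(\z_0-\gamma\xi-\z_h)^{\top}(\z-\z_h)&\le 0,\\
(\z_0-\gamma\eta-\z_1)^{\top}(\z-\z_1)&\le 0.
\end{align*}

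First I would split the target quantity as $\gamma\eta^{\top}(\z_h-\z)=\gamma\eta^{\top}(\z_1-\z)+\gamma\xi^{\top}(\z_h-\z_1)+\gamma(\eta-\xi)^{\top}(\z_h-\z_1)$. For the first summand, applying the optimality condition for $\z_1$ at the point $\z\in U$ gives $\gamma\eta^{\top}(\z_1-\z)\le(\z_0-\z_1)^{\top}(\z_1-\z)$, and the identity $2(\z_0-\z_1)^{\top}(\z_1-\z)=\|\z-\z_0\|_2^2-\|\z-\z_1\|_2^2-\|\z_0-\z_1\|_2^2$ turns it into norms. For the second summand, applying the optimality condition for $\z_h$ at the point $\z_1\in U$ gives $\gamma\xi^{\top}(\z_h-\z_1)\le(\z_0-\z_h)^{\top}(\z_h-\z_1)$, and the analogous identity $2(\z_0-\z_h)^{\top}(\z_h-\z_1)=\|\z_0-\z_1\|_2^2-\|\z_0-\z_h\|_2^2-\|\z_h-\z_1\|_2^2$ again converts it to norms. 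Adding the two estimates, the $\tfrac12\|\z_0-\z_1\|_2^2$ contributions cancel, and we are left exactly with the right-hand side of the claimed bound (read with the evident squares on the first two terms), except that $\gamma^2\|\xi-\eta\|_2^2$ is replaced by the residual $\gamma(\eta-\xi)^{\top}(\z_h-\z_1)$.

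It therefore remains to show $\gamma(\eta-\xi)^{\top}(\z_h-\z_1)\le\gamma^2\|\xi-\eta\|_2^2$. This is the one place where a little care is needed: a plain Young's inequality would consume part of the $-\tfrac12\|\z_h-\z_1\|_2^2$ already collected, so instead I would bound via Cauchy--Schwarz, $\gamma(\eta-\xi)^{\top}(\z_h-\z_1)\le\gamma\|\eta-\xi\|_2\,\|\z_h-\z_1\|_2$, and then invoke the non-expansiveness of the Euclidean projection onto $U$, namely $\|\z_h-\z_1\|_2=\|\Pi_U[\z_0-\gamma\xi]-\Pi_U[\z_0-\gamma\eta]\|_2\le\gamma\|\xi-\eta\|_2$; multiplying the two bounds yields the residual estimate, and substituting it back completes the proof. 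The main obstacle is thus essentially bookkeeping --- keeping the signs in the two variational inequalities and the two three-point identities straight --- together with the observation that non-expansiveness (rather than Young's inequality) is what produces the stated constant $\gamma^2$ and allows the $-\tfrac12\|\z_h-\z_1\|_2^2$ term to survive in the final bound. The compactness of $Z$ and the inclusion $U\subseteq Z$ play no role beyond ensuring the projections exist and are unique.
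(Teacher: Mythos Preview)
Your argument is correct and uses essentially the same ingredients as the paper's proof: the first-order optimality conditions for the two projections, Cauchy--Schwarz, and the bound $\|\z_h-\z_1\|_2\le\gamma\|\xi-\eta\|_2$. The only organisational difference is that you decompose $\gamma\eta^{\top}(\z_h-\z)$ into three pieces and apply the three-point identities directly, whereas the paper starts from $\tfrac12\|\z-\z_0\|_2^2-\tfrac12\|\z-\z_1\|_2^2$ and manipulates forward; also, you invoke non-expansiveness of $\Pi_U$ as a known fact, while the paper re-derives it by summing the two optimality inequalities evaluated at $\z_1$ and $\z_h$ respectively. These are cosmetic differences; the proofs are the same in substance.
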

Equipped  with above lemma, it is straightforward to prove Lemma~\ref{lem:7}.  
%\begin{proof}{of Lemma~\ref{lem:7}:}
%To prove Lemma~\ref{lem:7}, 
We note that the two updates in Lemma~\ref{lem:6} are the same as the two updates in Lemma~\ref{lem:8}  if we make the following correspondences: 
\begin{align*}
&U=Z=\mathbb R^d\times \Q_{\balpha},\quad \z=\left(\w\atop\alpha\right)\in U,\\
%&\omega(\z)=\frac{1}{2}\|\w\|^2_2+ \frac{1}{2}\|\alpha\|_2^2\\
&\z_0=\left(\u_{t-1}\atop \bbeta_{t-1}\right),\; \z_h=\left(\w_t\atop \balpha_t\right),\; \z_1=\left(\u_t\atop \bbeta_t\right),\\
&\xi=\left(G_\w(\u_{t-1}, \balpha_t)+\lambda\mathbf v_t\atop -G_{\balpha}(\u_{t-1}, \bbeta_{t-1})\right),\quad \eta=\left(G_\w(\w_t, \balpha_t)+ \lambda\mathbf v_t\atop -G_\alpha(\w_t, \balpha_t)\right).
\end{align*}
{\color{red}Then the inequality in Lemma~\ref{lem:7} follows immediately  the inequality in Lemma~\ref{lem:8}, which is stated explicitly again: 
\begin{align*}
&\gamma\begin{pmatrix} G_\w(\w_t, \balpha_t) +\lambda \mathbf v_t\\-G_{\balpha} (\w_t, \balpha_t)\end{pmatrix}^{\top}\begin{pmatrix}\w_t-\w\\ \balpha_t-\balpha\end{pmatrix}\leq \frac{1}{2}\left\|\begin{pmatrix}\w-\u_{t-1}\\ \balpha-\bbeta_{t-1}\end{pmatrix}\right\|_2^2 - \frac{1}{2}\left\|\begin{pmatrix}\w-\u_t\\ \balpha -\bbeta_t\end{pmatrix}\right\|_2^2\\
&+{\gamma^2}\left\|G_{\balpha}(\w_t, \balpha_t)- G_{\balpha}(\u_{t-1}, \bbeta_{t-1})\right\|_2^2- \frac{1}{2}\left[\|\w_t-\u_{t-1}\|_2^2+\underbrace{\|\balpha_t-\bbeta_{t-1}\|_2^2+\|\w_t-\u_{t}\|_2^2+\|\balpha_t-\bbeta_{t}\|_2^2}\limits_{\geq 0}\right].
%&+ \frac{\gamma^2}{2}\left \|\frac{1}{n}\overline{\X}\w_{t}-\frac{1}{n}\overline{\X}\z_{t-1}\right\|^2
%&
%&\leq \frac{1}{2}\left\|\begin{pmatrix}\w-\z_{t-1}\\ u-v_{t-1}\end{pmatrix}\right\|^2 - \frac{1}{2}\left\|\begin{pmatrix}\w-\z_t\\ u -v_t\end{pmatrix}\right\|^2 \\
%&+ \frac{\gamma^2}{2}\sum_i\left\|\frac{1}{n}y_i\x_i^{\top}\left(\w_t-\z_{t-1}\right)\right\|^2 - \frac{1}{2}\|\w_t-\z_{t-1}\|^2\\
%&\leq \frac{1}{2}\left\|\begin{pmatrix}\w-\z_{t-1}\\ u-v_{t-1}\end{pmatrix}\right\|^2 - \frac{1}{2}\left\|\begin{pmatrix}\w-\z_t\\ u -v_t\end{pmatrix}\right\|^2 \\
%&+ \frac{\gamma^2 R^2}{2n}\left\|\left(\w_t-\z_{t-1}\right)\right\|^2 - \frac{1}{2}\|\w_t-\z_{t-1}\|^2\\
\end{align*}}

%\end{proof}

Lemma~\ref{lem:8} is a special case of Lemma 3.1~\citep{Nemirovski2005} for Euclidean norm.  A proof is provided here for completeness.

\begin{proof}[of Lemma~\ref{lem:8}]
Since 
\begin{align*}
 \z_h &= \arg\min_{\z\in U} \frac{1}{2}\|\z - (\z_0 -\gamma\xi)\|^2_2,\\
 \z_1&=\arg\min_{\z\in U}\frac{1}{2}\|\z - (\z_0 -\gamma\eta)\|^2_2,
  \end{align*}
 by the first order optimality  condition, we have
 \begin{align}
& (\z-\z_h)^{\top} (\gamma\xi - \z_0 + \z_h)\geq 0, \forall \z\in U\label{eqn:b1},\\
&(\z-\z_1)^{\top} (\gamma\eta- \z_0 + \z_1)\geq 0, \forall \z\in U\label{eqn:b2}.
 \end{align}
 Applying~(\ref{eqn:b1}) with $\z= \z_1$ and~(\ref{eqn:b2}) with $\z= \z_h$, we get 
 \begin{align*}
 & \gamma(\z_h-\z_1)^{\top}\xi\leq  (\z_0 - \z_h)^{\top}(\z_h-\z_1),\\
&\gamma(\z_1-\z_h)^{\top} \eta\leq (\z_0 - \z_1)^{\top}(\z_1-\z_h).
 \end{align*}
 Summing up the two inequalities, we have
 \begin{align*}
 \gamma (\z_h-\z_1)^{\top}(\xi-\eta) \leq (\z_1 - \z_h)^{\top}(\z_h-\z_1) = - \|\z_1-\z_h\|_2^2.
 \end{align*}
 Then 
 \begin{align}
 \gamma \|\xi-\eta\|_2\|\z_h-\z_1\|_2&\geq   -\gamma (\z_h-\z_1)^{\top}(\xi-\eta)\geq \|\z_1-\z_h\|_2^2.\label{eqn:b3}
 \end{align}
% where in the last inequality, we use the strong convexity of $\omega(\z)$.
We continue the proof as follows: 
 \begin{align*}
\frac{1}{2}\|\z- \z_0\|^2_2&- \frac{1}{2}\|\z- \z_1\|^2_2\\%\frac{1}{2}\|\z_1\|^2_2-\frac{1}{2}\|\z_0\|^2_2 +(\z-\z_1)^{\top}\z_1 -(\z-\z_0)^{\top}\z_0 \\
%= &\frac{1}{2}\|\z_1\|^2_2-\frac{1}{2}\|\z_0\|_2^2 +(\z-\z_1)^{\top}\z_1 -( \z - \z_1)^{\top}\z_0 -  (\z_1 -\z_0)^{\top} \z_0 \\
= &\frac{1}{2}\|\z_1\|_2^2 -\frac{1}{2}\|\z_0\|_2^2-(\z_1 -\z_0 )^{\top} \z_0 +(\z-\z_1)^{\top} (\z_1- \z_0)\\
=& \frac{1}{2}\|\z_1\|_2^2 - \frac{1}{2}\|\z_0\|_2^2  -( \z_1 -\z_0)^{\top}\z_0 + (\z-\z_1)^{\top}(\gamma\eta+   \z_1- \z_0) - (\z-\z_1)^{\top} \gamma\eta\\
\geq &\frac{1}{2}\|\z_1\|_2^2 - \frac{1}{2}\|\z_0\|_2^2  -( \z_1 -\z_0)^{\top}\z_0  - (\z-\z_1)^{\top} \gamma\eta\\
=& \underbrace{\frac{1}{2}\|\z_1\|_2^2 - \frac{1}{2}\|\z_0\|_2^2  -( \z_1 -\z_0)^{\top}\z_0  - (\z_h-\z_1)^{\top} \gamma\eta}\limits_{\epsilon} + (\z_h -\z)^{\top}\gamma\eta,
\end{align*}
where the inequality follows~(\ref{eqn:b2}).
 \begin{align*}
 \epsilon =& \frac{1}{2}\|\z_1\|_2^2   -\frac{1}{2}\|\z_0\|_2^2- (\z_1 -\z_0 ) ^{\top}\z_0 - (\z_h-\z_1)^{\top}\gamma\eta\\
=& \frac{1}{2}\|\z_1\|_2^2  -\frac{1}{2}\|\z_0\|_2^2 - ( \z_1 -\z_0 )^{\top} \z_0 - (\z_h-\z_1)^{\top}\gamma ( \eta - \xi)-  ( \z_h-\z_1)^{\top}\gamma \xi \\
=&\frac{1}{2}\|\z_1\|_2^2  -\frac{1}{2}\|\z_0\|_2^2 - ( \z_1 -\z_0 )^{\top} \z_0 - (\z_h-\z_1)^{\top}\gamma ( \eta - \xi)\\
& + (\z_1 - \z_h)^{\top}(\gamma  \xi -\z_0 + \z_h )- (\z_1- \z_h)^{\top}(\z_h-\z_0 )\\
\geq &\frac{1}{2}\|\z_1\|_2^2  -\frac{1}{2}\|\z_0\|_2^2 - ( \z_1 -\z_0 )^{\top} \z_0 - (\z_h-\z_1)^{\top}\gamma ( \eta - \xi)-( \z_1- \z_h)^{\top} (\z_h-\z_0)\\
= & \frac{1}{2}\|\z_1\|_2^2 -\frac{1}{2}\|\z_0\|_2^2 - (\z_h -\z_0)^{\top} \z_0 -(\z_h-\z_1)^{\top}\gamma (\eta - \xi) - ( \z_1- \z_h)^{\top}\z_h \\
= &\left[ \frac{1}{2}\|\z_1\|_2^2- \frac{1}{2}\|\z_h\|_2^2 - (\z_1- \z_h)^{\top}\z_h \right]+\left[\frac{1}{2}\|\z_h\|_2^2 -\frac{1}{2}\|\z_0\|_2^2 -(\z_h -\z_0 )^{\top} \z_0 \right]\\
&- (\z_h-\z_1)^{\top}\gamma ( \eta- \xi)\\
\geq& \frac{1}{2} \| \z_h - \z_1\|_2^2 + \frac{1}{2}\|\z_h - \z_0 \|_2^2 - \gamma \| \z_h-\z_1 \|_2 \| \eta - \xi \|_2\\
\geq & \frac{1}{2}\{ \| \z_h-\z_1 \|^2 + \|\z_h - \z_0 \|^2\} -\gamma^2 \|\eta - \xi \|_2^2,
\end{align*}
where the first inequality follows~(\ref{eqn:b1}), and the last inequality follows~(\ref{eqn:b3}). Combining the above results, we have
\begin{align*}
\gamma(\z_h -\z)^{\top}\eta \leq \frac{1}{2}\|\z- \z_0\|^2_2&- \frac{1}{2}\|\z- \z_1\|^2_2+ \gamma^2 \|\eta - \xi \|_2^2 - \frac{1}{2} \{\| \z_h-\z_1  \|_2^2 + \|\z_h - \z_0 \|_2^2\}. 
\end{align*}

\end{proof}

\section{Proof of Lemma~\ref{lem:2}}
\label{app:lem:2}
By introducing Lagrangian multiplier for constraint $\sum_i \alpha_iv_i\leq\rho$, we have the following min-max problem
\begin{align*}
\max_{\eta}\min\limits_{\balpha\in[0, 1]^n}\frac{1}{2}\|\balpha - \widehat \balpha\|^2+ \eta \left(\sum_i \alpha_i v_i- \rho\right).
\end{align*}
The solution to $\balpha$ is $\alpha_i = [\widehat\alpha_i - \eta^*v_i]_{[0, 1]}$. By KKT condition, the optimal $\eta^*$ is equal to $0$ if $\sum_i[\widehat\balpha_i]_{[0, 1]}v_i< \rho$, otherwise we have
\begin{align*}
\sum_i [\widehat\alpha_i-\eta^* v_i]_{[0, 1]}v_i- \rho=0.
\end{align*}
Since the left side of above equation is a monotonically decreasing function in $\eta^*$, we can compute $\eta^*$ by efficient bi-section search.

\section{Proof of Lemma~\ref{lem:gr}}
\label{app:lem:gr}
Using the convex conjugate $V_*(\eta)$ of $V(z)$, the composite mapping can be written as
\begin{align*}
\min_{\w}\frac{1}{2}\|\w-\widehat\w\|_2^2 + \lambda \max_{\eta}\left( \eta \|\w\| - V_*(\eta)\right).
\end{align*}
The problem is equivalent to maximize the following function on $\eta$, 
\begin{align*}
\left(\min_{\w}\frac{1}{2}\|\w-\widehat\w\|_2^2 + \lambda \eta \|\w\|\right) -\lambda V_*(\eta).
\end{align*}
Let $\w(\eta)$ denote the solution to the minimization problem. Then the optimal solution of $\eta$ satisfies 
\begin{align*}
\lambda \|\w(\eta)\|- \lambda V'_*(\eta)=0,
\end{align*}
i.e. 
\begin{align*}
 \|\w(\eta)\|- V'_*(\eta)=0.
\end{align*}
It is easy to show that $\|\w(\eta)\|$ is a non-increasing function in $\eta$. Similarly, since $V_*(\eta)$ is a convex function, its negative gradient $-V'_*(\eta)$ is a non-increasing function. Therefore, we can compute the optimal $\eta$ by bi-section search.

\end{document}